\setlist{nosep}
\newcommand{\bbEt}{{\tilde \bbE}}
\newcommand{\pptb}{{\bar \ppt}}
\newcommand{\pind}{p^{\pperp\!}}
\newcommand{\qind}{q^{\pperp\!}}
\newcommand{\ourmodel}{CSG\xspace}
\newcommand{\ourmodels}{CSGs\xspace}
\newcommand{\ourmodelablated}{CSGz\xspace}
\newcommand{\exmp}[1]{(\eg, #1)}
\newcommand{\subpm}[1]{\scriptsize{$\pm\!$#1}}
\titlespacing*{\section}{2pt}{4pt}{2pt} 
\titlespacing*{\subsection}{1pt}{3pt}{1pt} 
\titlespacing*{\subsubsection}{1pt}{2pt}{1pt} 
\title{Learning Causal Semantic Representation for Out-of-Distribution Prediction}
\author{%
  Chang Liu$^1$\thanks{Correspondence to: Chang Liu <\texttt{changliu@microsoft.com}>.},
  Xinwei Sun$^1$, Jindong Wang$^1$,
  Haoyue Tang$^2$\thanks{Work done during an internship at Microsoft Research Asia.},
  Tao Li$^{3\dagger}$, \\[2pt]
  \textbf{Tao Qin$^1$, Wei Chen$^1$, Tie-Yan Liu$^1$} \\[2pt]
  $^1$ Microsoft Research Asia, Beijing, 100080. \\
  $^2$ Tsinghua University, Beijing, 100084. \;
  $^3$ Peking University, Beijing, 100871.
}
\begin{document}
\abovedisplayskip=3pt
\belowdisplayskip=5pt
\abovedisplayshortskip=2pt
\belowdisplayshortskip=4pt

\maketitle

\begin{abstract}
  Conventional supervised learning methods, especially deep ones, are found to be sensitive to out-of-distribution (OOD) examples, largely because the learned representation mixes the semantic factor with the variation factor due to their domain-specific correlation, while only the semantic factor \emph{causes} the output.
  To address the problem, we propose a Causal Semantic Generative model (\ourmodel) based on a causal reasoning so that the two factors are modeled separately, and develop methods for OOD prediction from a \emph{single} training domain, which is common and challenging. 
  The methods are based on the causal invariance principle, 
  with a novel design in variational Bayes for both efficient learning and easy prediction. 
  Theoretically, we prove that under certain conditions, \ourmodel can identify the semantic factor by fitting training data, and this semantic-identification guarantees the boundedness of OOD generalization error and the success of adaptation.
  Empirical study shows improved OOD performance over prevailing baselines.
\end{abstract}

\vspace{-1pt}
\section{Introduction} \label{sec:intr}
\vspace{-1pt}

Deep learning has initiated a new era of artificial intelligence where the potential of machine learning models is greatly unleashed. 
Despite the great success, these methods heavily rely on the assumption that data from training and test domains follow the same distribution (\ie, the IID assumption),
while in practice the test domain is often out-of-distribution (OOD), meaning that the test data distribute differently from the training data.
Popular models for predicting the output (or label, response, outcome) $y$ from the input (or covariate) $x$ have been found erroneous when confronted with a distribution change,
even from an essentially irrelevant perturbation like a position shift or background change for images~\citep{ribeiro2016why, beery2018recognition, shen2018causally, he2019towards, arjovsky2019invariant, d2020underspecification}.
These phenomena pose serious concerns on the robustness and trustworthiness of machine learning methods and severely impede them from risk-sensitive scenarios. 

Looking into the problem, although deep learning models allow extracting abstract representation for prediction with their powerful approximation capacity, the representation may unconsciously mix up semantic factors $s$ (\eg, shape of an object) and variation factors $v$ (\eg, background, object position) due to a correlation between them \exmp{desks often appear in a workspace background and beds in bedrooms},
so the model also relies on the variation factors $v$ for prediction via this correlation.
However, this correlation tends to be superficial and spurious \exmp{a desk can also appear in a bedroom, but this does not make it a bed}, 
and may change drastically in a new domain, making the effect from $v$ misleading.
So it is desired to learn a representation that identifies $s$ against $v$.

Formally, the essence of this goal is to leverage \emph{causal relations} for prediction, since the fundamental distinction between $s$ and $v$ is that only $s$ is the cause of $y$. 
Causal relations better reflect basic mechanisms of nature. 
They bring the merit to machine learning that they tend to be universal and \emph{invariant} across domains~\citep{scholkopf2012causal, peters2016causal, rojas2018invariant, magliacane2018domain, buhlmann2018invariance, scholkopf2019causality, scholkopf2021toward}, thus provide the most transferable and reliable information to unseen domains.
This causal invariance has been shown to lead to proper domain adaptation~\citep{scholkopf2012causal, zhang2013domain}, lower adaptation cost and lighter catastrophic forgetting~\citep{peters2016causal, bengio2019meta, ke2019learning}.

In this work, we propose a Causal Semantic Generative model (\ourmodel) 
following a causal consideration to separately model the semantic (cause of prediction) and variation latent factors, and develop OOD prediction methods
with theoretical guarantees on identifiability and the boundedness of OOD prediction error.
Addressing the complaint that OOD prediction and causality methods often require multi-domain or intervention data,
we focus on the most common and also challenging tasks where only one \emph{single} training domain is available, including \emph{OOD generalization} and \emph{domain adaptation},
where in the latter, unsupervised test-domain data are additionally available for training.
The methods and theory are based on the causal invariance principle, which suggests to share generative mechanisms across domains, while the latent factor distribution (\ie, the prior $p(s,v)$) changes.
We argue that this causal invariance is more reliable than \emph{inference invariance} in the other direction adopted by many existing methods~\citep{ganin2016domain, shankar2018generalizing, arjovsky2019invariant, krueger2020out, mitrovic2021representation}.
For our method, we design novel and delicate reformulations of the ELBO objective so that we avoid the cost to build and learn two inference models.
Theoretically, we prove that under certain conditions, \ourmodel \emph{can identify} the semantic factor on the single training domain, even in presence of an $s$-$v$ correlation.
We further prove the merits from this identification: prediction error is bounded for OOD generalization, and for domain adaptation, the test-domain prior is identifiable which leads to an accurate prediction.
To sum up our contributions,
\vspace{-2pt}
\begin{itemize}[leftmargin=18pt]
  \item 
    Up to our knowledge, we are the first to show a theoretical guarantee (under appropriate conditions) to identify the latent cause of prediction (\ie, the semantic factor) on a single training domain,
    and also the first to show the theoretical benefits of this identification for OOD prediction.
    The results also contribute to generative representation learning for revealing what is learned. 
  \item We develop effective methods for OOD generalization and domain adaptation,
    and achieve mostly better performance than prevailing methods 
    on real-world image classification tasks.
\end{itemize}
\vspace{-2pt}

\vspace{0pt}
\section{Related Work} \label{sec:relw}
\vspace{-1pt}

\textbf{OOD generalization with causality.}
There are trials that ameliorate discriminative models towards a causal behavior. 
\citet{bahadori2017causal} introduce a regularizer that reweights input dimensions based on their approximated causal effects to the output, 
and \citet{shen2018causally} reweight training samples by amortizing causal effects among input samples. 
Their linear input-output assumption is then extended~\citep{bahadori2017causal, he2019towards} by learning a representation.
Some recent works require identity data (finer than label) and enforce inference invariance via variance minimization 
\citep{heinze2019conditional}, or leverage a strong domain knowledge to augment images as an independent intervention on variation factors
\citep{mitrovic2021representation}. 
These methods introduce no additional generative modeling efforts, at the cost of limited capacity for invariant causal mechanisms. 

\textbf{Domain adaptation/generalization with causality.}
There are methods developed under various causal assumptions~\citep{scholkopf2012causal, zhang2013domain} 
or using learned causal relations~\citep{rojas2018invariant, magliacane2018domain}.
\citet{zhang2013domain, gong2016domain, gong2018causal} also consider certain ways of mechanism change. 
The considered causality is among directly observed variables, which may not well suit general data like image pixels where causality rather lies in the conceptual latent level~\citep{lopez2017discovering, besserve2018group, kilbertus2018generalization}. 

To consider latent factors, there are domain adaptation~\citep{pan2010domain, baktashmotlagh2013unsupervised, ganin2016domain, long2015learning, long2018conditional} and generalization methods~\citep{muandet2013domain, shankar2018generalizing, wang2021generalizing} that learn a representation with a domain-invariant marginal distribution.
Remarkable results have been achieved.
Nevertheless, it is found that this invariance is neither sufficient nor necessary to identify the true semantics or lower the adaptation error (\citep{johansson2019support, zhao2019learning}; see also Appx.~\ref{supp:da-dir}). 
Moreover, these methods are based on
inference invariance, which may not be as reliable 
as causal invariance (see Sec.~\ref{sec:model-infinv}). 

There are also generative methods for domain adaptation/generalization that model latent factors.
\citet{cai2019learning} and \citet{ilse2020diva} introduce a semantic factor and a domain-feature factor.
They assume the two latent factors are independent in both generative and inference models, which is unrealistic. 
Correlated factors are then considered~\citep{atzmon2020causal}. 
But all these works do not adapt the prior for 
domain change thus resort to inference invariance. 
\citet{zhang2020causal} consider a partially observed manipulation variable, 
while still assuming its independence from the output in both the joint and posterior,
and the adaptation is inconsistent with causal invariance.
The above methods also do not show guarantees to identify their latent factors.
\citet{teshima2020few} leverage causal invariance and adapt the prior, yet also assume latent independence and do not separate the semantic factor.
They require some supervised test-domain data, and their deterministic and invertible mechanism also indicates inference invariance.
In addition, most domain generalization methods require \emph{multiple} training domains, with exceptions~\citep{qiao2020learning} that still seek to augment domains. 
In contrast, \ourmodel leverages causal invariance, 
and has \emph{guarantee} to identify the semantic factor from a \emph{single training domain}, even with a \emph{correlation} to the variation factor.

\textbf{Disentangled latent representations} \hspace{4pt}
is also of interest in unsupervised learning.
Despite empirical success~\citep{chen2016infogan, higgins2017beta, chen2018isolating}, \citet{locatello2019challenging} conclude that it is impossible to guarantee the disentanglement in unsupervised settings.
Subsequent works then introduce ways of supervision like a few latent variable observations~\citep{locatello2019disentangling} or sample similarity~\citep{chen2020weakly, locatello2020weakly, shu2020weakly}.
Identifiable VAE~\citep{khemakhem2019variational} and extensions~\citep{khemakhem2020ice, yang2020causalvae} leverage the data of a cause variable of the latent variables and have established theoretical guarantees under a diversity condition.
But the works do not depict domain change thus not suitable for OOD prediction.
Instead of disentangling latent factors, we focus on identifying the semantic factor $s$ (Sec.~\ref{sec:thry-id}) and its benefit for OOD prediction.
Appx.~\ref{supp:relw} shows more related work.

\vspace{-1pt}
\section{The Causal Semantic Generative Model} \label{sec:model}
\vspace{-1pt}

\begin{figure}
  \centering
  \vspace{-12pt}
  \subcaptionbox{\ourmodel\label{fig:gen-sv}}{
    \includegraphics[width=.270\textwidth]{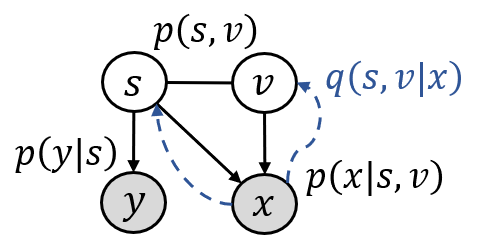}
  }
  \hspace{-10pt}
  \includegraphics[width=.175\textwidth]{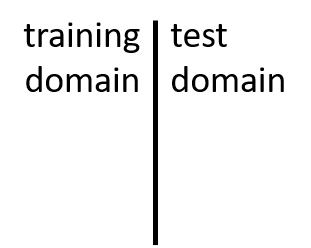}
  \hspace{-16pt}
  \subcaptionbox{\ourmodel{}-ind\label{fig:gen-ind}}{
    \includegraphics[width=.270\textwidth]{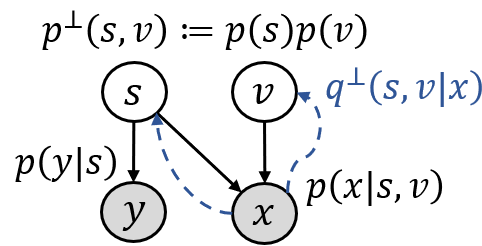}
  }
  \hspace{-6pt}
  \subcaptionbox{\ourmodel{}-DA\label{fig:gen-da}}{
    \includegraphics[width=.270\textwidth]{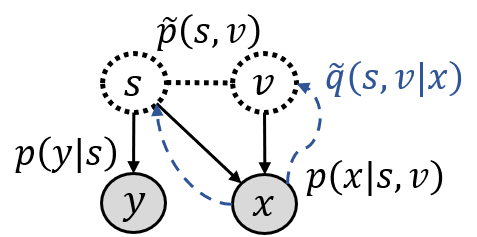}
  }
  \vspace{-4pt}
  \caption{\textbf{(a)} Graphical structure of the proposed \ourmodel. 
    Solid arrows represent causal mechanisms $p(x|s,v)$ and $p(y|s)$, the undirected $s$-$v$ clique represents a domain-specific prior $p(s,v)$, and the dashed bended arrows represent the inference model $q(s,v|x)$ for learning. 
    \textbf{(b, c)} Graphical structures of \ourmodel{}-ind and \ourmodel{}-DA for prediction on the \emph{test domain}.
    An independent prior $\pind(s,v)$ (constructed from $p(s,v)$) and a new prior $\ppt(s,v)$ (the dotted $s$-$v$ clique) are introduced reflecting the intervention on the test domain.
    Respective inference models $\qind(s,v|x)$ and $\qqt(s,v|x)$ are also shown.
    All three models share the same causal mechanisms $p(x|s,v)$ and $p(y|s)$.
  }
  \vspace{-20pt}
  \label{fig:gen}
\end{figure}

To develop the model soberly based on causality, we require its formal definition:
\emph{two variables have a causal relation, denoted as ``\emph{cause $\to$ effect}'', if intervening the \emph{cause} (by changing external variables out of the considered system) may change the \emph{effect}, but not \emph{vice versa}}~\citep{pearl2009causality, peters2017elements}.
We follow this definition to build our model (Fig.~\ref{fig:gen-sv}) by analyzing the example that an photographer takes a photo in a scene as $x$ and labels it as $y$.
Appx.~\ref{supp:model} provides more explanations under other perspectives.

\bfone 
It is likely that neither $y \to x$ \exmp{intervening the label with noise by distracting the photographer 
does not change the image}
nor $x \to y$ holds \exmp{intervening an image by 
breaking a camera sensor unit 
does not change how the photographer labels it},
as also argued in [\citealp{peters2017elements}, Sec.~1.4; \citealp{kilbertus2018generalization}].
So we introduce a latent variable $z$ to capture factors with causal relations.
Also for this reason, we need a generative model (vs. discriminative model that only learns $x \to y$).

\bftwo The latent variable $z$ as underlying generating factors \exmp{object shape and texture, background and illumination during imaging} is plausible to cause both 
$x$ \exmp{changing object shape or background makes a different image, but breaking the camera does not change the shape or background}
and $y$ \exmp{
the photographer would give a different label if the object shape had been different, 
but noise-corrupting the label does not change the shape}.
So we orient the edges in the generative direction $z \to (x,y)$, as also adopted in~\citep{mcauliffe2008supervised, peters2017elements, teshima2020few}.
This is in contrast to prior works~\citep{cai2019learning, ilse2020diva, ilse2020designing, castro2020causality} that treat $y$ as the cause of a semantic factor,
which, when $y$ is also a noisy observation, 
makes unreasonable implications \exmp{
adding noise to the labels in a dataset automatically changes object features and consequently the images,
and changing the object features does not change the label}.
This difference is also discussed in [\citealp{peters2017elements}, Sec.~1.4; \citealp{kilbertus2018generalization}].

\bfthr We attribute all $x$-$y$ relation to the existence of some latent factor [\citealp{lee2019leveraging}, ``purely common cause''; \citealp{janzing2009identifying}] and exclude $x$-$y$ edges.
This can be achieved as long as $z$ holds sufficient information of data
\exmp{with shape, background \etc fixed, breaking the camera does not change the label, and noise-corrupting the label does not change the image}.
Promoting this structure reduces arbitrariness in explaining $x$-$y$ relation thus 
helps identify (part of) $z$.
This is in contrast to prior works~\citep{kingma2014semi, zhang2020causal, castro2020causality} that treat $y$ as a cause of $x$ 
as no latent variable is introduced between.

\bffor Not all latent factors are the causes of $y$ \exmp{changing the shape may alter the label, while changing the background does not}.
We thus split the latent variable as $z = (s,v)$ and remove the $v \to y$ edge, where $s$ represents the \emph{semantic} factor that causes $y$, and $v$ describes the \emph{variation} or diversity in generating $x$. 
This formalizes the intuition on the concepts in Introduction (Sec.~\ref{sec:intr}).

\bffiv The two factors $s$ and $v$ often have a relation \exmp{a desk/bed shape tends to appear with a workspace/bedroom background}, 
but it is usually a spurious correlation \exmp{putting a desk in a bedroom does not automatically change the room as a workspace, nor does it turn the desk into a bed}. 
So we keep the undirected $s$-$v$ edge.
This is in contrast to prior works~\citep{cai2019learning, ilse2020diva, zhang2020causal, teshima2020few, mitrovic2021representation} which assume independent latent variables.
Although $v$ is not a cause of $y$, modeling it explicitly is worth the effort since otherwise it would still be implicitly mixed into $s$ anyway through the $s$-$v$ correlation. 
We summarize these conclusions in the following definition.
\vspace{-1pt}
\begin{definition}[\ourmodel]
  A \emph{Causal Semantic Generative Model} (\ourmodel), 
  $p := \lrangle{p(s,v), p(x|s,v), p(y|s)}$, is a generative model on data variables $x \in \clX \subseteq \bbR^{d_\clX}$ and $y \in \clY$ with semantic $s \in \clS \subseteq \bbR^{d_\clS}$ and variation $v \in \clV \subseteq \bbR^{d_\clV}$ latent variables, following the graphical structure shown in Fig.~\ref{fig:gen-sv}.
  \label{def:model}
\end{definition}
\vspace{-1pt}

\vspace{-1pt}
\subsection{The Causal Invariance Principle} \label{sec:model-cauinv}
\vspace{-1pt}

Through the above process, we see that the $s$-$v$ correlation embodied in the prior $p(s,v)$ tends to change across domains.
Under a causal view, this means that the domain change comes from a (soft) intervention on $s$ or $v$ or both, leading to a different prior.
On the other hand, the generative processes are likely causal mechanisms, so they enjoy the celebrated Independent Causal Mechanisms principle~\citep{peters2017elements, scholkopf2021toward} indicating that they are unaffected under the intervention on prior.
This leads to the following causal invariance principle for \ourmodel.
\vspace{-1pt}
\begin{principle}[causal invariance] \label{prin:inv}
  The causal generative mechanisms $p(x|s,v)$ and $p(y|s)$ in \ourmodel are invariant across domains,
  and the change of prior $p(s,v)$ is the only source of domain change.
\end{principle}
\vspace{-1pt}
This invariance reflects the universality of basic laws of nature and is considered in some prior works~\citep{scholkopf2012causal, peters2017elements, besserve2018group, buhlmann2018invariance}. 
Other works instead introduce domain index~\citep{cai2019learning, ilse2020diva, ilse2020designing, castro2020causality} or manipulation variables~\citep{zhang2020causal, khemakhem2019variational, khemakhem2020ice} to model distribution change explicitly.
They then require multiple training domains or additional observations, while such changes can also be explained under causal invariance as long as the latent variables include all changing factors.

\vspace{-1pt}
\subsection{Comparison with Inference Invariance} \label{sec:model-infinv}
\vspace{0pt}

\begin{wrapfigure}{r}{.380\textwidth}
  \centering
  \vspace{-35pt}
  \includegraphics[width=.110\textwidth]{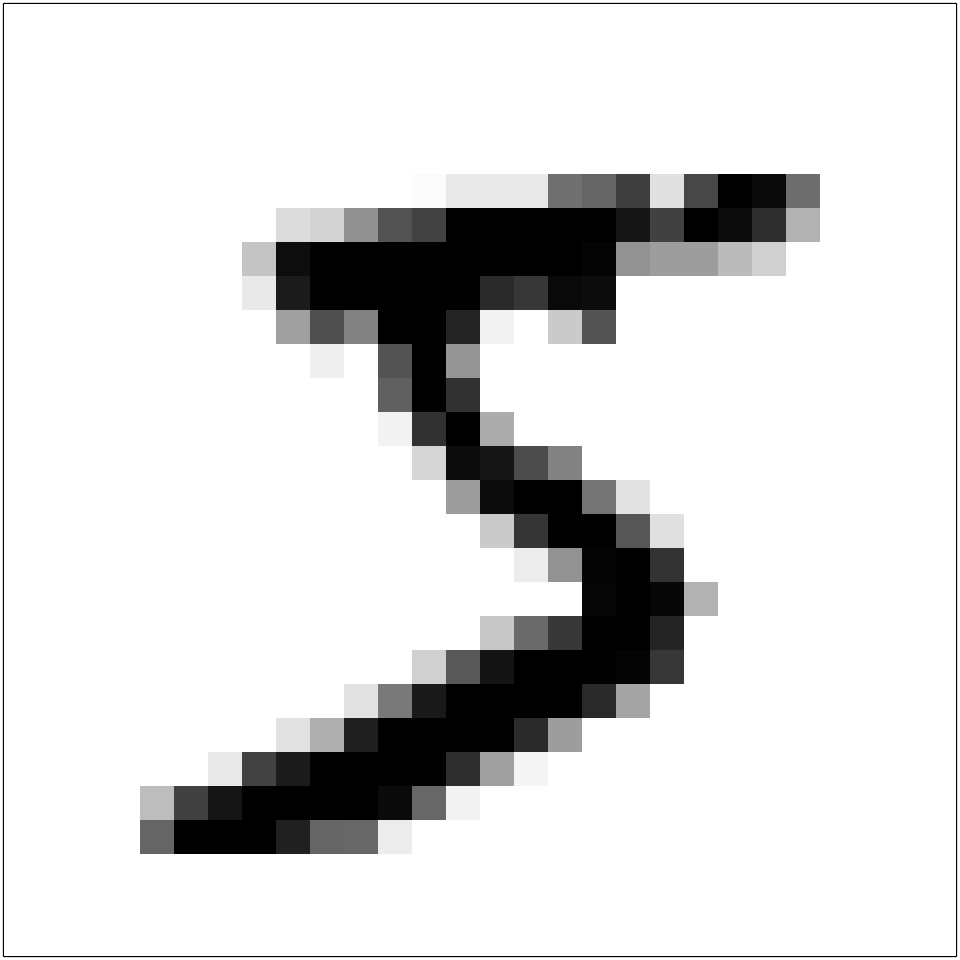}
  \hspace{8pt}
  \includegraphics[width=.125\textwidth]{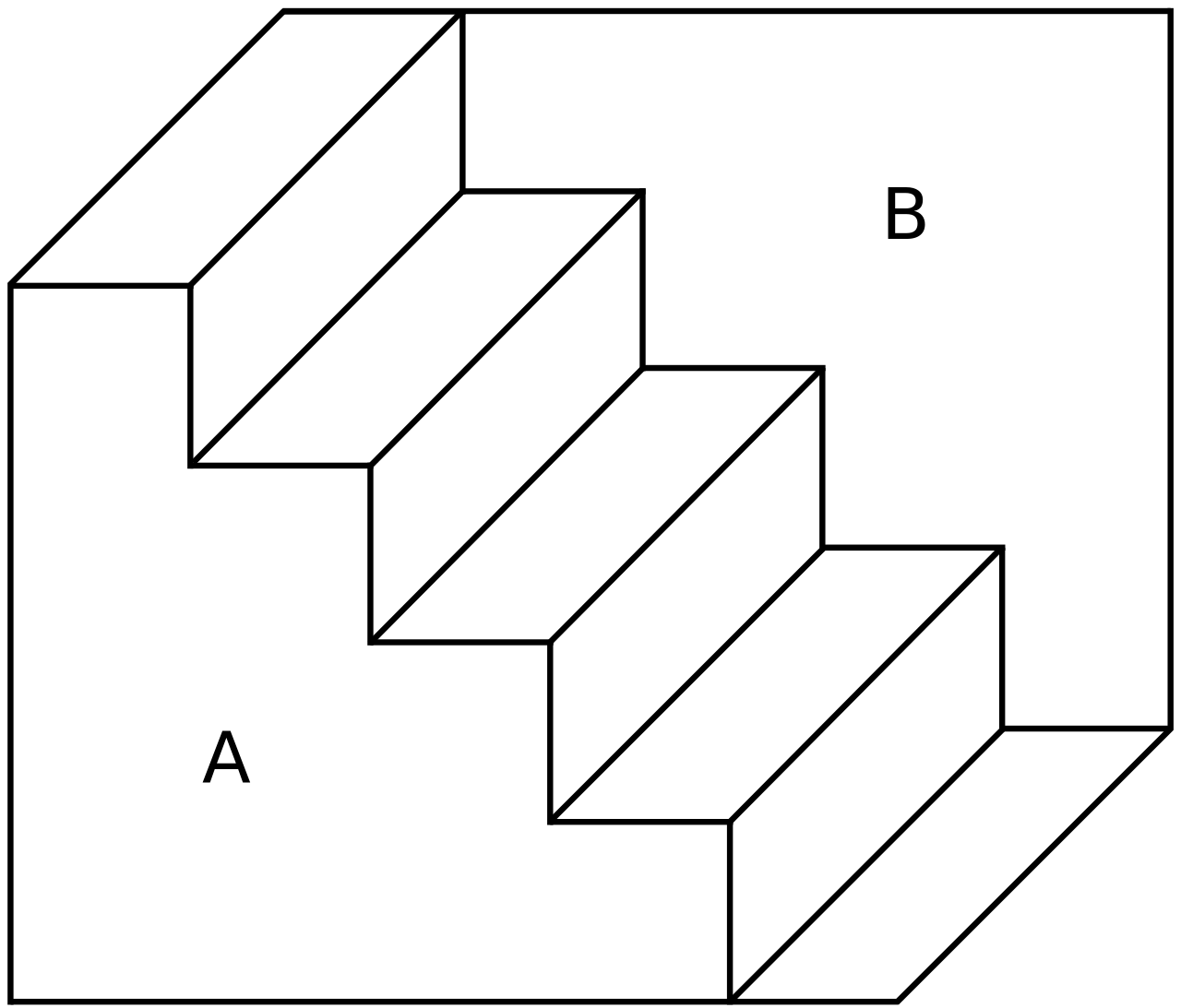}
  \vspace{-2pt}
  \caption{Examples of noisy (left) or degenerate (right) generating mechanisms that lead to ambiguity in inference.
    Left: handwritten digit that may be generated as either ``3'' or ``5''.
    Right: Schr{\"o}der's stairs that may be generated with either A or B being the nearer surface.
    Inference results notably rely on the prior on the digits/surfaces, which is domain-specific.
  }
  \vspace{-10pt}
  \label{fig:ambiguity}
\end{wrapfigure}

Most domain adaptation and generalization methods (incl. domain-invariant-representation based~\citep{ganin2016domain, shankar2018generalizing}, invariant-latent-predictor based~\citep{arjovsky2019invariant, krueger2020out, mitrovic2021representation}) 
use a shared representation extractor across domains.
This effectively assumes the invariance in the other direction, \ie inferring latent factors $z$ from observed data $x$.
We note in its supportive examples \exmp{inferring object position from image, extracting the fundamental frequency from audio},
the causal mechanism $p(x|z)$ is nearly deterministic and invertible such that it preserves the information of $z$. 
Formally, for a given $x$, only one single $z$ value achieves a positive $p(x|z)$ while all other values lead to zero. 
The inferred representation given by the posterior via the Bayes rule $p(z|x) \propto p(z) p(x|z)$ then concentrates on this $z$ value, which is determined by the causal mechanism $p(x|z)$ alone, regardless of the domain-specific prior $p(z)$.
Causal invariance then implies inference invariance. 

In more general cases, the causal mechanism may be noisy or degenerate (Fig.~\ref{fig:ambiguity}), such that there are multiple $z$ values that give a positive $p(x|z)$, \ie they all could generate the same $x$.
Inference is then ambiguous, and the posterior relies on the prior to choose from these $z$ values.
Since the prior changes across domains \exmp{different labelers have different mindset}, the inference rule then \emph{changes by nature} and is not invariant,
\footnote{
  Particularly, although \citet{mitrovic2021representation} consider a similar causal structure and promote the invariance of $p(y|s)$,
  $s$ actually depends on $v$ for a given $x$, 
  even when they are independent in the prior.
  So $p(s|x)$ must depend on the domain-specific $p(v)$, and a domain-invariant representation extractor does not exist.
}
while the causal invariance is rather more fundamental and reliable.
To leverage causal invariance, we 
use a different prior for the test domain (\ourmodel{}-ind and \ourmodel{}-DA), which gives
a different and more reliable prediction than following inference invariance. 

\vspace{-1pt}
\section{Method} \label{sec:meth}
\vspace{-1pt}

We now develop 
methods based on variational Bayes~\citep{jordan1999introduction, kingma2014auto} for OOD generalization and domain adaptation using \ourmodel. 
Appx.~\ref{supp:meth-obj} shows all details. 

\vspace{-1pt}
\subsection{Method for OOD Generalization} \label{sec:meth-ood}
\vspace{-1pt}

For OOD generalization, one only has supervised data from the underlying data distribution $p^*(x,y)$ on the \emph{training domain}.
Fitting a \ourmodel $p := \lrangle{p(s,v), p(x|s,v), p(y|s)}$ to data by maximizing likelihood $\bbE_{p^*(x,y)}[\log p(x,y)]$ is intractable, since $p(x,y) := \int p(s,v,x,y) \dd s \ud v$ where $p(s,v,x,y) := p(s,v) p(x|s,v) p(y|s)$, is hard to estimate.
The Evidence Lower BOund (ELBO) $\clL_{p, \, q_{s,v|x,y}} (x,y) := \bbE_{q(s,v|x,y)} [ \log \frac{p(s,v,x,y)}{q(s,v|x,y)} ]$~\citep{jordan1999introduction, wainwright2008graphical}
is a tractable surrogate with the help of 
an inference model $q(s,v|x,y)$ that enjoys easy sampling and density evaluation.
It is known that $\max_{q_{s,v|x,y}} \clL_{p, \, q_{s,v|x,y}} (x,y)$ drives $q(s,v|x,y)$ towards the posterior $p(s,v|x,y) := \frac{p(s,v,x,y)}{p(x,y)}$,
meanwhile makes $\clL_{p, \, q_{s,v|x,y}} (x,y)$ a tighter lower bound of $\log p(x,y)$ for optimizing \ourmodel $p$.

However, the subtlety with supervised learning is that prediction is still hard, as the introduced model $q(s,v|x,y)$ does not help estimate $p(y|x)$.
To address this, we propose to employ an auxiliary model $q(s,v,y|x)$ targeting $p(s,v,y|x)$.
It allows easy sampling of $y$ given $x$ for prediction, 
and can also serve as the required inference model:
$q(s,v|x,y) = \frac{q(s,v,y|x)}{q(y|x)}$, where $q(y|x) := \int q(s,v,y|x) \dd s \ud v$ is also determined by $q(s,v,y|x)$.
The ELBO objective $\bbE_{p^*(x,y)} [\clL_{p, \, q_{s,v|x,y}} (x,y)]$ then becomes:
{\abovedisplayskip=3pt
\begin{align}
  \bbE_{p^*\!(x)} \bbE_{p^*\!(y|x)} [\log q(y|x)]
  + \bbE_{p^*\!(x)} \bbE_{q(s,v,y|x)} \!\Big[ \frac{p^*\!(y|x)}{q(y|x)} \!\log \frac{p(s,v,x,y)}{q(s,v,y|x)} \Big]. \!\!
  \label{eqn:elbo-interp}
\end{align} }%
As a functional of $q(s,v,y|x)$ (instead of $q(s,v|x,y)$) and the \ourmodel $p$, this objective also drives them towards their targets:
the first term is the negative of the standard cross entropy (CE) loss which drives $q(y|x)$ towards $p^*(y|x)$,
and once this is achieved, the second term becomes the expected ELBO $\bbE_{p^*(x)}[\clL_{p, \, q_{s,v,y|x}} (x)]$ 
that drives $q(s,v,y|x)$ towards $p(s,v,y|x)$ and $p(x)$ towards $p^*(x)$.
%
Furthermore, as the target of $q(s,v,y|x)$ factorizes as $p(s,v,y|x) = p(s,v|x) p(y|s)$ (due to Fig.~\ref{fig:gen-sv}) 
where $p(y|s)$ is already known (part of the \ourmodel), 
we can instead employ a lighter inference model $q(s,v|x)$ for the minimally intractable component $p(s,v|x)$ therein, and use $q(s,v|x) p(y|s)$ as $q(s,v,y|x)$.
This turns the objective \eqref{eqn:elbo-interp} to:
{\abovedisplayskip=2pt
\begin{align}
  \max_{p, \, q_{s,v|x}} \bbE_{p^*\!(x,y)} \Big[
    \log q(y|x)
    + \frac{1}{q(y|x)} \bbE_{q(s,v|x)} \! \Big[ p(y|s) \log \frac{p(s,v) p(x|s,v)}{q(s,v|x)} \Big] \Big], \!
  \label{eqn:elbo-src}
\end{align} }%
where $q(y|x) := \bbE_{q(s,v|x)} [p(y|s)]$.
The expectations can be estimated by Monte Carlo after applying the reparameterization trick~\citep{kingma2014auto}.
This is the basic \ourmodel method.

\textbf{\ourmodel{}-ind} \hspace{8pt}
To actively improve OOD generalization performance, we consider using an \textbf{ind}ependent prior $\pind(s,v) := p(s) p(v)$ for prediction in the \emph{test domain} (Fig.~\ref{fig:gen-ind}), where $p(s)$ and $p(v)$ are the marginals of the training-domain prior $p(s,v)$.
Intuitively, $\pind(s,v)$ discards the spurious correlation between $s$ and $v$ on the training domain \exmp{the ``desk-workspace'', ``bed-bedroom'' association},
and promotes a cautious neutral belief on the unknown test-domain correlation 
in defence against all possibilities \exmp{a ``desk-bedroom'', ``bed-workspace'' association}.
Formally, $\pind(s,v)$ has a larger entropy than $p(s,v)$~\citep[Thm.~2.6.6]{cover2006elements}, so it reduces training-domain-specific information and encourages reliance on the causal mechanisms for better generalization.
It also amounts to applying the do-operator~\citep{pearl2009causality} to Fig.~\ref{fig:gen-sv}, representing a randomized experiment by independently soft-intervening $s$ or $v$.
In this way, causal invariance is properly leveraged, making a different and more reliable prediction than following inference invariance.
Our theory below also shows that $\pind(s,v)$ leads to a smaller generalization error bound (Thm.~\ref{thm:ood} Remark).

Methodologically, we need the test-domain inference model $\qind(s,v|x)$ for prediction $\pind(y|x) \approx \bbE_{\qind(s,v|x)}[p(y|s)]$,
but also need $q(s,v|x)$ for learning on the training domain.
To save the cost of building and learning two inference models, we propose to use $\qind(s,v|x)$ to represent $q(s,v|x)$.
Noting that their targets are related by $p(s,v|x) = \frac{p(s,v)}{\pind(s,v)} \frac{\pind(x)}{p(x)} \pind(s,v|x)$,
we formulate $q(s,v|x) = \frac{p(s,v)}{\pind(s,v)} \frac{\pind(x)}{p(x)} \qind(s,v|x)$ accordingly, so that this $q(s,v|x)$ achieves its target if and only if $\qind(s,v|x)$ does.
The objective \eqref{eqn:elbo-interp} then becomes:
{\abovedisplayskip=2pt
\belowdisplayskip=3pt
\begin{align}
  \max_{p, \, \qind_{s,v|x}} \bbE_{p^*\!(x,y)} \Big[
    \log \pi(y|x) + \frac{1}{\pi(y|x)} \bbE_{\qind(s,v|x)} \!
    \Big[ \frac{p(s,v)}{\pind(s,v)} p(y|s) \log \frac{\pind(s,v) p(x|s,v)}{\qind(s,v|x)} \Big] \Big],
  \label{eqn:elbo-src-ind}
\end{align} }%
where $\pi(y|x) := \bbE_{\qind(s,v|x)} \! \big[ \frac{p(s,v)}{\pind(s,v)} p(y|s) \big]$.
(Note $\pind(s,v)$ is determined by $p(s,v)$ in the \ourmodel $p$.)

\vspace{-1pt}
\subsection{Method for Domain Adaptation} \label{sec:meth-da}
\vspace{-1pt}

In domain adaptation, one also has unsupervised data from the underlying data distribution $\ppt^*(x)$ on the \emph{test domain}.
We can leverage them 
for better prediction.
According to the causal invariance principle~(\ref{prin:inv}), 
we only need a new prior $\ppt(s,v)$ for the test-domain \ourmodel $\ppt := \lrangle{\ppt(s,v), p(x|s,v), p(y|s)}$ (Fig.~\ref{fig:gen-da}).
Fitting test-domain data can be done through the standard ELBO objective with the test-domain inference model $\qqt(s,v|x)$:
{\abovedisplayskip=0pt
\belowdisplayskip=2pt
\begin{align}
  \max_{\ppt, \, \qqt_{s,v|x}} \bbE_{\ppt^*(x)}  [\clL_{\ppt, \, \qqt_{s,v|x}} (x)],
  \text{where } \clL_{\ppt, \, \qqt_{s,v|x}} \!(x)
  = \bbE_{\qqt(s,v|x)} \! \Big[ \! \log \frac{\ppt(s,v) p(x|s,v)}{\qqt(s,v|x)} \Big].
  \label{eqn:elbo-tgt}
\end{align} }%
Prediction is given by $\ppt(y|x) \approx \bbE_{\qqt(s,v|x)}[p(y|s)]$. 
Similar to the \ourmodel{}-ind case, we still need $q(s,v|x)$ for fitting training-domain data, and we can also avoid a separate $q(s,v|x)$ model by representing it using $\qqt(s,v|x)$.
Following the same relation between their targets, we let $q(s,v|x) = \frac{\ppt(x)}{p(x)} \frac{p(s,v)}{\ppt(s,v)} \qqt(s,v|x)$,
which reformulates the same training-domain objective \eqref{eqn:elbo-interp} as:
{\abovedisplayskip=2pt
\belowdisplayskip=3pt
\begin{align}
  \max_{p, \, \qqt_{s,v|x}} \bbE_{p^*\!(x,y)} \Big[
    \log \pi(y|x) + \frac{1}{\pi(y|x)} \bbE_{\qqt(s,v|x)}
    \Big[ \frac{p(s,v)}{\ppt(s,v)} p(y|s) \log \frac{\ppt(s,v) p(x|s,v)}{\qqt(s,v|x)} \Big] \Big],
  \label{eqn:elbo-src-qt}
\end{align} }%
where $\pi(y|x) := \bbE_{\qqt(s,v|x)} \big[ \frac{p(s,v)}{\ppt(s,v)} p(y|s) \big]$.
The resulting method, termed \ourmodel{}-DA, solves both optimization problems Eqs.~(\ref{eqn:elbo-tgt},~\ref{eqn:elbo-src-qt}) 
simultaneously.

\vspace{-1pt}
\subsection{Implementation and Model Selection} \label{sec:meth-impl}
\vspace{-1pt}

To implement the three \ourmodel methods, we only need one inference model in each.
Appx.~\ref{supp:meth-instant} shows its construction from a general discriminative model (\eg, how to select its hidden nodes as $s$ and $v$).
In practice $x$ often has a much larger dimension than $y$, making the first supervision term overwhelmed by the second unsupervised term in Eqs.~(\ref{eqn:elbo-src},\ref{eqn:elbo-src-ind},\ref{eqn:elbo-src-qt}).
So we downscale the second term.

As recently emphasized~\citep{gulrajani2020search}, an OOD method should include a model selection method, since it is nontrivial and significantly affects performance~\citep{rothenhausler2018anchor, you2019towards}.
For our methods, we use a validation set from the \emph{training domain} for model selection.
This complies with the OOD setup, and is also suggested by our theory below which gives guarantees based on a good fit to the training-domain data distribution.
For \ourmodel{}-ind/DA, the learned predictor targets the \emph{test} domain, so we \emph{do not} use it directly for evaluating validation accuracy, but by normalizing $\pi(y|x)$.
Appx.~\ref{supp:meth-mselect} shows details.

\vspace{-1pt}
\section{Theory} \label{sec:thry}
\vspace{-1pt}

We now establish theory for the identification of the semantic factor (cause of prediction) and subsequent merits for OOD generalization and domain adaptation.
We focus on the distribution-level generalization instead of from finite samples to unseen samples under the same distribution, 
so we only consider the infinite-data regime. 
Appx.~\ref{supp:proofs} shows all the proofs and auxiliary theory.

Latent variable identification is hard~\citep{koopmans1950identification, murphy2012machine, yacoby2019learning, locatello2019challenging}
as it is beyond observational relations~\citep{janzing2009identifying, peters2017elements}.
Assumptions are thus required to draw definite conclusions.
\vspace{-1pt}
\begin{assumption} \label{assm:anm-bij} 
  (\textbf{Additive noise})
  There exist nonlinear functions $f$ and $g$ with bounded derivatives up to the third-order, and independent random variables $\mu$ and $\nu$,
  such that $p(x|s,v) = p_\mu(x - f(s,v))$, and $p(y|s) = p_\nu(y - g(s))$ for continuous $y$ or 
  $p(y|s) = \Cat(y|g(s))$ for categorical $y$. \\
  (\textbf{Bijectivity}) Assume $f$ is bijective and $g$ is injective.
\end{assumption}
\vspace{-2pt}
The additive noise assumption is widely adopted in causal discovery~\citep{janzing2009identifying, buhlmann2014cam}.
It disables expressing the same joint in the other direction 
[\citealp[Thm.~8]{zhang2009identifiability}; \citealp[Prop.~23]{peters2014causal}]
so that \ourmodel unnecessarily indicates inference invariance. 
For this reason, we exclude GAN~\citep{goodfellow2014generative} and flow-based~\citep{kingma2018glow} implementations.
%
Bijectivity is a common assumption for identifiability~\citep{janzing2009identifying, shalit2017estimating, khemakhem2019variational, lee2019leveraging}.
It is sufficient [\citealp[Prop.~17]{peters2014causal}; \citealp[Prop.~7.4]{peters2017elements}] for the more fundamental [\citealp[Prop.~7]{peters2014causal}; \citealp[p.109]{peters2017elements}] requirement of causal minimality [\citealp[p.2012]{peters2014causal}; \citealp[Def.~6.33]{peters2017elements}].
Particularly, $s$ and $v$ may otherwise have dummy dimensions that $f$ and $g$ simply ignore, raising another ambiguity against identifiability.
On the other hand, according to the commonly acknowledged manifold hypothesis~\citep{weinberger2006unsupervised, fefferman2016testing}, 
we can take $\clX$ as the lower-dimensional data manifold and such a bijection exists as a coordinate map, which is an injection to the original data space and also allows $d_\clS + d_\clV < d_\clX$.

\vspace{-1pt}
\subsection{Identifiability Theory} \label{sec:thry-id}
\vspace{-1pt}

We first formalize the goal of identifying the semantic factor.
\vspace{-1pt}
\begin{definition}[semantic-identification] \label{def:id}
  We say a learned \ourmodel $p$ is \emph{semantic-identified}, if there exists a homeomorphism\footnote{
    A transformation is a homeomorphism if it is a continuous bijection with continuous inverse.
  } $\Phi$ on $\clS\times\clV$,
  such that \bfi its output dimensions in $\clS$ is constant of $v$: 
  $\Phi^\clS(s,v) = \Phi^\clS(s,v'), \forall v, v' \in \clV$ (hence denote $\Phi^\clS(s,v)$ as $\Phi^\clS(s)$),
  and \bfii it is a \emph{reparameterization} of the ground-truth \ourmodel $p^*$: $\Phi_\#[p^*_{s,v}] = p_{s,v}$, $p^*(x|s,v) = p(x|\Phi(s,v))$ and $p^*(y|s) = p(y|\Phi^\clS(s))$. 
\end{definition}
\vspace{-1pt}

Here, $\Phi_\#[p^*_{s,v}]$ denotes the pushed-forward distribution\footnote{
  The definition of $\Phi_\#[p^*_{s,v}]$ requires $\Phi$ to be measurable.
  This is satisfied by the continuity of $\Phi$ as a homeomorphism (as long as the Borel $\sigma$-field is considered)~\citep[Thm.~13.2]{billingsley2012probability}.
} of $p^*_{s,v}$ by $\Phi$, \ie the distribution of $\Phi(s,v)$ when $(s,v) \sim p^*_{s,v}$.
As the ground-truth \ourmodel could at most provide its information via the data distribution $p^*(x,y)$, a well-learned \ourmodel that achieves $p(x,y) = p^*(x,y)$ still has the degree of freedom in parameterizing $(s,v)$.
This is described by this reparameterization $\Phi$ (Appx. Lemma~\ref{lem:repar-same-pxy}).
At the heart of the definition, the $v$-constancy of $\Phi^\clS$ implies that $\Phi$ is \emph{semantic-preserving}:
the learned model \emph{does not mix} the ground-truth $v$ into its $s$, so that the learned $s$ holds equivalent information to the ground-truth $s$.
The definition can thus be seen as the semantic equivalence (Appx. Def.~\ref{def:equiv}, Prop.~\ref{prop:equiv}) to the ground-truth \ourmodel $p^*$. 

For related concepts, this identification cannot be characterized by the \emph{statistical independence} between $s$ and $v$ (vs.~\citep{cai2019learning, ilse2020diva, zhang2020causal}), which is not sufficient~\citep{locatello2019challenging} nor necessary (due to the existence of spurious correlation).
It is also weaker than \emph{disentanglement}~\citep{higgins2018towards, besserve2020counterfactuals}, which 
additionally requires the learned $v$ to be constant of the ground-truth $s$.
%
%
The following theorem shows that semantic-identification can be achieved on a single domain under certain conditions.
\vspace{0pt}
\begin{theorem}[semantic-identifiability] \label{thm:id}
  With Assumption~\ref{assm:anm-bij},
  a \ourmodel $p$ is semantic-identified,
  if it is well-learned such that $p(x,y) = p^*(x,y)$,
  under the conditions that $\log p(s,v)$ and $\log p^*(s,v)$ are bounded up to the second-order, and that\footnote{
    To be precise, the conclusions are that the equalities in Def.~\ref{def:id} hold asymptotically in the limit $1/\sigma_\mu^2 \to \infty$ for condition \bfi, and hold a.e. for condition \bfii.
  }
  \bfemi $1/\sigma_\mu^2 \to \infty$ where $\sigma_\mu^2 := \bbE[\mu\trs \mu]$, \textbf{or}
  \bfemii $p_\mu$ (\eg, a Gaussian) has an a.e. non-zero characteristic function.
\end{theorem}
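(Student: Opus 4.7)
The plan is to take $\Phi := f^{-1} \circ f^*$ as the explicit candidate reparameterization on $\clS \times \clV$. By Assumption~\ref{assm:anm-bij} both $f$ and $f^*$ are smooth bijections with smooth inverses, so $\Phi$ is automatically a homeomorphism. Two of the three equalities required by Definition~\ref{def:id} reduce to formalities for this $\Phi$: the conditional matches via $p^*(x \mid s,v) = p_\mu(x - f^*(s,v)) = p_\mu(x - f(\Phi(s,v))) = p(x \mid \Phi(s,v))$, and the prior pushforward $\Phi_\#[p^*_{s,v}] = p_{s,v}$ follows as soon as I know $f_\#[p_{s,v}] = f^*_\#[p^*_{s,v}]$. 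What remains is (a) to extract from $p(x,y) = p^*(x,y)$ both that pushforward-equality and a matching relation on the conditional $p(y \mid s)$, and (b) to upgrade the latter to $v$-constancy of $\Phi^\clS$.

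For (a), I would write $p(x,y) = \int \pi(t,y)\, p_\mu(x-t)\, \dd t$ with the noiseless joint $\pi(t,y) := \int p(s,v)\, p(y \mid s)\, \delta(t - f(s,v))\, \dd s\, \ud v$, and similarly $\pi^*$ for $p^*$. Under condition \bfemii, taking the characteristic function in $x$ turns convolution into multiplication; the a.e.\ nonvanishing of $\hat p_\mu$ lets me cancel it from both sides of the Fourier transform of $p(x,y) = p^*(x,y)$, yielding $\pi = \pi^*$ as measures on $(t,y)$. Marginalizing $y$ gives $f_\#[p_{s,v}] = f^*_\#[p^*_{s,v}]$, and disintegrating in $t$ gives $p(y \mid f^{-1,\clS}(t)) = p^*(y \mid f^{*-1,\clS}(t))$; reparameterizing $t = f^*(s^*,v^*)$ yields the key identity $p(y \mid \Phi^\clS(s^*,v^*)) = p^*(y \mid s^*)$. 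Under condition \bfemi, where $1/\sigma_\mu^2 \to \infty$, I would instead run a small-noise Laplace expansion of $p(x,y)$ around the peak $t = f(s,v)$, using the bounded third-order derivatives of $f, g$ and the second-order boundedness of $\log p(s,v), \log p^*(s,v)$ to control the remainder, then match leading-order densities in $p(x,y) = p^*(x,y)$ to obtain the same two conclusions in the limit.

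For (b), plugging the ANM form into the derived identity gives $p_\nu\bigl(y - g(\Phi^\clS(s^*,v^*))\bigr) = p_\nu\bigl(y - g^*(s^*)\bigr)$ for all $y$ in the continuous case, or $\Cat\bigl(y \mid g(\Phi^\clS(s^*,v^*))\bigr) = \Cat\bigl(y \mid g^*(s^*)\bigr)$ in the categorical case; either way the shift/parameter is pinned down, forcing $g(\Phi^\clS(s^*,v^*)) = g^*(s^*)$. Since the right-hand side does not depend on $v^*$ and $g$ is injective, $\Phi^\clS(s^*,v^*)$ is independent of $v^*$, which is exactly condition \bfi of Definition~\ref{def:id}; substituting back gives $p^*(y \mid s) = p(y \mid \Phi^\clS(s))$, the remaining equality.

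The hard part will be the asymptotic regime \bfemi. The Fourier route under \bfemii is essentially algebraic once deconvolution is in place, but the Laplace argument in the vanishing-noise limit needs uniform, quantitative remainder control — precisely what the smoothness hypotheses of Assumption~\ref{assm:anm-bij} and the second-order boundedness on $\log p(s,v), \log p^*(s,v)$ are tailored to provide — and one must justify exchanging the limit $\sigma_\mu \to 0$ with the a.e.\ identifications of $f_\#[p_{s,v}]$ and $p(y \mid s)$. A secondary subtlety is that under the manifold hypothesis $f(\clS \times \clV)$ may be lower-dimensional than the ambient $\clX$, so the change of variables is safer to state in terms of pushforward measures than Jacobian-based densities.
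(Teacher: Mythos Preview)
Your proposal is correct and follows essentially the same route as the paper: the same candidate $\Phi = f^{-1}\circ f^*$, the same deconvolution argument under the a.e.\ nonzero characteristic function condition, the same small-noise Taylor/Laplace expansion under $1/\sigma_\mu^2 \to \infty$ with the boundedness hypotheses controlling remainders, and the same injectivity-of-$g$ step to force $v$-constancy of $\Phi^\clS$. The only cosmetic difference is that you deconvolve the full noiseless joint $\pi(t,y)$ at once, whereas the paper deconvolves $p(x)$ and $p(x)\,\bbE[y\mid x]$ separately and then matches $\bar g := g\circ (f^{-1})^\clS$; both yield the same identifications.
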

\vspace{0pt}
\textbf{Remarks.}
\bfone (\textbf{Condition and Intuition})
Compared with the multi-domain case~\citep{peters2016causal, rojas2018invariant, arjovsky2019invariant}, 
identifiability on a single training domain comes at a cost and requires certain conditions. 
One may imagine that in some extreme cases \eg, all desks appear in workspace and all beds in bedrooms, it is impossible to distinguish whether $y$ labels the object or the background (unlearnable OOD problem~\citep{ye2021towards}).
The theorem finds an \emph{appropriate condition} that excludes such cases:
when $\log p^*(s,v)$ is bounded, deterministic $s$-$v$ relations are not allowed as they concentrate $p^*(s,v)$ on a lower-dimensional subspace in $\clS\times\clV$ thus make it unbounded.

It also leads to the \emph{intuition of identifiability}: a bounded $\log p^*(s,v)$ indicates a stochastic $s$-$v$ relation,
so mixing the ground-truth $v$ into the learned $s$ makes the inference of $s$ more noisy due to the intrinsic diversity/uncertainty of this $v$. 
As prediction is made via the inferred $s$, this worsens prediction accuracy thus violates the ``well-learned'' requirement.
Compared with discriminative models, \ourmodel makes more faithful inference, and its causal structure leads to a proper description of domain change.

\bftwo In condition \bfi, $1/\sigma_\mu^2$ measures the \emph{intensity} of the causal mechanism $p(x|s,v)$. 
When it is large, the ``strong'' $p(x|s,v)$ helps disambiguating values of $(s,v)$ in generating a given $x$. 
The formal version in Appx. Thm.~\ref{thm:id-formal} shows a quantitative reference for large enough intensity, 
and Appx.~\ref{supp:id-delta} gives a non-asymptotic extension showing how the intensity trades-off the tolerance of equalities in Def.~\ref{def:id}.
Condition \bfii goes beyond inference invariance. 
It roughly implies that different $(s,v)$ values a.s. produce different $p(x|s,v)$, so their roles in generating $x$ become clear which helps identification.

\bfthr The theorem does not contradict the impossibility result by \citet{locatello2019challenging}, 
which considers disentangling each latent dimension with an unconstrained $(s,v) \to (x,y)$, while we only identify $s$ as a whole, with the $v \to y$ edge removed which breaks the $s$-$v$ symmetry.

\vspace{-2pt}
\subsection{OOD Generalization Theory} \label{sec:thry-ood}
\vspace{-2pt}

Now we show the benefit of semantic-identification for OOD generalization that the prediction error is bounded.
Note the optimal predictor $\bbEt^*[y|x]$
\footnote{For categorical $y$, the expectation of $y$ is taken under the one-hot representation.
} on the test domain is defined by the corresponding ground-truth \ourmodel $\ppt^*$, 
which differs from $p^*$ only in the test-domain prior $\ppt^*(s,v)$ (Principle~\ref{prin:inv}).
\vspace{0pt}
\begin{theorem}[OOD generalization error] \label{thm:ood}
  \hspace{-4pt} \footnote{See Appx. Thm.~\ref{thm:ood-formal} for the formal version.}
  With Assumption~\ref{assm:anm-bij},
  for a semantic-identified \ourmodel $p$ on the training domain with semantic-preserving reparameterization $\Phi$,
  we have up to $O(\sigma_\mu^4)$, 
  {\abovedisplayskip=2pt
  \begin{align} \label{eqn:ood}
    \bbE_{\ppt^*(x)} \lrVert*{\bbE[y|x] - \bbEt^*[y|x]}_2^2
    \le \sigma_\mu^4 B'^4_{f^{-1}} B'^2_g \, \bbE_{\ppt_{s,v}} \lrVert{\nabla \log (\ppt_{s,v} / p_{s,v})}_2^2,
  \end{align} }%
  where $B'_{f^{-1}}$ and $B'_g$ bound the 2-norms\footnote{
    As the induced operator norm for matrices (not the Frobenius norm).
  } of the Jacobians of $f^{-1}$ and $g$, respectively,
  and $\ppt_{s,v} := \Phi_\#[\ppt^*_{s,v}]$ is the test-domain prior under the parameterization of the \ourmodel $p$.
\end{theorem}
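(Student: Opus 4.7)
The plan is to express both predictors in comparable form via the reparameterization $\Phi$, perform a small-noise expansion in $\sigma_\mu^2$, take the difference, and finally translate the resulting bound from $x$-space back to $(s,v)$-space. First, I would use the semantic-identification properties of $\Phi$ (Definition~\ref{def:id}) to rewrite the test-domain ground-truth predictor. By the change of variables $(s',v') = \Phi(s,v)$, the identities $p^*(x|s,v) = p(x|\Phi(s,v))$, $p^*(y|s) = p(y|\Phi^\clS(s))$, and $\Phi_\#[\tilde p^*_{s,v}] = \tilde p_{s,v}$ yield
\[
  \bbEt^*[y|x] \;=\; \bbE_{\tilde p(s,v|x)}[g(s)], \qquad \bbE[y|x] \;=\; \bbE_{p(s,v|x)}[g(s)],
\]
where $\tilde p(s,v|x) \propto \tilde p_{s,v}(s,v)\, p(x|s,v)$ and analogously for $p(s,v|x)$. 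Thus both predictors are expectations of the same function $g(s)$ under two posteriors that differ only through their priors.

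Next, I would change variables from $(s,v)$ to $u = f(s,v)$ using bijectivity of $f$. Writing $h(u) := g\bigl(\pi^\clS f^{-1}(u)\bigr)$ and $\rho(u) := p_{s,v}(f^{-1}(u)) |\det J_{f^{-1}}(u)|$ (and $\tilde\rho$ analogously), the predictors become
\[
  \bbE[y|x] \;=\; \frac{\int h(u)\rho(u) p_\mu(x-u)\,\ud u}{\int \rho(u) p_\mu(x-u)\,\ud u}, \qquad \bbEt^*[y|x] \;=\; \frac{\int h(u)\tilde\rho(u) p_\mu(x-u)\,\ud u}{\int \tilde\rho(u) p_\mu(x-u)\,\ud u}.
\]
Then I would apply a second-order Taylor expansion of $u \mapsto \phi(u) p_\mu(x-u)$ around $u = x$; using $\bbE[\mu]=0$ and denoting $\Sigma_\mu := \bbE[\mu\mu\trs]$ (so $\mathrm{tr}\,\Sigma_\mu = \sigma_\mu^2$ and $\|\Sigma_\mu\|_2 \le \sigma_\mu^2$), one obtains
\[
  \int \phi(u) p_\mu(x-u)\,\ud u \;=\; \phi(x) + \tfrac{1}{2}\mathrm{tr}\bigl(\Sigma_\mu \nabla^2\phi(x)\bigr) + O(\sigma_\mu^4).
\]
Substituting $\phi \in \{\rho, h\rho, \tilde\rho, h\tilde\rho\}$ and simplifying via $\nabla^2(h\rho) = \rho\nabla^2 h + 2\nabla h \otimes \nabla\rho + h\nabla^2\rho$, the dominant $O(\sigma_\mu^2)$ terms in each predictor take the shape $\tfrac{1}{2}\mathrm{tr}(\Sigma_\mu \nabla^2 h) + \nabla h(x)\trs \Sigma_\mu \nabla\log\rho(x)$; the first term is prior-independent and cancels in the difference, leaving
\[
  \bbE[y|x] - \bbEt^*[y|x] \;=\; \nabla h(x)\trs \Sigma_\mu \nabla \log(\rho/\tilde\rho)(x) + O(\sigma_\mu^4).
\]

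Finally, I would convert this to $(s,v)$-space and take the squared norm. By the chain rule, $\nabla_x h(x) = \nabla_s g(s)\, J_{\pi^\clS} J_{f^{-1}}(x)$ and $\nabla_x \log(\rho/\tilde\rho)(x) = J_{f^{-1}}(x)\trs \nabla_{s,v}\log(p_{s,v}/\tilde p_{s,v})|_{f^{-1}(x)}$, since the Jacobian-determinant factors in $\rho$ and $\tilde\rho$ are identical and cancel. Applying Cauchy--Schwarz together with the operator-norm bounds $\|J_g\|_2 \le B'_g$, $\|J_{f^{-1}}\|_2 \le B'_{f^{-1}}$, and $\|\Sigma_\mu\|_2 \le \sigma_\mu^2$ gives the pointwise inequality
\[
  \lrVert{\bbE[y|x] - \bbEt^*[y|x]}_2^2 \;\le\; \sigma_\mu^4 B'^4_{f^{-1}} B'^2_g \, \lrVert{\nabla_{s,v}\log(\tilde p_{s,v}/p_{s,v})|_{f^{-1}(x)}}_2^2 + O(\sigma_\mu^6).
\]
Averaging over $\tilde p^*(x)$ and using that at leading order $x = f(s,v)$ with $(s,v) \sim \tilde p_{s,v}$ (so the $O(\sigma_\mu^2)$ correction from the noise $\mu$ multiplies the already $O(\sigma_\mu^4)$ integrand, producing only $O(\sigma_\mu^6)$ error), the expectation over $x$ becomes an expectation over $\tilde p_{s,v}$, yielding \eqref{eqn:ood}.

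\paragraph{Main obstacle.}
The main technical hurdle is the small-noise expansion: one must justify the $O(\sigma_\mu^4)$ remainder uniformly, which requires the third-order boundedness of derivatives of $f$, $g$, and $\log p_{s,v}$ (from Assumption~\ref{assm:anm-bij} and the hypothesis that $\log p$, $\log p^*$ are bounded up to second order), and one must carefully account for the Jacobian-of-$f^{-1}$ factor that appears \emph{twice} in the final bound --- once from transforming $\nabla h$ and once from transforming $\nabla \log(\rho/\tilde\rho)$ --- so that the bound emerges naturally from a single Cauchy--Schwarz step rather than a loose chain of inequalities. The interchange between the $x$-expectation and the $(s,v)$-expectation through the change of variables is also subtle at the level of matching leading orders.
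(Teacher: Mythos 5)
Your proposal is correct and follows essentially the same route as the paper's proof of Thm.~\ref{thm:ood-formal}: reparameterize the test-domain ground truth via $\Phi$ so that both predictors share the same $f$, $g$ and differ only in the prior, push everything forward to $x$-space (your $\rho$, $h\rho$ are the paper's $f_\#[p_z]=\ppb_z V$ and $\ggb\,\ppb_z V$), do a second-order expansion in $\mu$ where the prior-independent $\nabla\nabla\trs\ggb$ term cancels in the difference, bound the remaining cross term $\nabla\ggb\trs\Sigma_\mu\nabla\log(\ppb_z/\pptb_z)$ by the chain rule (with the $|J_{f^{-1}}|$ determinant factors cancelling in the ratio, as you note), and convert the $\ppt^*(x)$-average to a $\ppt_{s,v}$-average. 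The only nit is that your Taylor remainder should be $O(\sigma_\mu^3)$ rather than $O(\sigma_\mu^4)$ unless the third moment of $\mu$ vanishes, but this does not affect the stated bound, which only tracks the leading $O(\sigma_\mu^2)$ term of the difference.
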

\vspace{-2pt}
In the bound, the term $\bbE_{\ppt_{s,v}} \lrVert{\nabla \log (\ppt_{s,v} / p_{s,v})}_2^2$ is the Fisher divergence measuring the difference between the two priors.
As the prior change is the only source of domain change, this term also measures the ``OODness'' 
in terms of the effect on prediction.
The bound also shows that when the causal mechanism $p(x|s,v)$ is strong (small $\sigma_\mu$), 
it dominates prediction over the prior change, as the generalization error becomes small.
Compared with other methods, using a \ourmodel enforces causal invariance, so the boundedness of OOD generalization error becomes more plausible in practice. 

\textbf{Remark.} \hspace{4pt}
The bound also shows the advantage of \ourmodel{}-ind (Sec.~\ref{sec:meth-ood}). 
The Fisher divergence is revealed~\citep{durkan2021maximum} to have a similar behavior as the forward KL divergence $p_{s,v} \mapsto \KL(\ppt_{s,v} \Vert p_{s,v})$ that it is very sensitive to the insufficient coverage of $p_{s,v}$ on the support of $\ppt_{s,v}$~\citep{huszar2015not, theis2016note},
since $\log (\ppt_{s,v} / p_{s,v})$ is infinitely large on the uncovered region. 
As the independent prior $\pind_{s,v}$ has a larger support than $p_{s,v}$, 
it is less likely to miss the support of $\ppt_{s,v}$,
so it induces a generally smaller Fisher divergence. 
\ourmodel{}-ind thus generally has a smaller OOD generalization error bound than \ourmodel.

\vspace{-2pt}
\subsection{Domain Adaptation Theory} \label{sec:thry-da}
\vspace{-2pt}

\ourmodel{}-DA (Sec.~\ref{sec:meth-da}) learns a new prior $\ppt_{s,v}$ by fitting unsupervised test-domain data, with causal mechanisms shared.
If the mechanisms are semantic-identified, the ground-truth test-domain prior $\ppt^*_{s,v}$ can also be identified under the learned parameterization, and prediction is made precise.
\vspace{0pt}
\begin{theorem}[domain adaptation error] \label{thm:da}
  With conditions of Thm.~\ref{thm:id}, for a semantic-identified \ourmodel $p$ on the training domain with semantic-preserving reparameterization $\Phi$,
  if its new prior $\ppt_{s,v}$ is well-learned such that $\ppt(x) = \ppt^*(x)$,
  then $\ppt_{s,v} = \Phi_\#[\ppt^*_{s,v}]$, and
  $\bbEt[y|x] = \bbEt^*[y|x]$ for any $x \in \supp(\ppt^*_x)$.
\end{theorem}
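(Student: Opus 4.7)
My plan is to reduce the statement to two sub-claims: first, the learned test prior coincides with the pushforward of the ground-truth prior under $\Phi$, i.e.\ $\ppt_{s,v} = \Phi_\#[\ppt^*_{s,v}]$, which is essentially a deconvolution (prior-identifiability) argument; second, the predictor identity then follows by an abstract change of variables using the semantic-identification relations. To establish the first claim, I would define $\pptb_{s,v} := \Phi_\#[\ppt^*_{s,v}]$ and use the pushforward definition of expectation together with the relation $p^*(x|s',v') = p(x|\Phi(s',v'))$ from semantic-identification to compute
\[
\int p(x|s,v) \, \pptb_{s,v}(s,v) \, ds\, dv
= \bbE_{(s',v') \sim \ppt^*_{s,v}} [p(x|\Phi(s',v'))]
= \bbE_{(s',v') \sim \ppt^*_{s,v}} [p^*(x|s',v')]
= \ppt^*(x).
\]
Combined with the hypothesis $\ppt(x) = \ppt^*(x)$, this shows that $\ppt_{s,v}$ and $\pptb_{s,v}$ induce the same $x$-marginal through the \emph{same} mechanism $p(x|s,v)$.

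Next I would exploit the additive-noise structure $p(x|s,v) = p_\mu(x - f(s,v))$ together with the bijectivity of $f$. The change of variables $u = f(s,v)$ recasts the marginal as a convolution $\ppt(x) = (p_\mu \ast \ppt_U)(x)$, where $\ppt_U$ is the pushforward of $\ppt_{s,v}$ by $f$; analogously $\ppt(x) = \pptb(x) = (p_\mu \ast \pptb_U)(x)$. Under condition (ii) of Thm.~\ref{thm:id}, the characteristic function of $p_\mu$ is a.e.\ nonzero, so taking Fourier transforms and dividing yields $\ppt_U = \pptb_U$ as distributions; under condition (i), the same conclusion holds asymptotically as $\sigma_\mu^2 \to 0$, since $p_\mu$ concentrates to a Dirac mass and the convolution becomes invertible in the limit. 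Pulling back by the bijection $f^{-1}$ then gives $\ppt_{s,v} = \pptb_{s,v} = \Phi_\#[\ppt^*_{s,v}]$, the first claim.

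For the second claim, on $\supp(\ppt^*_x) = \supp(\ppt_x)$ I would compute
\[
\bbEt[y|x] = \frac{1}{\ppt(x)} \int g(s) \, p(x|s,v) \, \ppt_{s,v}(s,v) \, ds\, dv,
\]
where $g(s)$ denotes the mean of $p(y|s)$ (or, in the categorical case with one-hot representation, the class-probability vector). Substituting $\ppt_{s,v} = \Phi_\#[\ppt^*_{s,v}]$ and applying the pushforward definition, the $v$-constancy $\Phi^\clS(s',v') = \Phi^\clS(s')$, the identity $p^*(x|s',v') = p(x|\Phi(s',v'))$, and the consequence $g^*(s') = g(\Phi^\clS(s'))$ of $p^*(y|s') = p(y|\Phi^\clS(s'))$, the integrand rewrites as $g^*(s') \, p^*(x|s',v')$ integrated against $\ppt^*_{s,v}$. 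Since $\ppt(x) = \ppt^*(x)$, the resulting quotient equals $\bbEt^*[y|x]$.

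\textbf{Main obstacle.} The crux is the deconvolution step: showing that the latent prior on $(s,v)$ is uniquely determined by the observed $x$-marginal given the shared mechanism. This is precisely why the theorem inherits the conditions of Thm.~\ref{thm:id} — condition (ii) supplies exact Fourier uniqueness, whereas condition (i) demands a careful asymptotic argument in the small-noise limit, paralleling the identification proof itself. Once the prior identity is secured, the remaining manipulations are routine change-of-variables bookkeeping built on the already-established semantic-identification relations.
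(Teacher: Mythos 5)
Your proposal is correct and follows essentially the same route as the paper: first identify the test-domain prior via the deconvolution argument inherited from Thm.~\ref{thm:id} (Fourier uniqueness under the non-vanishing characteristic function condition, asymptotics in the small-noise limit otherwise), giving $\ppt_{s,v} = \Phi_\#[\ppt^*_{s,v}]$, and then obtain the predictor identity by a change of variables using $p^*(x|s,v)=p(x|\Phi(s,v))$ and $g^*(s)=g(\Phi^\clS(s))$. The paper phrases the second step through the convolution identity $\ppt(x)\,\bbEt[y|x]=(f_\#[\ppt_{s,v}]\,\bar g * p_\mu)(x)$ rather than substituting directly into the integral, but this is the same computation.
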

\vspace{-2pt}
Different from existing domain adaptation bounds (Appx.~\ref{supp:da-dir}), Theorems~\ref{thm:ood},\ref{thm:da} allow different inference models in the two domains, thus go beyond inference invariance.

\vspace{-2pt}
\section{Experiments} \label{sec:expm}
\vspace{-2pt}

For OOD generalization baselines, there is not much choice beyond the standard CE loss optimization, 
as domain adaptation methods require test-domain data and most domain generalization methods degenerate to CE with one training domain.
The exception within our scope is a causal discriminative method CNBB~\citep{he2019towards}. 
For domain adaptation, we consider well-acknowledged methods DANN~\citep{ganin2016domain}, DAN~\citep{long2015learning}, CDAN~\citep{long2018conditional} and recent compelling methods MDD~\citep{zhang2019bridging} and BNM~\citep{cui2020towards} (shown in Appx. Tables~\ref{tab:res-oodgen},\ref{tab:res-da}). 
Appx.~\ref{supp:expm} shows more details, results, and discussions. 
\footnote{Codes are available at \url{https://github.com/changliu00/causal-semantic-generative-model}.}



\textbf{Shifted-MNIST.} \hspace{4pt}
We first consider an OOD prediction task on MNIST to classify digits ``0''s and ``1''s. 
To make a spurious correlation, in the training data, we horizontally shift each ``0'' at random by $\delta_0 \sim \clN(-5, 1^2)$ pixels, while each ``1'' by $\delta_1 \sim \clN(5, 1^2)$ pixels.
We consider two test domains with different digit-position distributions:
each digit is not moved $\delta_0 = \delta_1 = 0$ in the first, and is shifted at random by $\delta_0, \delta_1 \sim \clN(0, 2^2)$ pixels in the second.
We implement all methods using a multilayer perceptron which is not naturally shift invariant.
We use a larger architecture for non-generative methods 
to compensate the additional generative component of generative methods.

The performance is shown in Table~\ref{tab:res-sum}(top 2 rows).
For OOD generalization, CE is misled by the more noticeable position factor due to the spurious correlation to digits, and resorts to random guess (even worse) when position is not informative for prediction.
CNBB ameliorates the position confusion, but not as thoroughly without modeling causal mechanisms. 
In contrast, our \ourmodel gives more genuine predictions in unseen domains, thanks to the identification of the semantic factor.
\ourmodel{}-ind performs even better, justifying the merit of using an independent prior for prediction.
For domain adaptation, \ourmodel{}-DA achieves the best results.
Existing adaptation methods even worsen the result (negative transfer), as the misleading position representation gets strengthened on the unsupervised test data. 
\ourmodel is benefited from adaptation in a proper way that identifies the semantic factor.

\textbf{ImageCLEF-DA} \hspace{2pt}
is a standard benchmark for domain adaptation~\citep{imageclef2014}. 
It has 12 classes and three domains of real-world images: \textbf{C}altech-256, \textbf{I}mageNet, \textbf{P}ascal~VOC~2012. 
We select four OOD prediction tasks \textbf{C}$\leftrightarrow$\textbf{P}, \textbf{I}$\leftrightarrow$\textbf{P} that have not seen good enough results. 
We adopt the same setup as~\citep{long2018conditional}. 
%
As shown in Table~\ref{tab:res-sum}(middle 4 rows), \ourmodel{}-ind again achieves the best OOD generalization results, and even outperforms some domain adaptation methods.
Our \ourmodel also outperforms the baselines mostly.
For domain adaptation, \ourmodel{}-DA is the best in most cases and on par with the best in others. 

\textbf{PACS} \hspace{4pt}
is a more recent benchmark dataset~\citep{li2017deeper}.
It has 7 classes and is named after its four domains: \textbf{P}hoto, \textbf{A}rt, \textbf{C}artoon, \textbf{S}ketch;
each contains images of a certain style.
We follow the same setup as~\citep{gulrajani2020search}; particularly, we pool together all domains but the test one as the single training domain.
%
Results in Table~\ref{tab:res-sum}(bottom 4 rows) show the same trend.
\ourmodel{}-DA even outperforms most domain generalization methods reported in~\citep{gulrajani2020search}, which are fed with more information.
Appx. Tables~\ref{tab:res-oodgen},\ref{tab:res-da} also show the results on an even larger dataset \textbf{VLCS}~\citep{fang2013unbiased}, which present a similar observation. 

\textbf{Visualization.} \hspace{4pt}
Appx. Fig.~\ref{fig:viz} visualizes the learned models using LIME~\citep{ribeiro2016why}.
The results show our methods focus more on the semantic regions and shapes, indicating a causal representation is learned.

\textbf{Dataset analysis.} \hspace{4pt}
The results indicate our methods are more powerful on shifted-MNIST and PACS (and VLCS) than ImageCLEF-DA.
This meets the intuition of identifiability (Thm.~\ref{thm:id} Remark~\bfone):
the random position or pooled training domain shows a diverse $v$ for each $s$ (while with a misleading spurious correlation), 
so identification is better guaranteed to overcome the spurious correlation. 

\textbf{Ablation study.} \hspace{4pt}
To show the benefit of modeling $s$ and $v$ separately, we compare with a counterpart of \ourmodel that treats $s$ and $v$ as a whole (equivalently, $v \to y$ is kept; see Appx.~\ref{supp:meth-obj-svae} for method details). 
Appx. Tables~\ref{tab:res-oodgen},\ref{tab:res-da} show that our methods outperform this baseline in all cases.
This shows the separate modeling 
makes \ourmodel consciously drive semantic representation into the dedicated variable $s$.

\begin{table*}[t]
  \vspace{-4pt}
  \centering
  \setlength{\tabcolsep}{2.0pt}
  \caption{Test accuracy (\%) by various methods (ours in bold) for OOD generalization (left 4 cols) and domain adaptation (right 5 cols) on Shifted-MNIST (top 2 rows), ImageCLEF-DA (middle 4 rows) and PACS (bottom 4 rows) datasets.
    Averaged over 10 runs.
    Appx. Tables~\ref{tab:res-oodgen},\ref{tab:res-da} show more results.
  }
  \label{tab:res-sum}
  \vspace{-6pt}
  \small
  \begin{tabular}{c@{}||cccc||ccccc}
    \toprule
    task &
    CE & CNBB & \textbf{\ourmodel} & \textbf{\ourmodel{}-ind} &
    DANN & DAN & CDAN & MDD & \textbf{\ourmodel{}-DA} \\
    \midrule
    $\delta_0 = \delta_1 = 0$ &
    42.9\subpm{3.1} & 54.7\subpm{3.3} & 81.4\subpm{7.4} & \textbf{82.6\subpm{4.0}} &
    40.9\subpm{3.0} & 40.4\subpm{2.0} & 41.0\subpm{0.5} & 41.9\subpm{0.8} & \textbf{97.6\subpm{4.0}} \\
    $\delta_0, \! \delta_1 \! \sim \! \clN \! (0, \! 2^2) \,$ &
    47.8\subpm{1.5} & 59.2\subpm{2.4} & 61.7\subpm{3.6} & \textbf{62.3\subpm{2.2}} &
    46.2\subpm{0.7} & 45.6\subpm{0.7} & 46.3\subpm{0.6} & 45.8\subpm{0.3} & \textbf{72.0\subpm{9.2}} \\
    \midrule
    \textbf{C}$\to$\textbf{P} &
    65.5\subpm{ 0.3} & 72.7\subpm{ 1.1} & 73.6\subpm{ 0.6} & \textbf{74.0\subpm{ 1.3}} &
    74.3\subpm{ 0.5} & 69.2\subpm{ 0.4} & 74.5\subpm{ 0.3} & 74.1\subpm{ 0.7} & \textbf{75.1\subpm{ 0.5}} \\
    \textbf{P}$\to$\textbf{C} &
    91.2\subpm{ 0.3} & 91.7\subpm{ 0.2} & 92.3\subpm{ 0.4} & \textbf{92.7\subpm{ 0.2}} &
    91.5\subpm{ 0.6} & 89.8\subpm{ 0.4} & \textbf{93.5\subpm{ 0.4}} & 92.1\subpm{ 0.6} & \textbf{93.4\subpm{ 0.3}} \\
    \textbf{I}$\to$\textbf{P} &
    74.8\subpm{ 0.3} & 75.4\subpm{ 0.6} & 76.9\subpm{ 0.3} & \textbf{77.2\subpm{ 0.2}} &
    75.0\subpm{ 0.6} & 74.5\subpm{ 0.4} & 76.7\subpm{ 0.3} & 76.8\subpm{ 0.4} & \textbf{77.4\subpm{ 0.3}} \\
    \textbf{P}$\to$\textbf{I} &
    83.9\subpm{ 0.1} & 88.7\subpm{ 0.5} & 90.4\subpm{ 0.3} & \textbf{90.9\subpm{ 0.2}} &
    86.0\subpm{ 0.3} & 82.2\subpm{ 0.2} & 90.6\subpm{ 0.3} & 90.2\subpm{ 1.1} & \textbf{91.1\subpm{ 0.5}} \\
    \midrule
    others$\to$\textbf{P} &
    \textbf{97.8\subpm{ 0.0}} & 96.9\subpm{ 0.2} & 97.7\subpm{ 0.2} & \textbf{97.8\subpm{ 0.2}} &
    97.6\subpm{ 0.2} & 97.6\subpm{ 0.4} & 97.0\subpm{ 0.4} & 97.6\subpm{ 0.3} & \textbf{97.9\subpm{ 0.2}} \\
    others$\to$\textbf{A} &
    88.1\subpm{ 0.1} & 73.1\subpm{ 0.3} & \textbf{88.5\subpm{ 0.6}} & \textbf{88.6\subpm{ 0.6}} &
    85.9\subpm{ 0.5} & 84.5\subpm{ 1.2} & 84.0\subpm{ 0.9} & 88.1\subpm{ 0.8} & \textbf{88.8\subpm{ 0.7}} \\
    others$\to$\textbf{C} &
    77.9\subpm{ 1.3} & 50.2\subpm{ 1.2} & 84.4\subpm{ 0.9} & \textbf{84.6\subpm{ 0.8}} &
    79.9\subpm{ 1.4} & 81.9\subpm{ 1.9} & 78.5\subpm{ 1.5} & 83.2\subpm{ 1.1} & \textbf{84.7\subpm{ 0.8}} \\
    others$\to$\textbf{S} &
    79.1\subpm{ 0.9} & 43.3\subpm{ 1.2} & 80.7\subpm{ 1.0} & \textbf{81.1\subpm{ 1.2}} &
    75.2\subpm{ 2.8} & 77.4\subpm{ 3.1} & 71.8\subpm{ 3.9} & 80.2\subpm{ 2.2} & \textbf{81.4\subpm{ 0.8}} \\
    \bottomrule
  \end{tabular}
  \vspace{-8pt}
\end{table*}

\vspace{-2pt}
\section{Conclusion and Discussion} \label{sec:conc}
\vspace{-3pt}

We propose a Causal Semantic Generative model for single-domain OOD prediction tasks, which builds upon a causal reasoning, and models the semantic (cause of prediction) and variation factors separately. 
By the causal invariance principle, 
we develop novel and efficient learning and prediction methods,
and prove the semantic-identifiability and the subsequent bounded generalization error and the success of adaptation. 
Experiments show the improved performance over prevailing baselines.

Notably, we answered the questions in the recent farseeing paper~\citep{scholkopf2021toward} on causal representation learning:
we found an appropriate condition under which ``causal variables can be recovered'', and provided ``compelling evidence on the advantages (of causal modeling) in terms of generalization''.
Also, separating semantics from variation extends to broader examples. 
Neural nets are found to change their prediction under a different texture 
\citep{geirhos2019imagenet, brendel2019approximating}.
Adversarial vulnerability~\citep{szegedy2014intriguing, goodfellow2015explaining, kurakin2016adversarial} extends variation factors to human-imperceptible features, \ie adversarial noise, which is found to have a strong correlation to the semantics~\citep{ilyas2019adversarial}.
The separation also matters for fairness when a sensitive variation factor may affect prediction. 
This work also inspires the dual connection between causal representation learning (``fill in the blanks'' given a graph) and causal discovery (``link the nodes'' given observed variables).
Our theory shows the identifiability condition for causal discovery (the additive noise assumption) also makes causal representation identifiable.
Studying the general connection between the two tasks is an interesting future work.


\bibliographystyle{abbrvnat}
\bibliography{causupv}

\begin{thebibliography}{125}
\providecommand{\natexlab}[1]{#1}
\providecommand{\url}[1]{\texttt{#1}}
\expandafter\ifx\csname urlstyle\endcsname\relax
  \providecommand{\doi}[1]{doi: #1}\else
  \providecommand{\doi}{doi: \begingroup \urlstyle{rm}\Url}\fi

\bibitem[ima(2014)]{imageclef2014}
The imageclef-da challenge 2014.
\newblock https://www.imageclef.org/2014, 2014.

\bibitem[Arjovsky et~al.(2019)Arjovsky, Bottou, Gulrajani, and
  Lopez-Paz]{arjovsky2019invariant}
M.~Arjovsky, L.~Bottou, I.~Gulrajani, and D.~Lopez-Paz.
\newblock Invariant risk minimization.
\newblock \emph{arXiv preprint arXiv:1907.02893}, 2019.

\bibitem[Atzmon et~al.(2020)Atzmon, Kreuk, Shalit, and
  Chechik]{atzmon2020causal}
Y.~Atzmon, F.~Kreuk, U.~Shalit, and G.~Chechik.
\newblock A causal view of compositional zero-shot recognition.
\newblock \emph{Advances in Neural Information Processing Systems}, 33, 2020.

\bibitem[Bahadori et~al.(2017)Bahadori, Chalupka, Choi, Chen, Stewart, and
  Sun]{bahadori2017causal}
M.~T. Bahadori, K.~Chalupka, E.~Choi, R.~Chen, W.~F. Stewart, and J.~Sun.
\newblock Causal regularization.
\newblock \emph{arXiv preprint arXiv:1702.02604}, 2017.

\bibitem[Baktashmotlagh et~al.(2013)Baktashmotlagh, Harandi, Lovell, and
  Salzmann]{baktashmotlagh2013unsupervised}
M.~Baktashmotlagh, M.~T. Harandi, B.~C. Lovell, and M.~Salzmann.
\newblock Unsupervised domain adaptation by domain invariant projection.
\newblock In \emph{Proceedings of the IEEE International Conference on Computer
  Vision}, pages 769--776, 2013.

\bibitem[Beery et~al.(2018)Beery, Van~Horn, and Perona]{beery2018recognition}
S.~Beery, G.~Van~Horn, and P.~Perona.
\newblock Recognition in terra incognita.
\newblock In \emph{Proceedings of the European Conference on Computer Vision
  (ECCV)}, pages 456--473, 2018.

\bibitem[Ben-David et~al.(2010{\natexlab{a}})Ben-David, Blitzer, Crammer,
  Kulesza, Pereira, and Vaughan]{ben2010theory}
S.~Ben-David, J.~Blitzer, K.~Crammer, A.~Kulesza, F.~Pereira, and J.~W.
  Vaughan.
\newblock A theory of learning from different domains.
\newblock \emph{Machine learning}, 79\penalty0 (1-2):\penalty0 151--175,
  2010{\natexlab{a}}.

\bibitem[Ben-David et~al.(2010{\natexlab{b}})Ben-David, Lu, Luu, and
  P{\'a}l]{ben2010impossibility}
S.~Ben-David, T.~Lu, T.~Luu, and D.~P{\'a}l.
\newblock Impossibility theorems for domain adaptation.
\newblock In \emph{Proceedings of the Thirteenth International Conference on
  Artificial Intelligence and Statistics}, pages 129--136, 2010{\natexlab{b}}.

\bibitem[Bengio et~al.(2020)Bengio, Deleu, Rahaman, Ke, Lachapelle, Bilaniuk,
  Goyal, and Pal]{bengio2019meta}
Y.~Bengio, T.~Deleu, N.~Rahaman, N.~R. Ke, S.~Lachapelle, O.~Bilaniuk,
  A.~Goyal, and C.~J. Pal.
\newblock A meta-transfer objective for learning to disentangle causal
  mechanisms.
\newblock In \emph{8th International Conference on Learning Representations,
  {ICLR} 2020, Addis Ababa, Ethiopia, April 26-30, 2020}, 2020.

\bibitem[Besserve et~al.(2018)Besserve, Shajarisales, Sch{\"o}lkopf, and
  Janzing]{besserve2018group}
M.~Besserve, N.~Shajarisales, B.~Sch{\"o}lkopf, and D.~Janzing.
\newblock Group invariance principles for causal generative models.
\newblock In \emph{International Conference on Artificial Intelligence and
  Statistics}, pages 557--565. PMLR, 2018.

\bibitem[Besserve et~al.(2020)Besserve, Mehrjou, Sun, and
  Sch{\"o}lkopf]{besserve2020counterfactuals}
M.~Besserve, A.~Mehrjou, R.~Sun, and B.~Sch{\"o}lkopf.
\newblock Counterfactuals uncover the modular structure of deep generative
  models.
\newblock In \emph{Proceedings of the International Conference on Learning
  Representations (ICLR 2020)}, 2020.

\bibitem[Biederman(1987)]{biederman1987recognition}
I.~Biederman.
\newblock Recognition-by-components: a theory of human image understanding.
\newblock \emph{Psychological review}, 94\penalty0 (2):\penalty0 115, 1987.

\bibitem[Billingsley(2012)]{billingsley2012probability}
P.~Billingsley.
\newblock \emph{Probability and Measure}.
\newblock John Wiley \& Sons, New Jersey, 2012.
\newblock ISBN 978-1-118-12237-2.

\bibitem[Bishop(2006)]{bishop2006pattern}
C.~M. Bishop.
\newblock \emph{Pattern recognition and machine learning}.
\newblock springer, 2006.

\bibitem[Brendel and Bethge(2019)]{brendel2019approximating}
W.~Brendel and M.~Bethge.
\newblock Approximating {CNNs} with bag-of-local-features models works
  surprisingly well on {ImageNet}.
\newblock In \emph{Proceedings of the International Conference on Learning
  Representations (ICLR 2019)}, 2019.

\bibitem[B{\"u}hlmann(2018)]{buhlmann2018invariance}
P.~B{\"u}hlmann.
\newblock Invariance, causality and robustness.
\newblock \emph{arXiv preprint arXiv:1812.08233}, 2018.

\bibitem[B{\"u}hlmann et~al.(2014)B{\"u}hlmann, Peters, Ernest,
  et~al.]{buhlmann2014cam}
P.~B{\"u}hlmann, J.~Peters, J.~Ernest, et~al.
\newblock {CAM}: Causal additive models, high-dimensional order search and
  penalized regression.
\newblock \emph{The Annals of Statistics}, 42\penalty0 (6):\penalty0
  2526--2556, 2014.

\bibitem[Cai et~al.(2019)Cai, Li, Wei, Qiao, Zhang, and Hao]{cai2019learning}
R.~Cai, Z.~Li, P.~Wei, J.~Qiao, K.~Zhang, and Z.~Hao.
\newblock Learning disentangled semantic representation for domain adaptation.
\newblock In \emph{Proceedings of the Conference of IJCAI}, volume 2019, page
  2060. NIH Public Access, 2019.

\bibitem[Castro et~al.(2020)Castro, Walker, and Glocker]{castro2020causality}
D.~C. Castro, I.~Walker, and B.~Glocker.
\newblock Causality matters in medical imaging.
\newblock \emph{Nature Communications}, 11\penalty0 (1):\penalty0 1--10, 2020.

\bibitem[Chen and Batmanghelich(2020)]{chen2020weakly}
J.~Chen and K.~Batmanghelich.
\newblock Weakly supervised disentanglement by pairwise similarities.
\newblock In \emph{Proceedings of the AAAI Conference on Artificial
  Intelligence}, volume~34, pages 3495--3502, 2020.

\bibitem[Chen et~al.(2018)Chen, Li, Grosse, and Duvenaud]{chen2018isolating}
R.~T. Chen, X.~Li, R.~B. Grosse, and D.~K. Duvenaud.
\newblock Isolating sources of disentanglement in variational autoencoders.
\newblock In \emph{Advances in Neural Information Processing Systems}, pages
  2610--2620, 2018.

\bibitem[Chen et~al.(2016)Chen, Duan, Houthooft, Schulman, Sutskever, and
  Abbeel]{chen2016infogan}
X.~Chen, Y.~Duan, R.~Houthooft, J.~Schulman, I.~Sutskever, and P.~Abbeel.
\newblock Info{GAN}: Interpretable representation learning by information
  maximizing generative adversarial nets.
\newblock In \emph{Advances in Neural Information Processing Systems}, pages
  2172--2180, 2016.

\bibitem[Chuang et~al.(2020)Chuang, Torralba, and
  Jegelka]{chuang2020estimating}
C.-Y. Chuang, A.~Torralba, and S.~Jegelka.
\newblock Estimating generalization under distribution shifts via
  domain-invariant representations.
\newblock In \emph{International Conference on Machine Learning}, pages
  1984--1994. PMLR, 2020.

\bibitem[Cover and Thomas(2006)]{cover2006elements}
T.~M. Cover and J.~A. Thomas.
\newblock \emph{Elements of information theory}.
\newblock John Wiley \& Sons, 2006.

\bibitem[Cui et~al.(2020)Cui, Wang, Zhuo, Li, Huang, and Tian]{cui2020towards}
S.~Cui, S.~Wang, J.~Zhuo, L.~Li, Q.~Huang, and Q.~Tian.
\newblock Towards discriminability and diversity: Batch nuclear-norm
  maximization under label insufficient situations.
\newblock In \emph{Proceedings of the IEEE/CVF Conference on Computer Vision
  and Pattern Recognition}, pages 3941--3950, 2020.

\bibitem[Dai and Wipf(2019)]{dai2019diagnosing}
B.~Dai and D.~Wipf.
\newblock Diagnosing and enhancing {VAE} models.
\newblock In \emph{International Conference on Learning Representations}, 2019.

\bibitem[D'Amour et~al.(2020)D'Amour, Heller, Moldovan, Adlam, Alipanahi,
  Beutel, Chen, Deaton, Eisenstein, Hoffman, et~al.]{d2020underspecification}
A.~D'Amour, K.~Heller, D.~Moldovan, B.~Adlam, B.~Alipanahi, A.~Beutel, C.~Chen,
  J.~Deaton, J.~Eisenstein, M.~D. Hoffman, et~al.
\newblock Underspecification presents challenges for credibility in modern
  machine learning.
\newblock \emph{arXiv preprint arXiv:2011.03395}, 2020.

\bibitem[Durkan and Song(2021)]{durkan2021maximum}
C.~Durkan and Y.~Song.
\newblock On maximum likelihood training of score-based generative models.
\newblock \emph{arXiv preprint arXiv:2101.09258}, 2021.

\bibitem[Endres and Schindelin(2003)]{endres2003new}
D.~M. Endres and J.~E. Schindelin.
\newblock A new metric for probability distributions.
\newblock \emph{IEEE Transactions on Information theory}, 49\penalty0
  (7):\penalty0 1858--1860, 2003.

\bibitem[Fang et~al.(2013)Fang, Xu, and Rockmore]{fang2013unbiased}
C.~Fang, Y.~Xu, and D.~N. Rockmore.
\newblock Unbiased metric learning: On the utilization of multiple datasets and
  web images for softening bias.
\newblock In \emph{Proceedings of the IEEE International Conference on Computer
  Vision}, pages 1657--1664, 2013.

\bibitem[Fefferman et~al.(2016)Fefferman, Mitter, and
  Narayanan]{fefferman2016testing}
C.~Fefferman, S.~Mitter, and H.~Narayanan.
\newblock Testing the manifold hypothesis.
\newblock \emph{Journal of the American Mathematical Society}, 29\penalty0
  (4):\penalty0 983--1049, 2016.

\bibitem[Gal and Ghahramani(2016)]{gal2016dropout}
Y.~Gal and Z.~Ghahramani.
\newblock Dropout as a {B}ayesian approximation: Representing model uncertainty
  in deep learning.
\newblock In \emph{Proceedings of the International Conference on Machine
  Learning}, pages 1050--1059, 2016.

\bibitem[Ganin et~al.(2016)Ganin, Ustinova, Ajakan, Germain, Larochelle,
  Laviolette, Marchand, and Lempitsky]{ganin2016domain}
Y.~Ganin, E.~Ustinova, H.~Ajakan, P.~Germain, H.~Larochelle, F.~Laviolette,
  M.~Marchand, and V.~Lempitsky.
\newblock Domain-adversarial training of neural networks.
\newblock \emph{Journal of Machine Learning Research}, 17:\penalty0 1--35,
  2016.

\bibitem[Geirhos et~al.(2019)Geirhos, Rubisch, Michaelis, Bethge, Wichmann, and
  Brendel]{geirhos2019imagenet}
R.~Geirhos, P.~Rubisch, C.~Michaelis, M.~Bethge, F.~A. Wichmann, and
  W.~Brendel.
\newblock {ImageNet}-trained {CNNs} are biased towards texture; increasing
  shape bias improves accuracy and robustness.
\newblock In \emph{Proceedings of the International Conference on Learning
  Representations (ICLR 2019)}, 2019.

\bibitem[Gong et~al.(2016)Gong, Zhang, Liu, Tao, Glymour, and
  Sch{\"o}lkopf]{gong2016domain}
M.~Gong, K.~Zhang, T.~Liu, D.~Tao, C.~Glymour, and B.~Sch{\"o}lkopf.
\newblock Domain adaptation with conditional transferable components.
\newblock In \emph{International Conference on Machine Learning}, pages
  2839--2848, 2016.

\bibitem[Gong et~al.(2018)Gong, Zhang, Huang, Glymour, Tao, and
  Batmanghelich]{gong2018causal}
M.~Gong, K.~Zhang, B.~Huang, C.~Glymour, D.~Tao, and K.~Batmanghelich.
\newblock Causal generative domain adaptation networks.
\newblock \emph{arXiv preprint arXiv:1804.04333}, 2018.

\bibitem[Goodfellow et~al.(2014)Goodfellow, Pouget-Abadie, Mirza, Xu,
  Warde-Farley, Ozair, Courville, and Bengio]{goodfellow2014generative}
I.~Goodfellow, J.~Pouget-Abadie, M.~Mirza, B.~Xu, D.~Warde-Farley, S.~Ozair,
  A.~Courville, and Y.~Bengio.
\newblock Generative adversarial nets.
\newblock In \emph{Advances in Neural Information Processing Systems}, pages
  2672--2680, Montréal, Canada, 2014. NIPS Foundation.

\bibitem[Goodfellow et~al.(2015)Goodfellow, Shlens, and
  Szegedy]{goodfellow2015explaining}
I.~J. Goodfellow, J.~Shlens, and C.~Szegedy.
\newblock Explaining and harnessing adversarial examples.
\newblock In \emph{Proceedings of the International Conference on Learning
  Representations (ICLR 2015)}, 2015.

\bibitem[Gulrajani and Lopez-Paz(2020)]{gulrajani2020search}
I.~Gulrajani and D.~Lopez-Paz.
\newblock In search of lost domain generalization.
\newblock \emph{arXiv preprint arXiv:2007.01434}, 2020.

\bibitem[He et~al.(2016)He, Zhang, Ren, and Sun]{he2016deep}
K.~He, X.~Zhang, S.~Ren, and J.~Sun.
\newblock Deep residual learning for image recognition.
\newblock In \emph{Proceedings of the IEEE conference on computer vision and
  pattern recognition}, pages 770--778, 2016.

\bibitem[He et~al.(2019)He, Shen, and Cui]{he2019towards}
Y.~He, Z.~Shen, and P.~Cui.
\newblock Towards non-i.i.d. image classification: A dataset and baselines.
\newblock \emph{arXiv preprint arXiv:1906.02899}, 2019.

\bibitem[Heinze-Deml and Meinshausen(2019)]{heinze2019conditional}
C.~Heinze-Deml and N.~Meinshausen.
\newblock Conditional variance penalties and domain shift robustness.
\newblock \emph{stat}, 1050:\penalty0 13, 2019.

\bibitem[Higgins et~al.(2017)Higgins, Matthey, Pal, Burgess, Glorot, Botvinick,
  Mohamed, and Lerchner]{higgins2017beta}
I.~Higgins, L.~Matthey, A.~Pal, C.~Burgess, X.~Glorot, M.~Botvinick,
  S.~Mohamed, and A.~Lerchner.
\newblock Beta-{VAE}: Learning basic visual concepts with a constrained
  variational framework.
\newblock In \emph{Proceedings of the International Conference on Learning
  Representations (ICLR 2017)}, 2017.

\bibitem[Higgins et~al.(2018)Higgins, Amos, Pfau, Racaniere, Matthey, Rezende,
  and Lerchner]{higgins2018towards}
I.~Higgins, D.~Amos, D.~Pfau, S.~Racaniere, L.~Matthey, D.~Rezende, and
  A.~Lerchner.
\newblock Towards a definition of disentangled representations.
\newblock \emph{arXiv preprint arXiv:1812.02230}, 2018.

\bibitem[Hoyer et~al.(2008)Hoyer, Shimizu, Kerminen, and
  Palviainen]{hoyer2008estimation}
P.~O. Hoyer, S.~Shimizu, A.~J. Kerminen, and M.~Palviainen.
\newblock Estimation of causal effects using linear non-gaussian causal models
  with hidden variables.
\newblock \emph{International Journal of Approximate Reasoning}, 49\penalty0
  (2):\penalty0 362--378, 2008.

\bibitem[Husz{\'a}r(2015)]{huszar2015not}
F.~Husz{\'a}r.
\newblock How (not) to train your generative model: Scheduled sampling,
  likelihood, adversary?
\newblock \emph{arXiv preprint arXiv:1511.05101}, 2015.

\bibitem[Hyv{\"a}rinen(2005)]{hyvarinen2005estimation}
A.~Hyv{\"a}rinen.
\newblock Estimation of non-normalized statistical models by score matching.
\newblock \emph{Journal of Machine Learning Research}, 6\penalty0
  (Apr):\penalty0 695--709, 2005.

\bibitem[Ilse et~al.(2020{\natexlab{a}})Ilse, Tomczak, and
  Forr{\'e}]{ilse2020designing}
M.~Ilse, J.~M. Tomczak, and P.~Forr{\'e}.
\newblock Designing data augmentation for simulating interventions.
\newblock \emph{arXiv preprint arXiv:2005.01856}, 2020{\natexlab{a}}.

\bibitem[Ilse et~al.(2020{\natexlab{b}})Ilse, Tomczak, Louizos, and
  Welling]{ilse2020diva}
M.~Ilse, J.~M. Tomczak, C.~Louizos, and M.~Welling.
\newblock {DIVA}: Domain invariant variational autoencoders.
\newblock In \emph{Medical Imaging with Deep Learning}, pages 322--348. PMLR,
  2020{\natexlab{b}}.

\bibitem[Ilyas et~al.(2019)Ilyas, Santurkar, Tsipras, Engstrom, Tran, and
  Madry]{ilyas2019adversarial}
A.~Ilyas, S.~Santurkar, D.~Tsipras, L.~Engstrom, B.~Tran, and A.~Madry.
\newblock Adversarial examples are not bugs, they are features.
\newblock In \emph{Advances in Neural Information Processing Systems}, pages
  125--136, 2019.

\bibitem[Janzing et~al.(2009)Janzing, Peters, Mooij, and
  Sch{\"o}lkopf]{janzing2009identifying}
D.~Janzing, J.~Peters, J.~M. Mooij, and B.~Sch{\"o}lkopf.
\newblock Identifying confounders using additive noise models.
\newblock In \emph{Proceedings of the 25th Conference on Uncertainty in
  Artificial Intelligence (UAI 2009)}, pages 249--257. AUAI Press, 2009.

\bibitem[Janzing et~al.(2011)Janzing, Sgouritsa, Stegle, Peters, and
  Sch{\"o}lkopf]{janzing2011detecting}
D.~Janzing, E.~Sgouritsa, O.~Stegle, J.~Peters, and B.~Sch{\"o}lkopf.
\newblock Detecting low-complexity unobserved causes.
\newblock In \emph{27th Conference on Uncertainty in Artificial Intelligence
  (UAI 2011)}, pages 383--391. AUAI Press, 2011.

\bibitem[Jiang et~al.(2020)Jiang, Fu, and Long]{dalib}
J.~Jiang, B.~Fu, and M.~Long.
\newblock Transfer-learning-library.
\newblock \url{https://github.com/thuml/Transfer-Learning-Library}, 2020.

\bibitem[Johansson et~al.(2019)Johansson, Sontag, and
  Ranganath]{johansson2019support}
F.~D. Johansson, D.~Sontag, and R.~Ranganath.
\newblock Support and invertibility in domain-invariant representations.
\newblock In \emph{The 22nd International Conference on Artificial Intelligence
  and Statistics}, pages 527--536, 2019.

\bibitem[Jordan et~al.(1999)Jordan, Ghahramani, Jaakkola, and
  Saul]{jordan1999introduction}
M.~I. Jordan, Z.~Ghahramani, T.~S. Jaakkola, and L.~K. Saul.
\newblock An introduction to variational methods for graphical models.
\newblock \emph{Machine learning}, 37\penalty0 (2):\penalty0 183--233, 1999.

\bibitem[Ke et~al.(2019)Ke, Bilaniuk, Goyal, Bauer, Larochelle, Pal, and
  Bengio]{ke2019learning}
N.~R. Ke, O.~Bilaniuk, A.~Goyal, S.~Bauer, H.~Larochelle, C.~Pal, and
  Y.~Bengio.
\newblock Learning neural causal models from unknown interventions.
\newblock \emph{arXiv preprint arXiv:1910.01075}, 2019.

\bibitem[Khemakhem et~al.(2020{\natexlab{a}})Khemakhem, Kingma, Monti, and
  Hyv{\"{a}}rinen]{khemakhem2019variational}
I.~Khemakhem, D.~P. Kingma, R.~P. Monti, and A.~Hyv{\"{a}}rinen.
\newblock Variational autoencoders and nonlinear {ICA:} {A} unifying framework.
\newblock In S.~Chiappa and R.~Calandra, editors, \emph{The 23rd International
  Conference on Artificial Intelligence and Statistics, {AISTATS} 2020, 26-28
  August 2020, Online [Palermo, Sicily, Italy]}, volume 108 of
  \emph{Proceedings of Machine Learning Research}, pages 2207--2217,
  2020{\natexlab{a}}.

\bibitem[Khemakhem et~al.(2020{\natexlab{b}})Khemakhem, Monti, Kingma, and
  Hyv{\"a}rinen]{khemakhem2020ice}
I.~Khemakhem, R.~P. Monti, D.~P. Kingma, and A.~Hyv{\"a}rinen.
\newblock {ICE-BeeM}: Identifiable conditional energy-based deep models.
\newblock \emph{arXiv preprint arXiv:2002.11537}, 2020{\natexlab{b}}.

\bibitem[Kilbertus et~al.(2018)Kilbertus, Parascandolo, and
  Sch{\"o}lkopf]{kilbertus2018generalization}
N.~Kilbertus, G.~Parascandolo, and B.~Sch{\"o}lkopf.
\newblock Generalization in anti-causal learning.
\newblock \emph{arXiv preprint arXiv:1812.00524}, 2018.

\bibitem[Kingma and Ba(2014)]{kingma2014adam}
D.~P. Kingma and J.~Ba.
\newblock Adam: A method for stochastic optimization.
\newblock \emph{arXiv preprint arXiv:1412.6980}, 2014.

\bibitem[Kingma and Dhariwal(2018)]{kingma2018glow}
D.~P. Kingma and P.~Dhariwal.
\newblock Glow: Generative flow with invertible 1x1 convolutions.
\newblock In \emph{Advances in Neural Information Processing Systems}, 2018.

\bibitem[Kingma and Welling(2014)]{kingma2014auto}
D.~P. Kingma and M.~Welling.
\newblock Auto-encoding variational {B}ayes.
\newblock In \emph{Proceedings of the International Conference on Learning
  Representations (ICLR 2014)}, Banff, Canada, 2014. ICLR Committee.

\bibitem[Kingma et~al.(2014)Kingma, Mohamed, Rezende, and
  Welling]{kingma2014semi}
D.~P. Kingma, S.~Mohamed, D.~J. Rezende, and M.~Welling.
\newblock Semi-supervised learning with deep generative models.
\newblock In \emph{Advances in Neural Information Processing Systems}, pages
  3581--3589, 2014.

\bibitem[Kocaoglu et~al.(2018)Kocaoglu, Shakkottai, Dimakis, Caramanis, and
  Vishwanath]{kocaoglu2018entropic}
M.~Kocaoglu, S.~Shakkottai, A.~G. Dimakis, C.~Caramanis, and S.~Vishwanath.
\newblock Entropic latent variable discovery.
\newblock \emph{arXiv preprint arXiv:1807.10399}, 2018.

\bibitem[Koopmans and Reiersol(1950)]{koopmans1950identification}
T.~C. Koopmans and O.~Reiersol.
\newblock The identification of structural characteristics.
\newblock \emph{The Annals of Mathematical Statistics}, 21\penalty0
  (2):\penalty0 165--181, 1950.

\bibitem[Krueger et~al.(2020)Krueger, Caballero, Jacobsen, Zhang, Binas, Priol,
  and Courville]{krueger2020out}
D.~Krueger, E.~Caballero, J.-H. Jacobsen, A.~Zhang, J.~Binas, R.~L. Priol, and
  A.~Courville.
\newblock Out-of-distribution generalization via risk extrapolation {(REx)}.
\newblock \emph{arXiv preprint arXiv:2003.00688}, 2020.

\bibitem[Kurakin et~al.(2016)Kurakin, Goodfellow, and
  Bengio]{kurakin2016adversarial}
A.~Kurakin, I.~Goodfellow, and S.~Bengio.
\newblock Adversarial examples in the physical world.
\newblock \emph{arXiv preprint arXiv:1607.02533}, 2016.

\bibitem[Lee et~al.(2019)Lee, Hart, Richens, and Johri]{lee2019leveraging}
C.~M. Lee, C.~Hart, J.~G. Richens, and S.~Johri.
\newblock Leveraging directed causal discovery to detect latent common causes.
\newblock \emph{arXiv preprint arXiv:1910.10174}, 2019.

\bibitem[Li et~al.(2017)Li, Yang, Song, and Hospedales]{li2017deeper}
D.~Li, Y.~Yang, Y.-Z. Song, and T.~M. Hospedales.
\newblock Deeper, broader and artier domain generalization.
\newblock In \emph{Proceedings of the IEEE international conference on computer
  vision}, pages 5542--5550, 2017.

\bibitem[Locatello et~al.(2019{\natexlab{a}})Locatello, Bauer, Lucic, Raetsch,
  Gelly, Sch{\"o}lkopf, and Bachem]{locatello2019challenging}
F.~Locatello, S.~Bauer, M.~Lucic, G.~Raetsch, S.~Gelly, B.~Sch{\"o}lkopf, and
  O.~Bachem.
\newblock Challenging common assumptions in the unsupervised learning of
  disentangled representations.
\newblock In K.~Chaudhuri and R.~Salakhutdinov, editors, \emph{Proceedings of
  the 36th International Conference on Machine Learning}, volume~97 of
  \emph{Proceedings of Machine Learning Research}, pages 4114--4124, Long
  Beach, California, USA, 09--15 Jun 2019{\natexlab{a}}. PMLR.

\bibitem[Locatello et~al.(2019{\natexlab{b}})Locatello, Tschannen, Bauer,
  R{\"a}tsch, Sch{\"o}lkopf, and Bachem]{locatello2019disentangling}
F.~Locatello, M.~Tschannen, S.~Bauer, G.~R{\"a}tsch, B.~Sch{\"o}lkopf, and
  O.~Bachem.
\newblock Disentangling factors of variation using few labels.
\newblock \emph{arXiv preprint arXiv:1905.01258}, 2019{\natexlab{b}}.

\bibitem[Locatello et~al.(2020)Locatello, Poole, R{\"a}tsch, Sch{\"o}lkopf,
  Bachem, and Tschannen]{locatello2020weakly}
F.~Locatello, B.~Poole, G.~R{\"a}tsch, B.~Sch{\"o}lkopf, O.~Bachem, and
  M.~Tschannen.
\newblock Weakly-supervised disentanglement without compromises.
\newblock In \emph{International Conference on Machine Learning}, pages
  6348--6359. PMLR, 2020.

\bibitem[Long et~al.(2015)Long, Cao, Wang, and Jordan]{long2015learning}
M.~Long, Y.~Cao, J.~Wang, and M.~Jordan.
\newblock Learning transferable features with deep adaptation networks.
\newblock In \emph{International conference on machine learning}, pages
  97--105, 2015.

\bibitem[Long et~al.(2018)Long, Cao, Wang, and Jordan]{long2018conditional}
M.~Long, Z.~Cao, J.~Wang, and M.~I. Jordan.
\newblock Conditional adversarial domain adaptation.
\newblock In \emph{Advances in Neural Information Processing Systems}, pages
  1640--1650, 2018.

\bibitem[Lopez-Paz et~al.(2017)Lopez-Paz, Nishihara, Chintala, Sch{\"o}lkopf,
  and Bottou]{lopez2017discovering}
D.~Lopez-Paz, R.~Nishihara, S.~Chintala, B.~Sch{\"o}lkopf, and L.~Bottou.
\newblock Discovering causal signals in images.
\newblock In \emph{Proceedings of the IEEE Conference on Computer Vision and
  Pattern Recognition}, pages 6979--6987, 2017.

\bibitem[Louizos et~al.(2017)Louizos, Shalit, Mooij, Sontag, Zemel, and
  Welling]{louizos2017causal}
C.~Louizos, U.~Shalit, J.~M. Mooij, D.~Sontag, R.~Zemel, and M.~Welling.
\newblock Causal effect inference with deep latent-variable models.
\newblock In \emph{Advances in Neural Information Processing Systems}, pages
  6446--6456, 2017.

\bibitem[Magliacane et~al.(2018)Magliacane, van Ommen, Claassen, Bongers,
  Versteeg, and Mooij]{magliacane2018domain}
S.~Magliacane, T.~van Ommen, T.~Claassen, S.~Bongers, P.~Versteeg, and J.~M.
  Mooij.
\newblock Domain adaptation by using causal inference to predict invariant
  conditional distributions.
\newblock In \emph{Advances in Neural Information Processing Systems}, pages
  10846--10856, 2018.

\bibitem[Mcauliffe and Blei(2008)]{mcauliffe2008supervised}
J.~D. Mcauliffe and D.~M. Blei.
\newblock Supervised topic models.
\newblock In \emph{Advances in Neural Information Processing Systems}, pages
  121--128, Vancouver, Canada, 2008. NIPS Foundation.

\bibitem[Mitrovic et~al.(2021)Mitrovic, McWilliams, Walker, Buesing, and
  Blundell]{mitrovic2021representation}
J.~Mitrovic, B.~McWilliams, J.~C. Walker, L.~H. Buesing, and C.~Blundell.
\newblock Representation learning via invariant causal mechanisms.
\newblock In \emph{International Conference on Learning Representations}, 2021.
\newblock URL \url{https://openreview.net/forum?id=9p2ekP904Rs}.

\bibitem[Muandet et~al.(2013)Muandet, Balduzzi, and
  Sch{\"o}lkopf]{muandet2013domain}
K.~Muandet, D.~Balduzzi, and B.~Sch{\"o}lkopf.
\newblock Domain generalization via invariant feature representation.
\newblock In \emph{International Conference on Machine Learning}, pages 10--18,
  2013.

\bibitem[Murphy(2012)]{murphy2012machine}
K.~P. Murphy.
\newblock \emph{Machine learning: a probabilistic perspective}.
\newblock MIT press, 2012.

\bibitem[Neal(1995)]{neal1995bayesian}
R.~M. Neal.
\newblock \emph{{B}ayesian learning for neural networks}.
\newblock PhD thesis, University of Toronto, 1995.

\bibitem[Pan et~al.(2010)Pan, Tsang, Kwok, and Yang]{pan2010domain}
S.~J. Pan, I.~W. Tsang, J.~T. Kwok, and Q.~Yang.
\newblock Domain adaptation via transfer component analysis.
\newblock \emph{IEEE Transactions on Neural Networks}, 22\penalty0
  (2):\penalty0 199--210, 2010.

\bibitem[Paszke et~al.(2019)Paszke, Gross, Massa, Lerer, Bradbury, Chanan,
  Killeen, Lin, Gimelshein, Antiga, et~al.]{paszke2019pytorch}
A.~Paszke, S.~Gross, F.~Massa, A.~Lerer, J.~Bradbury, G.~Chanan, T.~Killeen,
  Z.~Lin, N.~Gimelshein, L.~Antiga, et~al.
\newblock {PyTorch}: An imperative style, high-performance deep learning
  library.
\newblock \emph{Advances in Neural Information Processing Systems},
  32:\penalty0 8026--8037, 2019.

\bibitem[Pearl(2009)]{pearl2009causality}
J.~Pearl.
\newblock \emph{Causality}.
\newblock Cambridge university press, 2009.

\bibitem[Peters et~al.(2014)Peters, Mooij, Janzing, and
  Sch{\"o}lkopf]{peters2014causal}
J.~Peters, J.~M. Mooij, D.~Janzing, and B.~Sch{\"o}lkopf.
\newblock Causal discovery with continuous additive noise models.
\newblock \emph{Journal of Machine Learning Research}, 15\penalty0
  (1):\penalty0 2009--2053, 2014.

\bibitem[Peters et~al.(2016)Peters, B{\"u}hlmann, and
  Meinshausen]{peters2016causal}
J.~Peters, P.~B{\"u}hlmann, and N.~Meinshausen.
\newblock Causal inference by using invariant prediction: identification and
  confidence intervals.
\newblock \emph{Journal of the Royal Statistical Society: Series B (Statistical
  Methodology)}, 78\penalty0 (5):\penalty0 947--1012, 2016.

\bibitem[Peters et~al.(2017)Peters, Janzing, and
  Sch{\"o}lkopf]{peters2017elements}
J.~Peters, D.~Janzing, and B.~Sch{\"o}lkopf.
\newblock \emph{Elements of causal inference: foundations and learning
  algorithms}.
\newblock MIT press, 2017.

\bibitem[Qiao et~al.(2020)Qiao, Zhao, and Peng]{qiao2020learning}
F.~Qiao, L.~Zhao, and X.~Peng.
\newblock Learning to learn single domain generalization.
\newblock In \emph{Proceedings of the IEEE/CVF Conference on Computer Vision
  and Pattern Recognition}, pages 12556--12565, 2020.

\bibitem[Radford et~al.(2016)Radford, Metz, and
  Chintala]{radford2015unsupervised}
A.~Radford, L.~Metz, and S.~Chintala.
\newblock Unsupervised representation learning with deep convolutional
  generative adversarial networks.
\newblock In Y.~Bengio and Y.~LeCun, editors, \emph{4th International
  Conference on Learning Representations, {ICLR} 2016, San Juan, Puerto Rico,
  May 2-4, 2016, Conference Track Proceedings}, 2016.

\bibitem[Ribeiro et~al.(2016)Ribeiro, Singh, and Guestrin]{ribeiro2016why}
M.~T. Ribeiro, S.~Singh, and C.~Guestrin.
\newblock "{Why} should {I} trust you?": Explaining the predictions of any
  classifier.
\newblock In \emph{Proceedings of the 22nd {ACM} {SIGKDD} International
  Conference on Knowledge Discovery and Data Mining, San Francisco, CA, USA,
  August 13-17, 2016}, pages 1135--1144, 2016.

\bibitem[Richardson et~al.(2002)Richardson, Spirtes,
  et~al.]{richardson2002ancestral}
T.~Richardson, P.~Spirtes, et~al.
\newblock Ancestral graph {M}arkov models.
\newblock \emph{The Annals of Statistics}, 30\penalty0 (4):\penalty0 962--1030,
  2002.

\bibitem[Rojas-Carulla et~al.(2018)Rojas-Carulla, Sch{\"o}lkopf, Turner, and
  Peters]{rojas2018invariant}
M.~Rojas-Carulla, B.~Sch{\"o}lkopf, R.~Turner, and J.~Peters.
\newblock Invariant models for causal transfer learning.
\newblock \emph{The Journal of Machine Learning Research}, 19\penalty0
  (1):\penalty0 1309--1342, 2018.

\bibitem[Romeijn and Williamson(2018)]{romeijn2018intervention}
J.-W. Romeijn and J.~Williamson.
\newblock Intervention and identifiability in latent variable modelling.
\newblock \emph{Minds and machines}, 28\penalty0 (2):\penalty0 243--264, 2018.

\bibitem[Rothenh{\"a}usler et~al.(2018)Rothenh{\"a}usler, Meinshausen,
  B{\"u}hlmann, and Peters]{rothenhausler2018anchor}
D.~Rothenh{\"a}usler, N.~Meinshausen, P.~B{\"u}hlmann, and J.~Peters.
\newblock Anchor regression: heterogeneous data meets causality.
\newblock \emph{arXiv preprint arXiv:1801.06229}, 2018.

\bibitem[Sch{\"o}lkopf(2019)]{scholkopf2019causality}
B.~Sch{\"o}lkopf.
\newblock Causality for machine learning.
\newblock \emph{arXiv preprint arXiv:1911.10500}, 2019.

\bibitem[Sch{\"o}lkopf et~al.(2012)Sch{\"o}lkopf, Janzing, Peters, Sgouritsa,
  Zhang, and Mooij]{scholkopf2012causal}
B.~Sch{\"o}lkopf, D.~Janzing, J.~Peters, E.~Sgouritsa, K.~Zhang, and J.~M.
  Mooij.
\newblock On causal and anticausal learning.
\newblock In \emph{International Conference on Machine Learning (ICML 2012)},
  pages 1255--1262. International Machine Learning Society, 2012.

\bibitem[Sch{\"o}lkopf et~al.(2021)Sch{\"o}lkopf, Locatello, Bauer, Ke,
  Kalchbrenner, Goyal, and Bengio]{scholkopf2021toward}
B.~Sch{\"o}lkopf, F.~Locatello, S.~Bauer, N.~R. Ke, N.~Kalchbrenner, A.~Goyal,
  and Y.~Bengio.
\newblock Toward causal representation learning.
\newblock \emph{Proceedings of the IEEE}, 109\penalty0 (5):\penalty0 612--634,
  2021.

\bibitem[Sgouritsa et~al.(2013)Sgouritsa, Janzing, Peters, and
  Sch{\"o}lkopf]{sgouritsa2013identifying}
E.~Sgouritsa, D.~Janzing, J.~Peters, and B.~Sch{\"o}lkopf.
\newblock Identifying finite mixtures of nonparametric product distributions
  and causal inference of confounders.
\newblock In \emph{Proceedings of the 29th Conference on Uncertainty in
  Artificial Intelligence (UAI 2013)}, pages 556--575. AUAI Press, 2013.

\bibitem[Shalit et~al.(2017)Shalit, Johansson, and
  Sontag]{shalit2017estimating}
U.~Shalit, F.~D. Johansson, and D.~Sontag.
\newblock Estimating individual treatment effect: generalization bounds and
  algorithms.
\newblock In \emph{Proceedings of the 34th International Conference on Machine
  Learning-Volume 70}, pages 3076--3085. JMLR.org, 2017.

\bibitem[Shankar et~al.(2018)Shankar, Piratla, Chakrabarti, Chaudhuri, Jyothi,
  and Sarawagi]{shankar2018generalizing}
S.~Shankar, V.~Piratla, S.~Chakrabarti, S.~Chaudhuri, P.~Jyothi, and
  S.~Sarawagi.
\newblock Generalizing across domains via cross-gradient training.
\newblock In \emph{Proceedings of the International Conference on Learning
  Representations (ICLR 2018)}, 2018.

\bibitem[Shen et~al.(2018)Shen, Cui, Kuang, Li, and Chen]{shen2018causally}
Z.~Shen, P.~Cui, K.~Kuang, B.~Li, and P.~Chen.
\newblock Causally regularized learning with agnostic data selection bias.
\newblock In \emph{2018 ACM Multimedia Conference on Multimedia Conference},
  pages 411--419. ACM, 2018.

\bibitem[Shpitser et~al.(2014)Shpitser, Evans, Richardson, and
  Robins]{shpitser2014introduction}
I.~Shpitser, R.~J. Evans, T.~S. Richardson, and J.~M. Robins.
\newblock Introduction to nested {M}arkov models.
\newblock \emph{Behaviormetrika}, 41\penalty0 (1):\penalty0 3--39, 2014.

\bibitem[Shu et~al.(2020)Shu, Chen, Kumar, Ermon, and Poole]{shu2020weakly}
R.~Shu, Y.~Chen, A.~Kumar, S.~Ermon, and B.~Poole.
\newblock Weakly supervised disentanglement with guarantees.
\newblock In \emph{International Conference on Learning Representations}, 2020.

\bibitem[Spirtes et~al.(2000)Spirtes, Glymour, Scheines, and
  Heckerman]{spirtes2000causation}
P.~Spirtes, C.~N. Glymour, R.~Scheines, and D.~Heckerman.
\newblock \emph{Causation, prediction, and search}.
\newblock MIT press, 2000.

\bibitem[Srivastava et~al.(2014)Srivastava, Hinton, Krizhevsky, Sutskever, and
  Salakhutdinov]{srivastava2014dropout}
N.~Srivastava, G.~Hinton, A.~Krizhevsky, I.~Sutskever, and R.~Salakhutdinov.
\newblock Dropout: a simple way to prevent neural networks from overfitting.
\newblock \emph{The Journal of Machine Learning Research}, 15\penalty0
  (1):\penalty0 1929--1958, 2014.

\bibitem[Szegedy et~al.(2014)Szegedy, Zaremba, Sutskever, Bruna, Erhan,
  Goodfellow, and Fergus]{szegedy2014intriguing}
C.~Szegedy, W.~Zaremba, I.~Sutskever, J.~Bruna, D.~Erhan, I.~Goodfellow, and
  R.~Fergus.
\newblock Intriguing properties of neural networks.
\newblock In \emph{Proceedings of the International Conference on Learning
  Representations (ICLR 2014)}, 2014.

\bibitem[Teshima et~al.(2020)Teshima, Sato, and Sugiyama]{teshima2020few}
T.~Teshima, I.~Sato, and M.~Sugiyama.
\newblock Few-shot domain adaptation by causal mechanism transfer.
\newblock In \emph{Proceedings of the 37th International Conference on Machine
  Learning, {ICML} 2020, 13-18 July 2020, Virtual Event}, volume 119 of
  \emph{Proceedings of Machine Learning Research}, pages 9458--9469, 2020.

\bibitem[Theis et~al.(2016)Theis, van~den Oord, and Bethge]{theis2016note}
L.~Theis, A.~van~den Oord, and M.~Bethge.
\newblock A note on the evaluation of generative models.
\newblock In \emph{International Conference on Learning Representations (ICLR
  2016)}, pages 1--10, 2016.

\bibitem[Tieleman and Hinton(2012)]{tieleman2012lecture}
T.~Tieleman and G.~Hinton.
\newblock Lecture 6.5-{RMSprop}: Divide the gradient by a running average of
  its recent magnitude.
\newblock \emph{COURSERA: Neural networks for machine learning}, 4\penalty0
  (2):\penalty0 26--31, 2012.

\bibitem[Verma and Pearl(1991)]{verma1991equivalence}
T.~Verma and J.~Pearl.
\newblock \emph{Equivalence and synthesis of causal models}.
\newblock UCLA, Computer Science Department, 1991.

\bibitem[Wainwright et~al.(2008)Wainwright, Jordan,
  et~al.]{wainwright2008graphical}
M.~J. Wainwright, M.~I. Jordan, et~al.
\newblock Graphical models, exponential families, and variational inference.
\newblock \emph{Foundations and Trends{\textregistered} in Machine Learning},
  1\penalty0 (1--2):\penalty0 1--305, 2008.

\bibitem[Wang et~al.(2021)Wang, Lan, Liu, Ouyang, and
  Qin]{wang2021generalizing}
J.~Wang, C.~Lan, C.~Liu, Y.~Ouyang, and T.~Qin.
\newblock Generalizing to unseen domains: A survey on domain generalization.
\newblock In \emph{Proceedings of the Thirtieth International Joint Conference
  on Artificial Intelligence, {IJCAI-21}}, pages 4627--4635. International
  Joint Conferences on Artificial Intelligence Organization, 2021.
\newblock Survey Track.

\bibitem[Wang and Blei(2019)]{wang2019blessings}
Y.~Wang and D.~M. Blei.
\newblock The blessings of multiple causes.
\newblock \emph{Journal of the American Statistical Association}, 114\penalty0
  (528):\penalty0 1574--1596, 2019.

\bibitem[Weinberger and Saul(2006)]{weinberger2006unsupervised}
K.~Q. Weinberger and L.~K. Saul.
\newblock Unsupervised learning of image manifolds by semidefinite programming.
\newblock \emph{International Journal of Computer Vision}, 70\penalty0
  (1):\penalty0 77--90, 2006.

\bibitem[Yacoby et~al.(2019)Yacoby, Pan, and Doshi-Velez]{yacoby2019learning}
Y.~Yacoby, W.~Pan, and F.~Doshi-Velez.
\newblock Learning deep bayesian latent variable regression models that
  generalize: When non-identifiability is a problem.
\newblock \emph{arXiv preprint arXiv:1911.00569}, 2019.

\bibitem[Yang et~al.(2020)Yang, Liu, Chen, Shen, Hao, and
  Wang]{yang2020causalvae}
M.~Yang, F.~Liu, Z.~Chen, X.~Shen, J.~Hao, and J.~Wang.
\newblock {CausalVAE}: Structured causal disentanglement in variational
  autoencoder.
\newblock \emph{arXiv preprint arXiv:2004.08697}, 2020.

\bibitem[Yao et~al.(2018)Yao, Li, Li, Huai, Gao, and
  Zhang]{yao2018representation}
L.~Yao, S.~Li, Y.~Li, M.~Huai, J.~Gao, and A.~Zhang.
\newblock Representation learning for treatment effect estimation from
  observational data.
\newblock In \emph{Advances in Neural Information Processing Systems}, pages
  2633--2643, 2018.

\bibitem[Ye et~al.(2021)Ye, Xie, Cai, Li, Li, and Wang]{ye2021towards}
H.~Ye, C.~Xie, T.~Cai, R.~Li, Z.~Li, and L.~Wang.
\newblock Towards a theoretical framework of out-of-distribution
  generalization.
\newblock \emph{arXiv preprint arXiv:2106.04496}, 2021.

\bibitem[You et~al.(2019)You, Wang, Long, and Jordan]{you2019towards}
K.~You, X.~Wang, M.~Long, and M.~Jordan.
\newblock Towards accurate model selection in deep unsupervised domain
  adaptation.
\newblock In \emph{International Conference on Machine Learning}, pages
  7124--7133, 2019.

\bibitem[Zhang et~al.(2020)Zhang, Zhang, and Li]{zhang2020causal}
C.~Zhang, K.~Zhang, and Y.~Li.
\newblock A causal view on robustness of neural networks.
\newblock In \emph{Advances in Neural Information Processing Systems}, 2020.

\bibitem[Zhang and Hyv{\"a}rinen(2009)]{zhang2009identifiability}
K.~Zhang and A.~Hyv{\"a}rinen.
\newblock On the identifiability of the post-nonlinear causal model.
\newblock In \emph{Proceedings of the 25th Conference on Uncertainty in
  Artificial Intelligence (UAI 2009)}, pages 647--655. AUAI Press, 2009.

\bibitem[Zhang et~al.(2013)Zhang, Sch{\"o}lkopf, Muandet, and
  Wang]{zhang2013domain}
K.~Zhang, B.~Sch{\"o}lkopf, K.~Muandet, and Z.~Wang.
\newblock Domain adaptation under target and conditional shift.
\newblock In \emph{International Conference on Machine Learning}, pages
  819--827, 2013.

\bibitem[Zhang et~al.(2019)Zhang, Liu, Long, and Jordan]{zhang2019bridging}
Y.~Zhang, T.~Liu, M.~Long, and M.~Jordan.
\newblock Bridging theory and algorithm for domain adaptation.
\newblock In \emph{International Conference on Machine Learning}, pages
  7404--7413, 2019.

\bibitem[Zhao et~al.(2019)Zhao, Des~Combes, Zhang, and
  Gordon]{zhao2019learning}
H.~Zhao, R.~T. Des~Combes, K.~Zhang, and G.~Gordon.
\newblock On learning invariant representations for domain adaptation.
\newblock In \emph{International Conference on Machine Learning}, pages
  7523--7532, 2019.

\end{thebibliography}

\onecolumn
\appendix
\section*{Appendix}
\titlespacing*{\section}{0pt}{4pt}{2pt}
\titlespacing*{\subsection}{0pt}{2pt}{2pt}
\titlespacing*{\subsubsection}{0pt}{2pt}{1pt}

\section{Proofs} \label{supp:proofs}

We first introduce some handy concepts and results to make the proof succinct, meanwhile providing more information for understanding our model and theory.
We begin with some extended discussions on \ourmodel.
\begin{definition} \label{def:repar}
  A homeomorphism $\Phi$ on $\clS\times\clV$ is called a \emph{reparameterization} from \ourmodel $p$ to \ourmodel $p'$, if 
  $\Phi_\#[p_{s,v}] = p'_{s,v}$, and $p(x|s,v) = p'(x|\Phi(s,v))$ and $p(y|s) = p'(y|\Phi^\clS(s,v))$ for any $(s,v) \in \clS\times\clV$.
  A reparameterization $\Phi$ is called to be \emph{semantic-preserving}, if its output dimensions in $\clS$ is constant of $v$: $\Phi^\clS(s,v) = \Phi^\clS(s,v')$ for any $v, v' \in \clV$ (hence denote $\Phi^\clS(s,v)$ as $\Phi^\clS(s)$ in this case).
\end{definition}
Note that a reparameterization unnecessarily has its output dimensions in $\clS$, \ie $\Phi^\clS(s,v)$, constant of $v$.
The condition that $p(y|s) = p'(y|\Phi^\clS(s,v))$ for any $v\in\clV$ does not indicate that $\Phi^\clS(s,v)$ is constant of $v$, since $p'(y|s')$ may ignore the change of $s' = \Phi^\clS(s,v)$ from the change of $v$.
The following lemma shows the meaning of a reparameterization: it allows a \ourmodel to vary while inducing the same distribution on the observed data variables $(x,y)$ (\ie, holding the same effect on describing data).
\begin{lemma} \label{lem:repar-same-pxy}
  If there exists a reparameterization $\Phi$ from \ourmodel $p$ to \ourmodel $p'$, then $p(x,y) = p'(x,y)$.
\end{lemma}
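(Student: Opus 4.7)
The plan is to use the push-forward identity directly on the marginal $p'(x,y)$. By definition of a \ourmodel, we have
\[
  p'(x,y) = \int p'(s',v') \, p'(x|s',v') \, p'(y|s') \, \dd s' \, \dd v',
\]
and the three conditions on $\Phi$ are tailored precisely so that this integral pulls back to the corresponding integral for $p$.

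Concretely, I would fix $(x,y)$ and define $h_{x,y}(s',v') := p'(x|s',v') \, p'(y|s')$. Since $\Phi$ is continuous (as a homeomorphism) it is Borel measurable, and the hypothesis $\Phi_\#[p_{s,v}] = p'_{s,v}$ gives the change-of-variables formula
\[
  \int h_{x,y}(s',v') \, p'(s',v') \, \dd s' \, \dd v'
  = \int h_{x,y}\bigl(\Phi(s,v)\bigr) \, p(s,v) \, \dd s \, \dd v.
\]
Now substitute the other two reparameterization conditions into the integrand on the right: $p'(x|\Phi(s,v)) = p(x|s,v)$, and $p'(y|\Phi^\clS(s,v)) = p(y|s)$. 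This rewrites the right-hand side as $\int p(s,v) \, p(x|s,v) \, p(y|s) \, \dd s \, \dd v$, which is exactly $p(x,y)$ by the \ourmodel factorization.

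There is no real obstacle here; the lemma is essentially a bookkeeping consequence of unpacking Definition~\ref{def:repar}. The one subtlety worth flagging is that $p'(y|s')$ only depends on the $\clS$-coordinate of the argument, so evaluating it at $\Phi(s,v)$ is the same as evaluating it at $\Phi^\clS(s,v)$; this is why the third reparameterization condition is stated in terms of $\Phi^\clS$ rather than $\Phi$, and it is exactly what makes the substitution work even though $\Phi^\clS$ need not be constant in $v$.
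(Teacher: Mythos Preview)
Your proposal is correct and is essentially the same argument as the paper's: both unfold $p(x,y)$ (resp.\ $p'(x,y)$) via the \ourmodel factorization and use the push-forward identity $\Phi_\#[p_{s,v}] = p'_{s,v}$ together with the two likelihood-conversion conditions to match the integrands. The only cosmetic difference is direction: the paper starts from $p(x,y)$, rewrites $p_{s,v}$ as $\Phi^{-1}_\#[p'_{s,v}]$, and substitutes $(s',v') = \Phi(s,v)$, whereas you start from $p'(x,y)$ and apply the push-forward formula for $\Phi$ directly---your route avoids the (trivial) step of inverting the push-forward and is arguably a touch cleaner.
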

\begin{proof}
  By the definition of a reparameterization, we have:
  \begin{align}
    p(x,y) ={} & \int p(s,v) p(x|s,v) p(y|s) \dd s \ud v
    = \int \Phi^{-1}_\#[p'_{s,v}](s,v) p'(x|\Phi(s,v)) p'(y|\Phi^\clS(s,v)) \dd s \ud v \\
    ={} & \int p'_{s,v}(s',v') p'(x|s',v') p'(y|s') \dd s' \ud v'
    = p'(x,y),
  \end{align}
  where we used variable substitution $(s',v') := \Phi(s,v)$ in the second-last equality.
  Note that by the definition of pushed-forward distribution and the bijectivity of $\Phi$, $\Phi_\#[p_{s,v}] = p'_{s,v}$ implies $p_{s,v} = \Phi^{-1}_\#[p'_{s,v}]$,
  and $\int f(s',v') p'_{s,v}(s',v') \dd s' \ud v' = \int f(\Phi(s,v)) \Phi^{-1}_\#[p'_{s,v}](s,v) \dd s \ud v$
  (can also be verified deductively using the rule of change of variables, \ie Lemma~\ref{lem:changevar} in the following).
\end{proof}

We can now define and verify an equivalent relation on \ourmodels so that the resulting equivalent class contains \ourmodels that induce the same $(x,y)$ data distribution and hold the same semantic information in their $s$ variables.
\begin{definition}[semantic-equivalence] \label{def:equiv}
  We say two \ourmodels $p$ and $p'$ are \emph{semantic-equivalent}, if there exists a homeomorphism\footnote{
    A transformation is a homeomorphism if it is a continuous bijection with continuous inverse.
  } $\Phi$ on $\clS\times\clV$,
  such that \bfi is \emph{semantic-preserving}: its output dimensions in $\clS$ is constant of $v$, $\Phi^\clS(s,v) = \Phi^\clS(s)$ for any $v\in\clV$,
  and \bfii it acts as a \emph{reparameterization} from $p$ to $p'$: $\Phi_\#[p_{s,v}] = p'_{s,v}$, $p(x|s,v) = p'(x|\Phi(s,v))$ and $p(y|s) = p'(y|\Phi^\clS(s))$.
\end{definition}
Proposition~\ref{prop:equiv} in Appx.~\ref{supp:proofs-equiv} below shows that the defined binary relation is indeed an equivalence relation in common cases.
As a reparameterization, $\Phi$ allows the two models to have different latent-variable parameterizations while inducing the same distribution on the observed data variables $(x,y)$ (Lemma~\ref{lem:repar-same-pxy}).
The definition of semantic-identification (Def.~\ref{def:id}) is then the semantic-equivalence of the ground-truth \ourmodel $p^*$ to the learned \ourmodel $p$, which is also the semantic-equivalence of the learned \ourmodel $p$ to the ground-truth \ourmodel $p^*$ in common cases where it is an equivalence relation (Prop.~\ref{prop:equiv}).

This definition of semantic-equivalence can be rephrased as the \emph{existence} of a semantic-preserving reparameterization.
With proper model assumptions, we can show that \emph{any} reparameterization between two \ourmodels is semantic-preserving,
so that semantic-preserving \ourmodels cannot be converted to each other by a reparameterization that mixes $s$ with $v$.
\begin{lemma} \label{lemma:nomix}
  For two \ourmodels $p$ and $p'$, if $p'(y|s)$ has a statistics $M'(s)$ that is an injective function of $s$,
  then \emph{any} reparameterization $\Phi$ from $p$ to $p'$, if exists, has its $\Phi^\clS$ constant of $v$.
\end{lemma}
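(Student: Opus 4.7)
My plan is to exploit the defining identity $p(y|s) = p'(y|\Phi^\clS(s,v))$ carried by any reparameterization (Def.~\ref{def:repar}), together with the fact that its left-hand side has no dependence on $v$. The argument should collapse to a single application of the injectivity hypothesis on $M'$, with no use of $\Phi_\#[p_{s,v}]=p'_{s,v}$ nor of $p(x|s,v)=p'(x|\Phi(s,v))$.

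First, I would fix an arbitrary $s\in\clS$ and two arbitrary points $v,v'\in\clV$. Writing the reparameterization identity at $(s,v)$ and at $(s,v')$ and equating the common left-hand side $p(y|s)$ yields the distributional equality $p'(y|\Phi^\clS(s,v)) = p'(y|\Phi^\clS(s,v'))$ as functions of $y$. This is the only place where the structural property of the reparameterization is invoked, and it is exactly what licenses the transfer of the $v$-independence of $p(y|s)$ to the second argument of $p'$.

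Second, I would observe that a statistic of $p'(y|\cdot)$ is by definition a functional of the conditional distribution, so equal distributions produce equal statistic values. Applying $M'$ to both sides therefore yields $M'(\Phi^\clS(s,v)) = M'(\Phi^\clS(s,v'))$, and injectivity of $M'$ in its argument immediately gives $\Phi^\clS(s,v) = \Phi^\clS(s,v')$. Since $s,v,v'$ were arbitrary, $\Phi^\clS$ is constant in $v$, which is the desired conclusion.

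The main obstacle, if any, lies not in the mathematics but in the bookkeeping of what ``statistic'' precisely means here; as long as $M'(\tilde s)$ is determined by the distribution $p'(y|\tilde s)$ alone (so that equal conditional distributions force equal values of $M'$), the argument goes through verbatim. Notably, continuity, bijectivity, and measure-theoretic properties of $\Phi$ are not needed, and the lemma does not rely on Assumption~\ref{assm:anm-bij} at all — it is purely a consequence of the $y$-conditional identity built into the notion of a reparameterization.
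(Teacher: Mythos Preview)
Your proposal is correct and matches the paper's own proof essentially line for line: both use the reparameterization identity $p(y|s)=p'(y|\Phi^\clS(s,v))$, pass to the statistic $M'$, and invoke injectivity to conclude $\Phi^\clS(s,v)=\Phi^\clS(s,v')$. The only cosmetic difference is that the paper phrases the last step contrapositively (assuming $\Phi^\clS(s,v^{(1)})\neq\Phi^\clS(s,v^{(2)})$ and deriving a contradiction with $M(s)=M'(\Phi^\clS(s,v))$), whereas you argue directly.
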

\begin{proof}
  Let $\Phi = (\Phi^\clS, \Phi^\clV)$ be any reparameterization from $p$ to $p'$.
  Then the condition that $p(y|s) = p'(y|\Phi^\clS(s,v))$ for any $v\in\clV$ indicates that $M(s) = M'(\Phi^\clS(s,v))$.
  If there exist $s\in\clS$ and $v^{(1)} \ne v^{(2)} \in \clV$ such that $\Phi^\clS(s,v^{(1)}) \ne \Phi^\clS(s,v^{(2)})$, then $M'(\Phi^\clS(s,v^{(1)})) \ne M'(\Phi^\clS(s,v^{(2)}))$ since $M'$ is injective.
  This violates $M(s) = M'(\Phi^\clS(s,v))$ which requires both $M'(\Phi^\clS(s,v^{(1)}))$ and $M'(\Phi^\clS(s,v^{(2)}))$ to be equal to $M(s)$.
  So $\Phi^\clS(s,v)$ must be constant of $v$.
\end{proof}

We then introduce two mathematical facts.
\begin{lemma}[rule of change of variables] \label{lem:changevar}
  Let $z$ be a random variable on a Euclidean space $\bbR^{d_\clZ}$ with density function $p_z(z)$,
  and let $\Phi$ be a homeomorphism on $\bbR^{d_\clZ}$ whose inverse $\Phi^{-1}$ is differentiable.
  Then the distribution of the transformed random variable $z' = \Phi(z)$ has a density function
  $\Phi_\#[p_z](z') = p_z(\Phi^{-1}(z')) \lrvert{J_{\Phi^{-1}}(z')}$,
  where $\lrvert{J_{\Phi^{-1}} (z')}$ denotes the absolute value of the determinant of the Jacobian matrix $(J_{\Phi^{-1}} (z'))_{ia} := \frac{\partial}{\partial z'_i} (\Phi^{-1})_a(z')$ of $\Phi^{-1}$ at $z'$.
\end{lemma}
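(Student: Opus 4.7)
The plan is to reduce the statement to the measure-theoretic definition of the pushed-forward measure and then invoke the classical multivariate change-of-variables formula from real analysis.

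First I would fix an arbitrary Borel set $A \subseteq \mathbb{R}^{d_\mathcal{Z}}$ and unfold the definition of $\Phi_\#[p_z]$. By definition of the pushed-forward measure we have $\Pr(z' \in A) = \Pr(\Phi(z) \in A) = \Pr(z \in \Phi^{-1}(A))$, where $\Phi^{-1}(A)$ is Borel because $\Phi$ is continuous (being a homeomorphism). Since $z$ has density $p_z$, this equals $\int_{\Phi^{-1}(A)} p_z(z) \, dz$. The goal is then to show this equals $\int_A p_z(\Phi^{-1}(z')) \lvert J_{\Phi^{-1}}(z') \rvert \, dz'$ for every such $A$, from which the identification of the density $\Phi_\#[p_z](z') = p_z(\Phi^{-1}(z')) \lvert J_{\Phi^{-1}}(z') \rvert$ follows by uniqueness of the Radon--Nikodym derivative with respect to Lebesgue measure.

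Next I would perform the substitution $z = \Phi^{-1}(z')$ in the integral $\int_{\Phi^{-1}(A)} p_z(z) \, dz$. Under this substitution, as $z$ ranges over $\Phi^{-1}(A)$, $z'$ ranges over $A$ (by bijectivity of $\Phi$), and the infinitesimal volume transforms by the Jacobian determinant: $dz = \lvert J_{\Phi^{-1}}(z') \rvert \, dz'$. This yields precisely the desired integral. The rigorous justification of the substitution step is the standard multivariate change-of-variables theorem, which applies because $\Phi^{-1}$ is a differentiable bijection from $\mathbb{R}^{d_\mathcal{Z}}$ onto itself (with continuous inverse $\Phi$).

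The main obstacle is the technical regularity needed to invoke the change-of-variables theorem: the cleanest version assumes $\Phi^{-1}$ is a $C^1$-diffeomorphism, whereas the lemma only assumes differentiability of $\Phi^{-1}$ and continuity (not differentiability) of $\Phi$. I would address this by appealing to a more general form of the change-of-variables theorem that only requires $\Phi^{-1}$ to be differentiable (almost everywhere) and injective, in which case the formula holds in the Lebesgue-integration sense; see e.g.\ standard references in real analysis. Since the paper's use of this lemma is in contexts where $\Phi$ will be smooth enough in practice, I would simply cite a textbook version of the theorem rather than reproving the analytic subtleties, and the proof of the lemma itself is essentially a one-line application of that theorem after unpacking the definition of the pushed-forward density.
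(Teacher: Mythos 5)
Your proposal is correct and matches the paper's treatment: the paper simply cites a standard reference (Billingsley, Thm.~17.2) together with the observation that a homeomorphism is Borel measurable, which is exactly the reduction you carry out before invoking the textbook change-of-variables theorem. No gap here.
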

\begin{proof}
  See \eg, \citet[Thm.~17.2]{billingsley2012probability}.
  Note that a homeomorphism is (Borel) measurable since it is continuous~\citep[Thm.~13.2]{billingsley2012probability}, so the definition of $\Phi_\#[p_z]$ is valid.
\end{proof}

\begin{lemma} \label{lem:conveq}
  Let $\mu$ be a random variable whose characteristic function is a.e. non-zero.
  For two functions $f$ and $f'$ on the same space, we have:
  $f * p_\mu = f' * p_\mu  \Longleftrightarrow  f = f'$ a.e.,
  where $(f * p_\mu)(x) := \int f(x) p_\mu(x-\mu) \dd \mu$ denotes convolution.
\end{lemma}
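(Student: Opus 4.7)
The plan is to reduce the convolution identity to a pointwise identity in Fourier space. The reverse implication ($f = f'$ a.e.\ $\Rightarrow f * p_\mu = f' * p_\mu$) is immediate from the definition of convolution, since changing $f$ on a null set does not change the value of $\int f(x - t)\, p_\mu(t)\, \ud t$ for any $x$. So I will focus on the forward direction.

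For the forward direction, I would first introduce the Fourier transform $\hat{h}(\xi) := \int h(x) e^{-i\xi^\top x}\, \ud x$, noting that $\hat{p_\mu}$ is (up to sign convention in $\xi$) the characteristic function of $\mu$, which by hypothesis is nonzero almost everywhere. Next, I would invoke the convolution theorem, $\widehat{f * p_\mu} = \hat{f} \cdot \hat{p_\mu}$, and the analogous identity for $f'$. Applying the Fourier transform to $f * p_\mu = f' * p_\mu$ then gives $\hat{f}(\xi)\,\hat{p_\mu}(\xi) = \hat{f'}(\xi)\,\hat{p_\mu}(\xi)$ for all $\xi$. Dividing by $\hat{p_\mu}(\xi)$ on the a.e.\ set where it is nonzero yields $\hat{f} = \hat{f'}$ almost everywhere. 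Finally, injectivity of the Fourier transform (on $L^1$, or on $L^2$, or on tempered distributions, whichever regularity class $f - f'$ belongs to) gives $f = f'$ a.e., as required.

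The only real obstacle is the functional-analytic setting: to apply the convolution theorem and Fourier inversion, I need $f$ and $f'$ to lie in a suitable class (e.g.\ $L^1$, or tempered distributions, so that the convolutions and Fourier transforms are well defined and the injectivity of $\mathcal{F}$ holds). In the paper's use cases these functions arise as probability densities or smooth bounded quantities, so placing them in $L^1 \cap L^2$ (or treating them as tempered distributions and noting that $p_\mu$ is a probability measure with bounded characteristic function) is unproblematic; I would simply state the assumed regularity explicitly and let the rest of the argument go through.
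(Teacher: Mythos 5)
Your proposal is correct and follows essentially the same route as the paper: take Fourier transforms, apply the convolution theorem, cancel the a.e.\ nonzero characteristic function, and conclude by injectivity of the Fourier transform. Your extra remarks about the regularity class of $f$ and $f'$ are a reasonable elaboration of a point the paper leaves implicit, but the argument is the same.
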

\begin{proof}
  The function equality $f * p_\mu = f' * p_\mu$ leads to the equality under Fourier transformation $\scF[f * p_\mu] = \scF[f' * p_\mu]$, which gives $\scF[f] \scF[p_\mu] = \scF[f'] \scF[p_\mu]$.
  Since $\scF[p_\mu]$ is the characteristic function of $p_\mu$, the condition that it is a.e. non-zero indicates that $\scF[f] = \scF[f']$ a.e. thus $f = f'$ a.e.
  See also \citet[Thm.~1]{khemakhem2019variational}.
\end{proof}

\subsection{Proof of the Equivalence Relation} \label{supp:proofs-equiv}

\begin{proposition} \label{prop:equiv}
  The semantic-equivalence in Def.~\ref{def:equiv} is an equivalence relation if $\clV$ is connected and is either open or closed in $\bbR^{d_\clV}$.
\end{proposition}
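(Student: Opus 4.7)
The plan is to verify the three axioms of an equivalence relation. \emph{Reflexivity} is immediate: the identity $\mathrm{Id}$ on $\clS \times \clV$ has $\mathrm{Id}^\clS(s,v) = s$, which is $v$-independent, and all pushforward and mechanism identities in Def.~\ref{def:equiv} become tautologies. \emph{Transitivity} is almost as direct: if $\Phi_1$ witnesses $p \sim p'$ and $\Phi_2$ witnesses $p' \sim p''$, then $\Phi_2 \circ \Phi_1$ is a homeomorphism whose $\clS$-component reads
\[
(\Phi_2 \circ \Phi_1)^\clS(s,v) = \Phi_2^\clS\bigl(\Phi_1^\clS(s), \Phi_1^\clV(s,v)\bigr) = \Phi_2^\clS(\Phi_1^\clS(s)),
\]
where the second equality uses that $\Phi_2$ is semantic-preserving; the pushforward and conditional-density identities chain up.

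The substantive case is \emph{symmetry}. Given $p \sim p'$ via a semantic-preserving homeomorphism $\Phi$, I will show $p' \sim p$ via $\Phi^{-1}$. Since $\Phi^{-1}$ is automatically a homeomorphism and all pushforward/conditional-density identities invert directly \emph{once} $(\Phi^{-1})^\clS(s',v')$ is known to depend on $s'$ alone, the whole problem reduces to showing that $\Phi^\clS : \clS \to \clS$ is injective. Indeed, if it is, then for every $s'$, $\Phi^{-1}(\{s'\} \times \clV) = (\Phi^\clS)^{-1}(s') \times \clV$ is a single slice $\{s_0\} \times \clV$, so $(\Phi^{-1})^\clS(s', \cdot) \equiv s_0$.

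To prove injectivity I argue by contradiction: assume $A := (\Phi^\clS)^{-1}(s')$ contains more than one element. By semantic-preservingness, $\Phi$ restricts to a homeomorphism $A \times \clV \to \{s'\} \times \clV$, and for each $s \in A$ the image $\clV_s := \Phi^\clV(s, \clV)$ is connected and homeomorphic to $\clV$; the family $\{\clV_s\}_{s \in A}$ is pairwise disjoint (by injectivity of $\Phi$) and partitions $\clV$ (by surjectivity of the restriction). If $\clV$ is open in $\bbR^{d_\clV}$, then Brouwer's invariance of domain applied to the continuous injection $\Phi^\clV(s, \cdot) : \clV \to \bbR^{d_\clV}$ shows each $\clV_s$ is open, and $\clV$ being a disjoint nontrivial union of nonempty open sets contradicts its connectedness. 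If $\clV$ is closed in $\bbR^{d_\clV}$, then $\{s\} \times \clV$ is closed in $\clS \times \clV$, so the homeomorphic image $\{s'\} \times \clV_s$ is closed there, whence each $\clV_s$ is closed in $\clV$; the analogous contradiction with connectedness is then extracted (e.g., via a Sierpi\'nski-style argument that a connected space cannot be partitioned into more than one but a tame family of disjoint nonempty closed sets).

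The main obstacle will be the closed case: the open case via invariance of domain produces the clopen contradiction essentially for free, whereas in the closed case one only gets closedness of the $\clV_s$ a priori and must rule out pathological uncountable partitions using an intrinsic topological argument on $\clV$.
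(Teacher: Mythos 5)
Your overall route is the same as the paper's: reflexivity and transitivity by the identity and by composition, and symmetry reduced to showing that the non-injectivity of $\Phi^\clS$ would force a partition of the connected set $\clV$ into at least two disjoint nonempty pieces, each an image $\Phi^\clV(s,\clV)$. (The paper reaches the same partition by first proving that $\Phi$ necessarily splits as $(s,v)\mapsto(\phi(s),\psi_s(v))$ with $\phi$ and each $\psi_s$ a homeomorphism, which also hands it the explicit form of $\Phi^{-1}$; your shortcut via $\Phi^{-1}(\{s'\}\times\clV)=(\Phi^\clS)^{-1}(s')\times\clV$ is equivalent.) In the open case your argument is in fact tighter than the paper's: the paper asserts that $\Phi(\{s\}\times\clV)$ is open "since $\Phi$ is continuous," which is not by itself a valid reason (continuity does not make images open, and $\{s\}\times\clV$ is not open in $\clS\times\clV$); invariance of domain applied to the continuous injection $v\mapsto\Phi^\clV(s,v)$ is exactly the right tool, and you use it.

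The closed case, which you flag yourself, is a genuine gap as written. Closedness of the pieces alone cannot contradict connectedness when the fiber $(\Phi^\clS)^{-1}(s')$ is infinite: $\bbR^2$ is partitioned into disjoint nonempty closed vertical lines. Sierpi\'nski's theorem does not rescue you, since it requires the space to be a continuum (compact) and the partition to be countable, and neither holds here — indeed the fiber $A:=(\Phi^\clS)^{-1}(s')$ is the $\clS$-projection of $\Phi^{-1}(\{s'\}\times\clV)$, a continuous image of a connected set, hence connected, hence uncountable the moment it has two points. So if you want to finish, the usable extra structure is not "closed pieces" but the homeomorphism $A\times\clV\cong\{s'\}\times\clV\cong\clV$ restricted from $\Phi$, with $A$ a nondegenerate connected subset of $\bbR^{d_\clS}$; the contradiction must come from there (e.g.\ a dimension-type obstruction to $A\times\clV$ embedding in $\clV\subseteq\bbR^{d_\clV}$), not from a Sierpi\'nski partition argument. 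To be fair, the paper's own proof dispatches this case with the single sentence "a similar argument holds when $\clV$ is closed," and the naive analogue of its open-case argument fails for exactly the reason above, so you have not missed anything the paper actually supplies — but as a self-contained proof, the closed case remains open in your proposal.
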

\begin{proof}
  Let $\Phi$ be a semantic-preserving reparameterization from one \ourmodel $p = \lrangle{p(s,v), p(x|s,v), p(y|s)}$ to another $p' = \lrangle{p'(s,v), p'(x|s,v), p'(y|s)}$.
  It has its $\Phi^\clS$ constant of $v$, so we can write $\Phi(s,v) = (\Phi^\clS(s), \Phi^\clV(s,v)) =: (\phi(s), \psi_s(v))$.

  \bfone We first show that $\phi$, and $\psi_s$ for any $s\in\clS$, are homeomorphisms on $\clS$ and $\clV$, respectively, and that $\Phi^{-1}(s',v') = (\phi^{-1}(s'), \psi_{\phi^{-1}(s')}^{-1}(v'))$.
  \begin{itemize}
    \item Since $\Phi(\clS\times\clV) = \clS\times\clV$, so $\phi(\clS) = \Phi^\clS(\clS) = \clS$, so $\phi$ is surjective.
    \item Suppose that there exists $s'\in\clS$ such that $\phi^{-1}(s') = \{ s^{(i)} \}_{i\in\clI}$ contains multiple distinct elements.
      \begin{enumerate}
        \item Since $\Phi$ is surjective, for any $v'\in\clV$, there exist $i\in\clI$ and $v\in\clV$ such that $(s',v') = \Phi(s^{(i)}, v) = (\phi(s^{(i)}), \psi_{s^{(i)}}(v))$, which means that $\bigcup_{i\in\clI} \psi_{s^{(i)}}(\clV) = \clV$.
        \item Since $\Phi$ is injective, the sets $\{\psi_{s^{(i)}}(\clV)\}_{i\in\clI}$ must be mutually disjoint.
          Otherwise, there would exist $i \ne j \in \clI$ and $v^{(1)}, v^{(2)} \in \clV$ such that $\psi_{s^{(i)}}(v^{(1)}) = \psi_{s^{(j)}}(v^{(2)})$ thus $\Phi(s^{(i)}, v^{(1)}) = (s', \psi_{s^{(i)}}(v^{(1)})) = (s', \psi_{s^{(j)}}(v^{(2)})) = \Phi(s^{(j)}, v^{(2)})$,
          which violates the injectivity of $\Phi$ since $s^{(i)} \ne s^{(j)}$.
        \item In the case where $\clV$ is open, then so is any $\psi_{s^{(i)}}(\clV) = \Phi({s^{(i)}}, \clV)$ since $\Phi$ is continuous.
          But the union of disjoint open sets $\bigcup_{i\in\clI} \psi_{s^{(i)}}(\clV) = \clV$ cannot be connected.
          This violates the condition that $\clV$ is connected.
        \item A similar argument holds in the case where $\clV$ is closed.
      \end{enumerate}
      So $\phi^{-1}(s')$ contains only one unique element for any $s'\in\clS$.
      So $\phi$ is injective.
    \item The above argument also shows that for any $s'\in\clS$, we have $\bigcup_{i\in\clI} \psi_{s^{(i)}}(\clV) = \psi_{\phi^{-1}(s')}(\clV) = \clV$.
      For any $s\in\clS$, there exists $s'\in\clS$ such that $s = \phi^{-1}(s')$, so we have $\psi_s(\clV) = \clV$.
      So $\psi_s$ is surjective for any $s\in\clS$.
    \item Suppose that there exist $v^{(1)} \ne v^{(2)} \in \clV$ such that $\psi_s(v^{(1)}) = \psi_s(v^{(2)})$.
      Then $\Phi(s,v^{(1)}) = (\phi(s), \psi_s(v^{(1)})) = (\phi(s), \psi_s(v^{(2)})) = \Phi(s,v^{(2)})$, which contradicts the injectivity of $\Phi$ since $v^{(1)} \ne v^{(2)}$.
      So $\psi_s$ is injective for any $s\in\clS$.
    \item That $\Phi$ is continuous and $\Phi(s,v) = (\phi(s), \psi_s(v))$ indicates that $\phi$ and $\psi_s$ are continuous.
      For any $(s',v') \in \clS\times\clV$, we have $\Phi( \phi^{-1}(s'), \psi_{\phi^{-1}(s')}^{-1}(v') ) = ( \phi(\phi^{-1}(s')), \psi_{\phi^{-1}(s')} (\psi_{\phi^{-1}(s')}^{-1}(v')) ) = (s', v')$.
      Applying $\Phi^{-1}$ to both sides gives $\Phi^{-1}(s',v') = (\phi^{-1}(s'), \psi_{\phi^{-1}(s')}^{-1}(v'))$.
    \item Since $\Phi^{-1}$ is continuous, $\phi^{-1}$ and $\psi_s^{-1}$ are also continuous.
  \end{itemize}

  \bftwo We now show that the relation is an equivalence relation.
  It amounts to showing the following three properties.
  \begin{itemize}
    \item Reflexivity.
      For two identical \ourmodels, we have $p(s,v) = p'(s,v)$, $p(x|s,v) = p'(x|s,v)$ and $p(y|s) = p'(y|s)$.
      So the identity map as $\Phi$ obviously satisfies all the requirements.
    \item Symmetry.
      Let $\Phi$ be a semantic-preserving reparameterization from $p = \lrangle{p(s,v), p(x|s,v), p(y|s)}$ to $p' = \lrangle{p'(s,v), p'(x|s,v), p'(y|s)}$. 
      From the above conclusion in \bfone, we know that $(\Phi^{-1})^\clS(s',v') = \phi^{-1}(s')$ is semantic-preserving. 
      Also, $\Phi^{-1}$ is a homeomorphism on $\clS\times\clV$ since $\Phi$ is.
      So we only need to show that $\Phi^{-1}$ is a reparameterization from $p'$ to $p$ for symmetry. 
      \begin{enumerate}
        \item From the definition of pushed-forward distribution, we have $\Phi^{-1}_\#[p'_{s,v}] = p_{s,v}$ if $\Phi_\#[p_{s,v}] = p'_{s,v}$.
          It can also be verified through the rule of change of variables (Lemma~\ref{lem:changevar}) when $\Phi$ and $\Phi^{-1}$ are differentiable.
          From $\Phi_\#[p_{s,v}] = p'_{s,v}$, we have for any $(s',v')$, $p_{s,v}(\Phi^{-1}(s',v')) \lrvert{J_{\Phi^{-1}}(s',v')} = p'_{s,v}(s',v')$.
          Since for any $(s,v)$ there exists $(s',v')$ such that $(s,v) = \Phi^{-1}(s',v')$, this implies that for any $(s,v)$,
          $p_{s,v}(s,v) \lrvert{J_{\Phi^{-1}}(\Phi(s,v))} = p'_{s,v}(\Phi(s,v))$, or
          $p_{s,v}(s,v) = p'_{s,v}(\Phi(s,v)) / \lrvert{J_{\Phi^{-1}}(\Phi(s,v))} = p'_{s,v}(\Phi(s,v)) \lrvert{J_\Phi (s,v)}$ (inverse function theorem),
          which means that $p_{s,v} = \Phi^{-1}_\#[p'_{s,v}]$ by the rule of change of variables.
        \item For any $(s',v')$, there exists $(s,v)$ such that $(s',v') = \Phi(s,v)$, so
          $p'(x|s',v') = p'(x|\Phi(s,v)) = p(x|s,v) = p(x|\Phi^{-1}(s',v'))$, and
          $p'(y|s') = p'(y|\Phi^\clS(s)) = p(y|s) = p(y|(\Phi^{-1})^\clS(s'))$.
      \end{enumerate}
      So $\Phi^{-1}$ is a reparameterization from $p'$ to $p$.
    \item Transitivity.
      Given a third \ourmodel $p'' = \lrangle{p''(s,v), p''(x|s,v), p''(y|s)}$ that is semantic-equivalent to $p'$, there exists a semantic-preserving reparameterization $\Phi'$ from $p'$ to $p''$. 
      It is easy to see that $(\Phi'\circ\Phi)^\clS(s,v) = \Phi'^\clS(\Phi^\clS(s,v)) = \Phi'^\clS(\Phi^\clS(s))$ is constant of $v$ thus semantic-preserving.
      As the composition of two homeomorphisms $\Phi$ and $\Phi'$ on $\clS\times\clV$, $\Phi'\circ\Phi$ is also a homeomorphism.
      So we only need to show that $\Phi'\circ\Phi$ is a reparameterization from $p$ to $p''$ for transitivity.
      \begin{enumerate}
        \item From the definition of pushed-forward distribution, we have $(\Phi'\circ\Phi)_\#[p_{s,v}] = \Phi'_\#[\Phi_\#[p_{s,v}]] = \Phi'_\#[p'_{s,v}] = p''_{s,v}$ if $\Phi_\#[p_{s,v}] = p'_{s,v}$ and $\Phi'_\#[p'_{s,v}] = p''_{s,v}$.
          It can also be verified through the rule of change of variables (Lemma~\ref{lem:changevar}) when $\Phi^{-1}$ and $\Phi'^{-1}$ are differentiable.
          For any $(s'',v'')$, we have
          \begin{align}
            & (\Phi'\circ\Phi)_\#[p_{s,v}](s'',v'')
            =     p_{s,v}((\Phi'\circ\Phi)^{-1}(s'',v'')) \lrvert{J_{(\Phi'\circ\Phi)^{-1}} (s'',v'')} \\
            ={} & p_{s,v}(\Phi^{-1}(\Phi'^{-1}(s'',v''))) \lrvert{J_{\Phi^{-1}} (\Phi'^{-1}(s'',v''))} \lrvert{J_{\Phi'^{-1}} (s'',v'')} \\
            ={} & \Phi_\#[p_{s,v}] (\Phi'^{-1}(s'',v'')) \lrvert{J_{\Phi'^{-1}} (s'',v'')} \\
            ={} & p'_{s,v}(\Phi'^{-1}(s'',v'')) \lrvert{J_{\Phi'^{-1}} (s'',v'')} = \Phi'_\#[p'_{s,v}](s'',v'')
            =     p''_{s,v}(s'',v'').
          \end{align}
        \item For any $(s,v)$, we have:
          \begin{align}
            & p(x|s,v) = p'(x|\Phi(s,v)) = p''(x|\Phi'(\Phi(s,v))) = p''(x|(\Phi'\circ\Phi)(s,v)), \\
            & p(y|s) = p'(y|\Phi^\clS(s)) = p''(y|\Phi'^\clS(\Phi^\clS(s))) = p''(y|(\Phi'\circ\Phi)^\clS(s)).
          \end{align}
      \end{enumerate}
      So $\Phi'\circ\Phi$ is a reparameterization from $p$ to $p''$.
  \end{itemize}
  This completes the proof for an equivalence relation.
\end{proof}

\subsection{Proof of the Semantic-Identifiability Thm.~\ref{thm:id}} \label{supp:proofs-id}

We present a more general and detailed version of Thm.~\ref{thm:id} and prove it.
The conclusions in the theorem in the main context corresponds to conclusions \bfii and \bfi below by taking the two \ourmodels $p'$ and $p$ as the well-learned \ourmodel $p$ and the ground-truth \ourmodel $p^*$, respectively.

\begin{theoremnum}{\ref*{thm:id}'}[semantic-identifiability] \label{thm:id-formal}
  Consider two \ourmodels $p$ and $p'$ that have 
  Assumption~\ref{assm:anm-bij} hold,
  with the bounded derivative conditions specified to be that for both \ourmodels, $f^{-1}$ and $g$ are twice and $f$ thrice differentiable with mentioned derivatives bounded.
  Further assume that they have absolutely continuous priors whose log-densities $\log p(s,v)$ and $\log p'(s,v)$ are bounded up to the second-order.
  If the two \ourmodels induce the same distribution on data, \ie $p(x,y) = p'(x,y)$, then they are semantic-equivalent, under \textbf{one of} the following three conditions:
  \footnote{To be precise, the conclusions are that the equalities in Def.~\ref{def:equiv} hold a.e. for condition \bfi, hold asymptotically in the limit $\frac{1}{\sigma_\mu^2} \to \infty$ for condition \bfii, and hold up to a negligible quantity for condition \bfiii.} \\
  \bfemi $p_\mu$ has an a.e. non-zero characteristic function (\eg, a Gaussian distribution);\footnote{
    This also requires that $p$ and $p'$ have the same $p_\mu$, or that the ground-truth $p_\mu$ is known in learning. 
    However, $p_\mu$ is easier to model/specify/learn than $f$, and $f$ dominates $p(x|s,v)$ over $p_\mu$ when the causal mechanism tends to be strong.
    So learning or specifying $p_\mu$ in learning is not a significant violation of this requirement.
  }\\
  \bfemii $\frac{1}{\sigma_\mu^2} \to \infty$, where $\sigma_\mu^2 := \bbE[\mu\trs \mu]$; \\
  \bfemiii $\frac{1}{\sigma_\mu^2} \!\gg\! B'^2_{f^{-1}}\! \max \{ 
    B'_{\log p} B'_g + \frac{1}{2} B''_g + \frac{3}{2} d B'_{f^{-1}}\! B''_f B'_g,
    B_p B'^d_{f^{-1}}\! ( B'^2_{\log p} \!+\! B''_{\log p} \!+\! 3 d B'_{f^{-1}}\! B''_f B'_{\log p} \!+\! 3 d^{\frac{3}{2}} B'^2_{f^{-1}}\! B''^2_f \!+\! d^3 B'''_f B'_{f^{-1}}\! )
  \}$, where $d := d_\clS + d_\clV$, and for both \ourmodels, the constant $B_p$ bounds $p(s,v)$, $B'_{f^{-1}}, B'_g, B'_{\log p}$ and $B''_f, B''_g, B''_{\log p}$ bound the 2-norms\footnote{
    As an induced operator norm for matrices (not the Frobenius norm).
  } of the gradient/Jacobian and the Hessians of the respective functions, and $B'''_f$ bounds all the 3rd-order derivatives of $f$.
\end{theoremnum}
\begin{proof}
  Without loss of generality, we assume that $\mu$ and $\nu$ (for continuous $y$) have zero mean.
  If it is not, we can redefine $f(s,v) := f(s,v) + \bbE[\mu]$ and $\mu := \mu - \bbE[\mu]$ (similarly for $\nu$ for continuous $y$) which does not alter the joint distribution $p(s,v,x,y)$ nor violates any assumptions.
  Also without loss of generality, we consider one scalar component (dimension) $l$ of $y$, and abuse the use of symbols $y$ and $g$ for $y_l$ and $g_l$ to avoid unnecessary complication.
  Note that for continuous $y$, due to the additive noise structure $y = g(s) + \nu$ and that $\nu$ has zero mean, we also have $\bbE[y|s] = g(s)$ as the same as the categorical $y$ case (under the one-hot representation).
  We sometimes denote $z := (s,v)$ for convenience.

  First note that for both \ourmodels and both continuous and categorical $y$, by construction $g(s)$ is a sufficient statistics of $p(y|s)$ (not only the expectation $\bbE[y|s]$), and it is injective.
  So by Lemma~\ref{lemma:nomix}, we only need to show that there exists a reparameterization from $p$ to $p'$.
  We will show that $\Phi := f'^{-1} \circ f$ is such a reparameterization.

  Since $f$ and $f'$ are bijective and continuous, we have $\Phi^{-1} = f^{-1} \circ f'$, so $\Phi$ is bijective and $\Phi$ and $\Phi^{-1}$ are continuous.
  So $\Phi$ is a homeomorphism.
  Also, by construction, we have:
  \begin{align}
    p(x|z) = p_\mu(x - f(z)) = p_\mu(x - f'(f'^{-1}(f(z)))) = p_\mu(x - f'(\Phi(z))) = p'(x|\Phi(z)). \label{eqn:x-convert}
  \end{align}
  So we only need to show that $p(x,y) = p'(x,y)$ indicates $\Phi_\#[p_z] = p'_z$ and $p(y|s) = p'(y|\Phi^\clS(s,v)), \forall v\in\clV$ under the conditions.

  \paragraph{Proof under condition \bfi.}
  We begin with a useful reformulation of the integral $\int t(z) p(x|z) \dd z$ for a general function $t$ of $z$.
  We will encounter integrals in this form.
  By the additive noise Assumption~\ref{assm:anm-bij}, we have $p(x|z) = p_\mu(x - f(z))$, so we consider a transformation $\Psi_x(z) := x - f(z)$ and let $\mu = \Psi_x(z)$.
  It is invertible, $\Psi_x^{-1}(\mu) = f^{-1}(x - \mu)$, and $J_{\Psi_x^{-1}}(\mu) = -J_{f^{-1}}(x - \mu)$.
  By these definitions and the rule of change of variables, we have:
  \begin{align}
    \int t(z) p(x|z) \dd z ={} & \int t(z) p_\mu(\Psi_x(z)) \dd z
    =     \int t(\Psi_x^{-1}(\mu)) p(\mu) \lrvert{J_{\Psi_x^{-1}} (\mu)} \dd \mu \\
    ={} & \int t(f^{-1}(x - \mu)) p(\mu) \lrvert{J_{f^{-1}} (x - \mu)} \dd \mu \\
    ={} & \bbE_{p(\mu)} [(\ttb V) (x - \mu)] \label{eqn:useful-expc} \\
    ={} & (f_\#[t] * p_\mu) (x), \label{eqn:useful-conv}
  \end{align}
  where we have denoted functions $\ttb := t \circ f^{-1}$, $V := \lrvert{J_{f^{-1}}}$, and abused the push-forward notation $f_\#[t]$ for a general function $t$ to formally denote $(t \circ f^{-1}) \lrvert{J_{f^{-1}}} = \ttb V$.

  According to the graphical structure of \ourmodel, we have:
  \begin{align}
    p(x) ={} & \int p(z) p(x|z) \dd z, \label{eqn:px-raw} \\
    \bbE[y|x] ={} & \frac{1}{p(x)} \int y p(x,y) \dd y = \frac{1}{p(x)} \iint y p(z) p(x|z) p(y|s) \dd z \ud y \\
    ={} & \frac{1}{p(x)} \int p(z) p(x|z) \bbE[y|s] \dd z = \frac{1}{p(x)} \int g(s) p(z) p(x|z) \dd z. \label{eqn:Ey1x-raw}
  \end{align}
  So from \eqref{eqn:useful-conv}, we have:
  \begin{align}
    & p(x) = (f_\#[p_z] * p_\mu) (x), 
    & \bbE[y|x] = \frac{1}{p(x)} (f_\#[g p_z] * p_\mu) (x). \label{eqn:Ey1x-conv}
  \end{align}
  Matching the data distribution $p(x,y) = p'(x,y)$ indicates both $p(x) = p'(x)$ and $\bbE[y|x] = \bbE'[y|x]$.
  Using Lemma~\ref{lem:conveq} under condition \bfi, this further indicates:
  \begin{align}
    & f_\#[p_z] = f'_\#[p'_z] \text{ a.e.},
    & f_\#[g p_z] = f'_\#[g' p'_z] \text{ a.e.},
  \end{align}
  given that $p$ and $p'$ have the same $p_\mu$.
  The former indicates $\Phi_\#[p_z] = p'_z$.
  The latter can be reformed as $\ggb f_\#[p_z] = \ggb' f'_\#[p'_z]$ a.e., so $\ggb = \ggb'$ a.e.,
  where we have denoted $\ggb := g \circ (f^{-1})^\clS$ and $\ggb' := g' \circ (f'^{-1})^\clS$ similarly.
  From $\ggb = \ggb'$, we have for any $v\in\clV$,
  \begin{align}
    g(s) ={} & g( (f^{-1} \circ f)^\clS(s,v) ) = g( (f^{-1})^\clS (f(s,v)) ) = \ggb(f(s,v)) \\
    ={} & \ggb'(f(s,v)) = g'( (f'^{-1})^\clS (f(s,v)) ) = g'(\Phi^\clS(s,v)). \label{eqn:g-convert}
  \end{align}
  For both continuous and categorical $y$, $g(s)$ uniquely determines $p(y|s)$.
  So the above equality means that $p(y|s) = p'(y|\Phi^\clS(s,v))$ for any $v\in\clV$.

  \paragraph{Proof under condition \bfii.}
  Applying \eqref{eqn:useful-expc} to Eqs.~(\ref{eqn:px-raw},~\ref{eqn:Ey1x-raw}) (or expanding \eqref{eqn:Ey1x-conv}), we have:
  \begin{align}
    & p(x) = \bbE_{p(\mu)} [(\ppb_z V) (x - \mu)], 
    & \bbE[y|x] = \frac{1}{p(x)} \bbE_{p(\mu)} [(\ggb \ppb_z V) (x - \mu)], \label{eqn:Ey1x-expc}
  \end{align}
  where we have similarly denoted $\ppb_z := p_z \circ f^{-1}$.
  Under condition \bfii, $\bbE[\mu\trs \mu]$ is infinitesimal, so we can expand the expressions w.r.t $\mu$.
  For $p(x)$, we have:
  \begin{align}
    p(x) ={} & \bbE_{p(\mu)} \big[ \ppb_z V - \nabla (\ppb_z V)\trs \mu 
    + \frac{1}{2} \mu\trs \nabla\nabla\trs (\ppb_z V) \mu + O(\bbE[\lrVert{\mu}_2^3]) \big] \\
    ={} & \ppb_z V + \frac{1}{2} \bbE_{p(\mu)} \big[ \mu\trs \nabla\nabla\trs (\ppb_z V) \mu \big] + O(\sigma_\mu^3),
    \label{eqn:px-expd}
  \end{align}
  where all functions are evaluated at $x$.
  For $\bbE[y|x]$, we first expand $1/p(x)$ using $\frac{1}{x+\varepsilon} = \frac{1}{x} - \frac{\varepsilon}{x^2} + O(\varepsilon^2)$ to get:
    $\frac{1}{p(x)} = \frac{1}{\ppb_z V} - \frac{1}{2 \ppb_z^2 V^2} \bbE_{p(\mu)} \big[ \mu\trs \nabla\nabla\trs (\ppb_z V) \mu \big] + O(\sigma_\mu^3)$. 
  The second term is expanded as:
    $\ggb \ppb_z V + \frac{1}{2} \bbE_{p(\mu)} \big[ \mu\trs \nabla\nabla\trs (\ggb \ppb_z V) \mu \big] + O(\sigma_\mu^3)$.
  Combining the two parts, we have:
  \begin{align}
    \bbE[y|x] ={} & \ggb + \frac{1}{2} \bbE_{p(\mu)} \big[ \mu\trs \big( (\nabla \log \ppb_z V) \nabla \ggb \trs
    + \nabla \ggb (\nabla \log \ppb_z V)\trs + \nabla\nabla\trs \ggb \big) \mu \big] + O(\sigma_\mu^3).
    \label{eqn:Ey1x-expd}
  \end{align}
  This equation holds for any $x\in\supp(p_x)$ since the expectation is taken w.r.t the distribution $p(x,y)$.
  Since $p(x,y) = p'(x,y)$, the considered $x$ here is any value generated by the model.
  So up to $O(\sigma_\mu^2)$,
  \begin{align}
    \lrvert{ p(x) - (\ppb_z V)(x) }
    ={} & \frac{1}{2} \lrvert{ \bbE_{p(\mu)} \big[ \mu\trs \nabla\nabla\trs (\ppb_z V) \mu \big] }
    \le \frac{1}{2} \bbE_{p(\mu)} \big[ \lrvert{ \mu\trs \nabla\nabla\trs (\ppb_z V) \mu } \big] \\
    \le{} & \frac{1}{2} \bbE_{p(\mu)} \big[ \lrVert{\mu}_2 \lrVert{\nabla\nabla\trs (\ppb_z V)}_2 \lrVert{\mu}_2 \big]
    = \frac{1}{2} \bbE[\mu\trs \mu] \lrVert{ \nabla\nabla\trs (\ppb_z V) }_2 \\
    ={} & \frac{1}{2} \bbE[\mu\trs \mu] \lrvert{\ppb_z V} \lrVert{ \nabla\nabla\trs \log \ppb_z V + (\nabla \log \ppb_z V) (\nabla \log \ppb_z V)\trs }_2 \\
    \le{} & \frac{1}{2} \bbE[\mu\trs \mu] \lrvert{\ppb_z V} \big( \lrVert{ \nabla\nabla\trs \log \ppb_z V }_2 + \lrVert{\nabla \log \ppb_z V}_2^2 \big),
    \label{eqn:px-resd} \\
    \lrvert{ \bbE[y|x] - \ggb(x) }
    ={} & \frac{1}{2} \Big\vert \bbE_{p(\mu)} \big[ \mu\trs \big( (\nabla \log \ppb_z V) \nabla \ggb \trs
    + \nabla \ggb (\nabla \log \ppb_z V)\trs + \nabla\nabla\trs \ggb \big) \mu \big] \Big\vert \\
    \le{} & \frac{1}{2} \bbE_{p(\mu)} \big[ \big\vert \mu\trs \big( (\nabla \log \ppb_z V) \nabla \ggb \trs
    + \nabla \ggb (\nabla \log \ppb_z V)\trs + \nabla\nabla\trs \ggb \big) \mu \big\vert \big] \\
    \le{} & \frac{1}{2} \bbE_{p(\mu)} \big[ \lrVert{\mu}_2 \big\Vert (\nabla \log \ppb_z V) \nabla \ggb \trs
    + \nabla \ggb (\nabla \log \ppb_z V)\trs + \nabla\nabla\trs \ggb \big\Vert_2 \lrVert{\mu}_2 \big] \\
    \le{} & \frac{1}{2} \bbE[\mu\trs \mu] \big( \lrVert{(\nabla \log \ppb_z V) \nabla \ggb \trs}_2
    + \lrVert{\nabla \ggb (\nabla \log \ppb_z V)\trs}_2 + \lrVert{\nabla\nabla\trs \ggb}_2 \big) \\
    ={} & \bbE[\mu\trs \mu] \Big( \lrvert{ (\nabla \log \ppb_z V)\trs \nabla \ggb } + \frac{1}{2} \lrVert{\nabla\nabla\trs \ggb}_2 \Big).
    \label{eqn:Ey1x-resd}
  \end{align}
  Given the bounding conditions in the theorem, the multiplicative factors to $\bbE[\mu\trs \mu]$ in the last expressions are bounded by a constant.
  So when $\frac{1}{\sigma_\mu^2} \to \infty$, \ie $\bbE[\mu\trs \mu] \to 0$, we have $p(x)$ and $\bbE[y|x]$ converge uniformly to $(\ppb_z V)(x) = f_\#[p_z] (x)$ and $\ggb(x)$, respectively.
  So $p(x,y) = p'(x,y)$ indicates $f_\#[p_z] = f'_\#[p'_z]$ and $\ggb = \ggb'$, which means $\Phi_\#[p_z] = p'_z$ and $p(y|s) = p'(y|\Phi^\clS(s,v))$ for any $v\in\clV$, due to \eqref{eqn:g-convert} and the explanation that follows.

  \paragraph{Proof under condition \bfiii.}
  We only need to show that when $\frac{1}{\sigma_\mu^2}$ is much larger than the given quantity, we still have $p(x,y) = p'(x,y) \Longrightarrow \ppb_z V = \ppb_z' V', \ggb = \ggb'$ up to a negligible effect.
  This task amounts to showing that the residuals $\lrvert{p(x) - (\ppb_z V)(x)}$ and $\lrvert{\bbE[y|x] - \ggb(x)}$ controlled by Eqs.~(\ref{eqn:px-resd},~\ref{eqn:Ey1x-resd}) are negligible.
  To achieve this, we need to further expand the controlling functions using derivatives of $f$, $g$ and $p_z$ explicitly, and bound them by the bounding constants.
  In the following, we use indices $a,b,c$ for the components of $x$ and $i,j,k$ for those of $z$.
  For functions of $z$ appearing in the following (\eg, $f$, $g$, $p_z$ and their derivatives), they are evaluated at $z = f^{-1}(x)$ since we are bounding functions of $x$.

  \bfone Bounding $\lrvert{\bbE[y|x] - \ggb(x)} \le \bbE[\mu\trs \mu] \big( \lrvert{ (\nabla \log \ppb_z V)\trs \nabla \ggb } + \frac{1}{2} \lrVert{\nabla\nabla\trs \ggb}_2 \big)$ from \eqref{eqn:Ey1x-resd}.

  From the chain rule of differentiation, it is easy to show that:
  \begin{align}
    & \nabla \log \ppb_z = J_{f^{-1}} \nabla \log p_z,
    & \nabla \ggb = J_{(f^{-1})^\clS} \nabla g = J_{f^{-1}} \nabla_z g,
    \label{eqn:grad-logp-g}
  \end{align}
  where $\nabla_z g = (\nabla g\trs, 0_{d_\clV}\trs)\trs$ (recall that $g$ is a function only of $s$).
  For the term $\nabla \log V$, we apply Jacobi's formula for the derivative of the log-determinant:
  \begin{align}
    \partial_a \log V(x) ={} & \partial_a \log \lrvert{J_{f^{-1}}(x)}
    = \tr\Big( J_{f^{-1}}^{-1}(x) \big( \partial_a J_{f^{-1}}(x) \big) \Big)
    = \sum_{b,i} J_{f^{-1}}^{-1} (x)_{ib} \big( \partial_a J_{f^{-1}} (x)_{bi} \big) \\
    ={} & \sum_{b,i} J_f (f^{-1}(x))_{ib} \partial_b \partial_a f^{-1}_i(x)
    = \sum_i \big( J_f (\nabla\nabla\trs f^{-1}_i) \big)_{ia}.
    \label{eqn:grad-logv-orig}
  \end{align}
  However, as bounding \eqref{eqn:grad-logp-g} already requires bounding $\lrVert{J_{f^{-1}}}_2$, directly using this expression to bound $\lrVert{\nabla \log V}_2$ would require to also bound $\lrVert{J_f}_2$.
  This requirement to bound the first-order derivatives of both $f$ and $f^{-1}$ is a relatively restrictive one.
  To ease the requirement, we would like to express $\nabla \log V$ in terms of $J_{f^{-1}}$.
  This can be achieved by expressing $\nabla\nabla\trs f^{-1}_i$'s in terms of $\nabla\nabla\trs f_c$'s.
  To do this, first consider a general invertible-matrix-valued function $A(\alpha)$ on a scalar $\alpha$.
  We have $0 = \partial_\alpha \big( A(\alpha)^{-1} A(\alpha) \big) = (\partial_\alpha A^{-1}) A + A^{-1} \partial_\alpha A$,
  so we have $A^{-1} \partial_\alpha A = -(\partial_\alpha A^{-1}) A$,
  consequently $\partial_\alpha A = - A (\partial_\alpha A^{-1}) A$.
  Using this relation (in the fourth equality below), we have:
  \begin{align}
    & \big( \nabla\nabla\trs f^{-1}_i \big)_{ab} = \partial_a \partial_b f^{-1}_i
    = \partial_a \big( J_{f^{-1}} \big)_{bi} = \big( \partial_a J_{f^{-1}} \big)_{bi} \\
    ={} & -\Big( J_{f^{-1}} (\partial_a J_{f^{-1}}^{-1}) J_{f^{-1}} \Big)_{bi}
    = -\Big( J_{f^{-1}} \big( \partial_a J_f \big) J_{f^{-1}} \Big)_{bi} \\
    ={} & -\sum_{jc} (J_{f^{-1}})_{bj} \big( \partial_a (\partial_j f_c) \big) (J_{f^{-1}})_{ci}
    = -\sum_{jck} (J_{f^{-1}})_{bj} (\partial_k \partial_j f_c) (\partial_a f^{-1}_k) (J_{f^{-1}})_{ci} \\
    ={} & -\sum_c (J_{f^{-1}})_{ci} \sum_{jk} (J_{f^{-1}})_{bj} (\partial_k \partial_j f_c) (J_{f^{-1}})_{ak}
    = -\sum_c (J_{f^{-1}})_{ci} \big( J_{f^{-1}} (\nabla\nabla\trs f_c) J_{f^{-1}}\trs \big)_{ab},
  \end{align}
  or in matrix form,
  \begin{align}
    \nabla\nabla\trs f^{-1}_i = -\sum_c (J_{f^{-1}})_{ci} J_{f^{-1}} (\nabla\nabla\trs f_c) J_{f^{-1}}\trs =: -\sum_c (J_{f^{-1}})_{ci} K^c,
    \label{eqn:hess-finv}
  \end{align}
  where we have defined the matrix $K^c := J_{f^{-1}} (\nabla\nabla\trs f_c) J_{f^{-1}}\trs$ which is symmetric.
  Substituting with this result, we can transform \eqref{eqn:grad-logv-orig} into a desired form:
  \begin{align}
    \nabla \log V(x) ={} & \sum_i \big( J_f (\nabla\nabla\trs f^{-1}_i) \big)_{i:}\trs
    = -\sum_i \Big( J_f \sum_c (J_{f^{-1}})_{ci} J_{f^{-1}} (\nabla\nabla\trs f_c) J_{f^{-1}}\trs \Big)_{i:}\trs \\
    ={} & -\sum_i \Big( \sum_c (J_{f^{-1}})_{ci} J_f J_f^{-1} (\nabla\nabla\trs f_c) J_{f^{-1}}\trs \Big)_{i:}\trs
    = -\sum_{ci} (J_{f^{-1}})_{ci} \Big( (\nabla\nabla\trs f_c) J_{f^{-1}}\trs \Big)_{i:}\trs \\
    ={} & -\sum_c \Big( J_{f^{-1}} (\nabla\nabla\trs f_c) J_{f^{-1}}\trs \Big)_{c:}\trs
    = -\sum_c (K^c_{c:})\trs
    = -\sum_c K^c_{:c},
    \label{eqn:grad-logv}
  \end{align}
  so its norm can be bounded by:
  \begin{align}
    \lrVert{\nabla \log V(x)}_2 ={} & \Big\Vert \sum_c K^c_{c:} \Big\Vert_2 = \Big\Vert \sum_c (J_{f^{-1}})_{c:} (\nabla\nabla\trs f_c) J_{f^{-1}}\trs \Big\Vert_2 \\
    \le{} & \sum_c \lrVert{(J_{f^{-1}})_{c:}}_2 \lrVert{\nabla\nabla\trs f_c}_2 \lrVert{J_{f^{-1}}}_2
    \le B''_f B'_{f^{-1}} \sum_c \lrVert{(J_{f^{-1}})_{c:}}_2 \\
    \le{} & d B'^2_{f^{-1}} B''_f,
    \label{eqn:gradnorm-logv}
  \end{align}
  where we have used the following result in the last inequality:
  \begin{align}
    \sum_c \lrVert{(J_{f^{-1}})_{c:}}_2
    \le d^{1/2} \sqrt{\sum_c \lrVert{(J_{f^{-1}})_{c:}}_2^2}\
    = d^{1/2} \lrVert{J_{f^{-1}}}_F
    \le d \lrVert{J_{f^{-1}}}_2 \le d B'_{f^{-1}}.
    \label{eqn:gradnorm-sumJrows}
  \end{align}
  Integrating \eqref{eqn:grad-logp-g} and \eqref{eqn:gradnorm-logv}, we have:
  \begin{align}
    \lrvert{(\nabla \log \ppb_z V)\trs \nabla \ggb}
    ={} & (J_{f^{-1}} \nabla \log p_z + \nabla \log V)\trs J_{f^{-1}} \nabla_z g \\
    \le{} & \big( \lrVert{J_{f^{-1}}}_2 \lrVert{\nabla \log p_z}_2 + \lrVert{\nabla \log V}_2 \big) \lrVert{J_{f^{-1}}} \lrVert{\nabla g}_2 \\
    \le{} & \big( B'_{f^{-1}} B'_{\log p} + d B'^2_{f^{-1}} B''_f \big) B'_{f^{-1}} B'_g \\
    ={} & \big( B'_{\log p} + d B'_{f^{-1}} B''_f \big) B'^2_{f^{-1}} B'_g.
    \label{eqn:bound-1.1}
  \end{align}
  For the Hessian of $\ggb$, direct calculus gives:
  \begin{align}
    \nabla\nabla\trs \ggb ={} & J_{(f^{-1})^\clS} (\nabla\nabla\trs g) J_{(f^{-1})^\clS}\trs + \sum_{i=1}^{d_\clS} (\nabla g)_{s_i} (\nabla\nabla\trs f^{-1}_{s_i}) \\
    ={} & J_{f^{-1}} (\nabla_z\nabla_z\trs g) J_{f^{-1}}\trs + \sum_i (\nabla_z g)_i (\nabla\nabla\trs f^{-1}_i).
    \label{eqn:hess-ggb-orig}
  \end{align}
  To avoid the requirement of bounding both $\nabla\nabla\trs f_c$'s and $\nabla\nabla\trs f^{-1}_i$'s, we substitute $\nabla\nabla\trs f^{-1}_i$ using \eqref{eqn:hess-finv}:
  \begin{align}
    \nabla\nabla\trs \ggb
    ={} & J_{f^{-1}} (\nabla_z\nabla_z\trs g) J_{f^{-1}}\trs
    - \sum_i (\nabla_z g)_i \sum_c (J_{f^{-1}})_{ci} K^c \\
    ={} & J_{f^{-1}} (\nabla_z\nabla_z\trs g) J_{f^{-1}}\trs
    - \sum_c \Big( (J_{f^{-1}})_{c,:} (\nabla_z g) \Big) K^c.
    \label{eqn:hess-ggb}
  \end{align}
  So its norm can be bounded by:
  \begin{align}
    \lrVert{\nabla\nabla\trs \ggb}_2
    \le{} & \lrVert{J_{f^{-1}}}_2^2 \lrVert{\nabla\nabla\trs g}_2 + \sum_c \lrvert{(J_{f^{-1}})_{c:} (\nabla_z g)} \lrVert{K^c}_2 \\
    \le{} & B'^2_{f^{-1}} B''_g + \sum_c \lrvert{(J_{f^{-1}})_{c:} (\nabla_z g)} B'^2_{f^{-1}} B''_f \\
    \le{} & B'^2_{f^{-1}} \Big( B''_g + B''_f \sum_c \lrVert{(J_{f^{-1}})_{c:}}_2 \lrVert{\nabla_z g}_2 \Big) \\
    \le{} & B'^2_{f^{-1}} \Big( B''_g + B''_f B'_g \sum_c \lrVert{(J_{f^{-1}})_{c:}}_2 \Big) \\
    \le{} & B'^2_{f^{-1}} \Big( B''_g + d B'_{f^{-1}} B''_f B'_g \Big),
    \label{eqn:bound-1.2}
  \end{align}
  where we have used \eqref{eqn:gradnorm-sumJrows} in the last inequality.
  Assembling \eqref{eqn:bound-1.1} and \eqref{eqn:bound-1.2} into \eqref{eqn:Ey1x-resd}, we have:
  \begin{align}
    \lrvert{ \bbE[y|x] - \ggb(x) }
    \le \bbE[\mu\trs \mu] B'^2_{f^{-1}} \big( B'_{\log p} B'_g  + \frac{1}{2} B''_g + \frac{3}{2} d B'_{f^{-1}} B''_f B'_g \big).
    \label{eqn:bound-1.3}
  \end{align}
  So given the condition \bfiii, this residual can be neglected.

  \bftwo Bounding $\lrvert{p(x) - (\ppb_z V)(x)} \le \frac{1}{2} \bbE[\mu\trs \mu] \lrvert{\ppb_z V} \big( \lrVert{\nabla \log \ppb_z V}_2^2 + \lrVert{ \nabla\nabla\trs \log \ppb_z }_2 + \lrVert{ \nabla\nabla\trs \log V }_2 \big)$ from \eqref{eqn:px-resd}.

  To begin with, for any $x$, $\ppb_z(x) = p_z(f^{-1}(x)) \le B_p$, and $V(x) = \lrvert{J_{f^{-1}}(x)}$ is the product of absolute eigenvalues of $J_{f^{-1}}(x)$.
Since $\lrVert{J_{f^{-1}}(x)}_2$ is the largest absolute eigenvalue of $J_{f^{-1}}(x)$, so $V(x) \le \lrVert{J_{f^{-1}}(x)}_2^{d} \le B'^{d}_{f^{-1}}$.

  For the first norm in the bracket of the r.h.s of \eqref{eqn:px-resd}, we have:
  \begin{align}
    \lrVert{\nabla \log \ppb_z V}_2^2
    ={} & \lrVert{\nabla \log \ppb_z}_2^2 + 2 (\nabla \log \ppb_z)\trs \nabla \log V + \lrVert{\nabla \log V}_2^2 \\
    \le{} & \lrVert{\nabla \log \ppb_z}_2^2 + 2 \lrVert{\nabla \log \ppb_z}_2 \lrVert{\nabla \log V}_2 + \lrVert{\nabla \log V}_2 \\
    \le{} & B'^2_{f^{-1}} B'^2_{\log p} + 2 d B'^3_{f^{-1}} B''_f B'_{\log p} + \lrVert{\nabla \log V}_2^2,
    \label{eqn:bound-2.1-orig}
  \end{align}
  where we have utilized \eqref{eqn:grad-logp-g} and \eqref{eqn:gradnorm-logv} in the last inequality.
  We consider bounding $\lrVert{\nabla \log V}_2^2$ separately.
  Using \eqref{eqn:grad-logv} (in the second equality below), we have:
  \begin{align}
    \lrVert{\nabla \log V}_2^2
    ={} & \lrvert{(\nabla \log V)\trs (\nabla \log V)}
    = \Big\vert \sum_c (K^c_{:c})\trs \sum_d K^d_{:d} \Big\vert \\
    ={} & \Big\vert \sum_{cd} K^c_{c:} K^d_{:d} \Big\vert
    \le \sum_{cd} \lrvert{K^c_{c:} K^d_{:d}} \\
    ={} & \sum_{cd} \lrvert{(J_{f^{-1}})_{c:} (\nabla\nabla\trs f_c) J_{f^{-1}}\trs J_{f^{-1}} (\nabla\nabla\trs f_d) (J_{f^{-1}})_{d:}\trs} \\
    \le{} & \sum_{cd} \lrvert{(J_{f^{-1}})_{c:} (J_{f^{-1}})_{d:}\trs} \lrVert{(\nabla\nabla\trs f_c) J_{f^{-1}}\trs J_{f^{-1}} (\nabla\nabla\trs f_d)}_2 \\
    \le{} & \sum_{cd} \lrvert{(J_{f^{-1}})_{c:} (J_{f^{-1}})_{d:}\trs} B'^2_{f^{-1}} B''^2_f
    = B'^2_{f^{-1}} B''^2_f \sum_{cd} \lrvert{(J_{f^{-1}} J_{f^{-1}}\trs)_{cd}} \\
    \le{} & d^{3/2} B'^2_{f^{-1}} B''^2_f \lrVert{J_{f^{-1}} J_{f^{-1}}\trs}_2
    \le d^{3/2} B'^4_{f^{-1}} B''^2_f,
    \label{eqn:bound-2.1.1}
  \end{align}
  where we have used the facts for general matrix $A$ and (column) vectors $\alpha, \beta$ that
  \begin{align}
    \lrvert{\alpha\trs A \beta} = \lrVert{\alpha (A \beta)\trs}_2 = \lrVert{\alpha \beta\trs A\trs}_2 \le \lrVert{\alpha \beta\trs}_2 \lrVert{A}_2 = \lrvert{\alpha\trs \beta} \lrVert{A}_2
    \label{eqn:fact-1}
  \end{align}
  in the fifth last inequality, and that
  \begin{align}
    \sum_{cd} \lrvert{A_{cd}} \le \sqrt{d^2} \sqrt{\sum_{cd} \lrvert{A_{cd}}^2} = d \lrVert{A}_F \le d^{3/2} \lrVert{A}_2
    \label{eqn:fact-2}
  \end{align}
  in the second last inequality.
  Substituting \eqref{eqn:bound-2.1.1} into \eqref{eqn:bound-2.1-orig}, we have:
  \begin{align}
    \lrVert{\nabla \log \ppb_z V}_2^2
    \le{} & B'^2_{f^{-1}} B'^2_{\log p} + 2 d B'^3_{f^{-1}} B''_f B'_{\log p} + d^{3/2} B'^4_{f^{-1}} B''^2_f.
    \label{eqn:bound-2.1}
  \end{align}

  For the second norm in the bracket of the r.h.s of \eqref{eqn:px-resd}, similar to \eqref{eqn:bound-1.2}, we have:
  \begin{align}
    \lrVert{\nabla\nabla\trs \log \ppb_z}_2
    \le B'^2_{f^{-1}} \big( B''_{\log p} + d B'_{f^{-1}} B''_f B'_{\log p} \big).
    \label{eqn:bound-2.2}
  \end{align}

  The third norm $\lrVert{ \nabla\nabla\trs \log V }_2$ in the bracket of the r.h.s of \eqref{eqn:px-resd} needs some more effort.
  From \eqref{eqn:grad-logv}, we have $\partial_b \log V = -\sum_{cij} (J_{f^{-1}})_{ci} (\partial_i \partial_j f_c) (J_{f^{-1}})_{bj}$, thus
  \begin{align}
    \partial_a \partial_b \log V ={} & -\sum_{cij} \partial_a (J_{f^{-1}})_{ci} (\partial_i \partial_j f_c) (J_{f^{-1}})_{bj}
    - \sum_{cij} (J_{f^{-1}})_{ci} (\partial_i \partial_j f_c) \partial_a (J_{f^{-1}})_{bj} \\
    & {} - \sum_{cij} (J_{f^{-1}})_{ci} \partial_a (\partial_i \partial_j f_c) (J_{f^{-1}})_{bj} \\
    ={} & -\sum_{cij} (\partial_a \partial_c f^{-1}_i) (\partial_i \partial_j f_c) (J_{f^{-1}})_{bj}
    - \sum_{cij} (J_{f^{-1}})_{ci} (\partial_i \partial_j f_c) (\partial_a \partial_b f^{-1}_j) \\
    & {} - \sum_{cijk} (J_{f^{-1}})_{ci} (\partial_a f^{-1}_k) (\partial_k \partial_i \partial_j f_c) (J_{f^{-1}})_{bj} \\
    ={} & \sum_{cijd} (J_{f^{-1}})_{di} K^d_{ac} (\partial_i \partial_j f_c) (J_{^{-1}})_{bj}
    + \sum_{cijd} (J_{f^{-1}})_{ci} (\partial_i \partial_j f_c) (J_{f^{-1}})_{dj} K^d_{ab} \\*
    & {} - \sum_{cijk} (J_{f^{-1}})_{ci} (\partial_k \partial_i \partial_j f_c) (J_{f^{-1}})_{ak} (J_{f^{-1}})_{bj} \\
    ={} & \sum_{cd} K^d_{ac} K^c_{db}
    + \sum_{cd} K^c_{cd} K^d_{ab}
    - \sum_{cijk} (J_{f^{-1}})_{ci} (\partial_k \partial_i \partial_j f_c) (J_{f^{-1}})_{ak} (J_{f^{-1}})_{bj},
  \end{align}
  where we have used \eqref{eqn:hess-finv} in the third equality for the first two terms.
  In matrix form, we have:
  \begin{align}
    \nabla\nabla\trs \log V
    = \sum_{cd} K^d_{:c} K^c_{d:} + \sum_{cd} K^c_{cd} K^d
    - \sum_{cijk} (J_{f^{-1}})_{ci} (\partial_k \partial_i \partial_j f_c) (J_{f^{-1}})_{:k} (J_{f^{-1}})_{:j}\trs.
  \end{align}

  We now bound the norms of the three terms in turn.
  For the first term,
  \begin{align}
    & \Big\Vert \sum_{cd} K^d_{:c} K^c_{d:} \Big\Vert_2
    \le \sum_{cd} \lrVert{K^d_{:c} K^c_{d:}}_2
    = \sum_{cd} \lrvert{K^c_{d:} K^d_{:c}} \\
    ={} & \sum_{cd} \lrvert{ (J_{f^{-1}})_{d:} (\nabla\nabla\trs f_c) J_{f^{-1}}\trs J_{f^{-1}} (\nabla\nabla\trs f_d) (J_{f^{-1}})_{c:}\trs } \\
    \le{} & \sum_{cd} \lrvert{ (J_{f^{-1}})_{d:} (J_{f^{-1}})_{c:}\trs } \lrVert{ (\nabla\nabla\trs f_c) J_{f^{-1}}\trs J_{f^{-1}} (\nabla\nabla\trs f_d) }_2 \\
    \le{} & B'^2_{f^{-1}} B''^2_f \sum_{cd} \lrvert{(J_{f^{-1}} J_{f^{-1}}\trs)_{dc}}
    \le d^{3/2} B'^2_{f^{-1}} B''^2_f \lrVert{J_{f^{-1}} J_{f^{-1}}\trs}_2 \\
    \le{} & d^{3/2} B'^4_{f^{-1}} B''^2_f,
    \label{eqn:bound-2.3.1}
  \end{align}
  where we have used \eqref{eqn:fact-1} in the fourth last inequality and \eqref{eqn:fact-2} in the second last inequality.
  For the second term,
  \begin{align}
    & \Big\Vert \sum_{cd} K^c_{cd} K^d \Big\Vert_2
    \le \sum_{cd} \lrvert{K^c_{cd}} \lrVert{K^d}_2
    \le B'^2_{f^{-1}} B''_f \sum_{cd} \lrvert{K^c_{cd}} \\
    \le{} & d^{1/2} B'^2_{f^{-1}} B''_f \sum_c \sqrt{\sum_d \lrvert{K^c_{cd}}^2}
    = d^{1/2} B'^2_{f^{-1}} B''_f \sum_c \lrVert{K^c_{c:}}_2 \\
    \le{} & d^{1/2} B'^2_{f^{-1}} B''_f \sum_c \lrVert{(J_{f^{-1}})_{c:}}_2 \lrVert{(\nabla\nabla\trs f_c) J_{f^{-1}}\trs}_2
    \le d^{1/2} B'^3_{f^{-1}} B''^2_f \sum_c \lrVert{(J_{f^{-1}})_{c:}}_2 \\
    \le{} & d^{3/2} B'^4_{f^{-1}} B''^2_f,
    \label{eqn:bound-2.3.2}
  \end{align}
  where we have used \eqref{eqn:gradnorm-sumJrows} in the last inequality.
  For the third term,
  \begin{align}
    & \Big\Vert \sum_{cijk} (J_{f^{-1}})_{ci} (\partial_k \partial_i \partial_j f_c) (J_{f^{-1}})_{:k} (J_{f^{-1}})_{:j}\trs \Big\Vert_2 \\
    \le{} & \sum_{cijk} \lrvert{(J_{f^{-1}})_{ci} (\partial_k \partial_i \partial_j f_c)} \lrVert{(J_{f^{-1}})_{:k} (J_{f^{-1}})_{:j}\trs}_2
    \le B'''_f \sum_{ci} \lrvert{(J_{f^{-1}})_{ci}} \sum_{jk} \lrVert{(J_{f^{-1}})_{:k} (J_{f^{-1}})_{:j}\trs}_2 \\
    \le{} & d^{3/2} B'''_f \lrVert{J_{f^{-1}}}_2 \sum_{jk} \lrvert{(J_{f^{-1}})_{:k}\trs (J_{f^{-1}})_{:j}}
    \le d^{3/2} B'''_f B'_{f^{-1}} \sum_{jk} \lrvert{(J_{f^{-1}}\trs J_{f^{-1}})_{kj}} \\
    \le{} & d^3 B'''_f B'_{f^{-1}} \lrVert{J_{f^{-1}}\trs J_{f^{-1}}}_2
    \le d^3 B'''_f B'^3_{f^{-1}},
    \label{eqn:bound-2.3.3}
  \end{align}
  where we have used \eqref{eqn:fact-2} in the fourth last and second last inequalities.

  Finally, by assembling Eqs.~(\ref{eqn:bound-2.1}, \ref{eqn:bound-2.2}, \ref{eqn:bound-2.3.1}, \ref{eqn:bound-2.3.2}, \ref{eqn:bound-2.3.3}) into \eqref{eqn:px-resd}, we have:
  \begin{align}
    \lrvert{ p(x) - (\ppb_z V)(x) }
    \le{} & \frac{1}{2} \bbE[\mu\trs \mu] B_p B'^{d}_{f^{-1}} \big(
      B'^2_{f^{-1}} B'^2_{\log p} + 2 d B'^3_{f^{-1}} B''_f B'_{\log p} + d^{3/2} B'^4_{f^{-1}} B''^2_f \\
      & {} + B'^2_{f^{-1}} ( B''_{\log p} + d B'_{f^{-1}} B''_f B'_{\log p} )
      + 2 d^{3/2} B'^4_{f^{-1}} B''^2_f + d^3 B'''_f B'^3_{f^{-1}} \big) \\
    ={} & \frac{1}{2} \bbE[\mu\trs \mu] B_p B'^{d+2}_{f^{-1}} \big(
      B'^2_{\log p} + B''_{\log p} + 3 d B'_{f^{-1}} B''_f B'_{\log p} \\
      & {} + 3 d^{3/2} B'^2_{f^{-1}} B''^2_f + d^3 B'''_f B'_{f^{-1}} \big).
  \end{align}
  So given the condition \bfiii, this residual can be neglected.
\end{proof}

\subsection{Proof of the OOD Generalization Error Bound Thm.~\ref{thm:ood}} \label{supp:proofs-ood}

We give the following more detailed version of Thm.~\ref{thm:ood} and prove it.
The theorem in the main context corresponds to conclusion \bfii below (\ie, \eqref{eqn:ood-expc-orig-repr} 
below recovers \eqref{eqn:ood}),
by taking the \ourmodels $p'$, $p$ and $\ppt$,
as the semantic-identified \ourmodel $p$ on the training domain, and the ground-truth \ourmodels $p^*$ and $\ppt^*$ on the training and test domains, respectively.
In the theorem in the main context, the semantic-identification requirement on the learned \ourmodel $p$ is to guarantee that it is semantic-equivalent to the ground-truth \ourmodel $p^*$ on the training domain, 
so that the condition in conclusion \bfii below is satisfied.
\begin{theoremnum}{\ref*{thm:ood}'}[OOD generalization error] \label{thm:ood-formal}
  Let Assumption~\ref{assm:anm-bij} hold.
  \bfemi Consider two \ourmodels $p$ and $\ppt$ that share the same generative mechanisms $p(x|s,v)$ and $p(y|s)$ but have different priors $p_{s,v}$ and $\ppt_{s,v}$.
  Then up to $O(\sigma_\mu^2)$ where $\sigma_\mu^2 := \bbE[\mu\trs \mu]$, we have for any $x \in \supp(p_x)\cap\supp(\ppt_x)$,
  \begin{align}
    \lrvert{\bbE[y|x] - \bbEt[y|x]}
    \le{} & \sigma_\mu^2 \lrVert{\nabla g}_2 \lrVert{J_{f^{-1}}}_2^2 \lrVert{\nabla \log (p_{s,v}/\ppt_{s,v})}_2 \Big|_{(s,v)=f^{-1}(x)},
    \label{eqn:ood-same}
  \end{align}
  where $J_{f^{-1}}$ is the Jacobian of $f^{-1}$. 
  Further assume that the bounds $B$'s defined in \emph{Thm.~\ref{thm:id-formal}\bfiii} hold.
  Then the error is negligible for any $x \in \supp(p_x)\cap\supp(\ppt_x)$ if $\frac{1}{\sigma_\mu^2} \gg B'_{\log p} B'_g B'^2_{f^{-1}}$, and:
  \begin{align}
    \bbE_{\ppt(x)} \lrvert{\bbE[y|x] - \bbEt[y|x]}^2
    \le{} & \sigma_\mu^4 B'^2_g B'^4_{f^{-1}} \bbE_{\ppt_{s,v}} \lrVert{\nabla \log (p_{s,v}/\ppt_{s,v})}_2^2 \label{eqn:ood-expc-orig} \\
    ={} & \sigma_\mu^4 B'^2_g B'^4_{f^{-1}} \bbE_{\ppt_{s,v}} [2 \Delta \log p_{s,v} - \Delta \log \ppt_{s,v} + \lrVert{\nabla \log p_{s,v}}_2^2]
    \label{eqn:ood-expc}
  \end{align}
  if $\supp(p_x) = \supp(\ppt_x)$, where $\Delta$ denotes the Laplacian operator.

  \bfemii Let $p'$ be a \ourmodel that is semantic-equivalent to the \ourmodel $p$ introduced in \bfemi.
  Then up to $O(\sigma_\mu^2)$, we have for any $x \in \supp(p'_x)\cap\supp(\ppt_x)$,
  \begin{align}
    \lrvert{\bbE'[y|x] - \bbEt[y|x]}
    \le \sigma_\mu^2 \lrVert{\nabla g'}_2 \lrVert{J_{f'^{-1}}}_2^2 \lrVert{\nabla \log (p'_{s,v}/\ppt'_{s,v})}_2 \Big|_{(s,v)=f'^{-1}(x)},
    \label{eqn:ood-repr}
  \end{align}
  where $\ppt'_{s,v} := \Phi_\#[\ppt_{s,v}]$ is the prior of \ourmodel $\ppt$ under the parameterization of \ourmodel $p'$,
  derived as the pushed-forward distribution by the reparameterization $\Phi := f'^{-1} \circ f$ from $p$ to $p'$.
  Similarly,
  \begin{align}
    \bbE_{\ppt(x)} \lrvert{\bbE'[y|x] - \bbEt[y|x]}^2
    \le{} & \sigma_\mu^4 B'^2_g B'^4_{f^{-1}} \bbE_{\ppt'_{s,v}} \lrVert{\nabla \log (p'_{s,v}/\ppt'_{s,v})}_2^2 \label{eqn:ood-expc-orig-repr} \\
    ={} & \sigma_\mu^4 B'^2_g B'^4_{f^{-1}} \bbE_{\ppt'_{s,v}} [2 \Delta \log p'_{s,v} - \Delta \log \ppt'_{s,v} + \lrVert{\nabla \log p'_{s,v}}_2^2].
    \label{eqn:ood-expc-repr}
  \end{align}
\end{theoremnum}

In the expected OOD generalization error in Eqs.~(\ref{eqn:ood-expc},~\ref{eqn:ood-expc-repr}), the term $\bbE_{\ppt_{s,v}} [2 \Delta \log p_{s,v} - \Delta \log \ppt_{s,v} + \lrVert{\nabla \log p_{s,v}}_2^2]$ is actually the score matching objective (Fisher divergence)~\citep{hyvarinen2005estimation} that measures the difference between $\ppt_{s,v}$ and $p_{s,v}$.
For Gaussian priors $p(s,v) = \clN(0, \Sigma)$ and $\ppt(s,v) = \clN(0, \Sigmat)$, the term reduces to the matrix trace, $\tr(-2 \Sigma^{-1} + \Sigmat^{-1} + \Sigma^{-1} \Sigmat \Sigma^{-1})$.
For $\Sigma = \Sigmat$, the term vanishes.

For conclusion \bfii, note that since $p$ and $p'$ are semantic-equivalent, we have $p'_x = p_x$ and $\bbE'[y|x] = \bbE[y|x]$ (from Lemma~\ref{lem:repar-same-pxy}).
So Eqs.~(\ref{eqn:ood-same},~\ref{eqn:ood-repr}) and Eqs.~(\ref{eqn:ood-expc},~\ref{eqn:ood-expc-repr}) bound the same quantity.
Equation~(\ref{eqn:ood-repr}) expresses the bound using the structures of the \ourmodel $p'$.
It is considered since recovering the exact \ourmodel $p$ from $(x,y)$ data is impractical and we can only learn a \ourmodel $p'$ that is semantic-equivalent to $p$.

\begin{proof}
  Following the proof~\ref{supp:proofs-id} of Thm.~\ref{thm:id-formal}, we assume the additive noise variables $\mu$ and $\nu$ (for continuous $y$) have zero mean without loss of generality, and we denote $z := (s,v)$.

  \paragraph{Proof under condition \bfi.}
  Under the assumptions, we have \eqref{eqn:Ey1x-expd} in the proof~\ref{supp:proofs-id} of Thm.~\ref{thm:id-formal} hold.
  Noting that the two \ourmodels share the same $\ggb$ and $V$ (since they share the same $p(x|s,v)$ and $p(y|s)$ thus $f$ and $g$), we have for any $x \in \supp(p_x)\cap\supp(\ppt_x)$,
  \begin{align}
    \bbE[y|x] ={} & \ggb + \frac{1}{2} \bbE_{p(\mu)} \big[ \mu\trs \big( (\nabla \log \ppb_z V) \nabla \ggb \trs
    + \nabla \ggb (\nabla \log \ppb_z V)\trs + \nabla\nabla\trs \ggb \big) \mu \big] + O(\sigma_\mu^3),
    \label{eqn:Ey1x-expd-p} \\
    \bbEt[y|x] ={} & \ggb + \frac{1}{2} \bbE_{p(\mu)} \big[ \mu\trs \big( (\nabla \log \pptb_z V) \nabla \ggb \trs
    + \nabla \ggb (\nabla \log \pptb_z V)\trs + \nabla\nabla\trs \ggb \big) \mu \big] + O(\sigma_\mu^3),
    \label{eqn:Ey1x-expd-ppt}
  \end{align}
  where we have similarly defined $\pptb_z := \ppt_z \circ f^{-1}$.
  By subtracting the two equations, we have that up to $O(\sigma_\mu^2)$,
  \begin{align}
    \lrvert{\bbE[y|x] - \bbEt[y|x]}
    ={} & \frac{1}{2} \Big\vert \bbE_{p(\mu)} \big[ \mu\trs \big( \nabla \log (\ppb_z/\pptb_z)  \nabla \ggb \trs + \nabla \ggb  \nabla \log (\ppb_z/\pptb_z) \trs \big) \mu \big] \Big\vert \\
    \le{} & \frac{1}{2} \bbE_{p(\mu)} \big[ \big\vert \mu\trs \big( \nabla \log (\ppb_z/\pptb_z)  \nabla \ggb \trs + \nabla \ggb  \nabla \log (\ppb_z/\pptb_z) \trs \big) \mu \big\vert \big] \\
    \le{} & \frac{1}{2} \bbE_{p(\mu)} \big[ \lrVert{\mu}_2^2 \big( \lrVert{\nabla \log (\ppb_z/\pptb_z)  \nabla \ggb \trs}_2 + \lrVert{\nabla \ggb  \nabla \log (\ppb_z/\pptb_z) \trs}_2 \big) \big] \\
    ={} & \lrvert{ \nabla \ggb \trs \nabla \log (\ppb_z/\pptb_z) } \bbE[\mu\trs \mu].
    \label{eqn:ood-deduc-0}
  \end{align}
  The multiplicative factor to $\bbE[\mu\trs \mu]$ on the right hand side can be further bounded by:
  \begin{align}
    \lrvert{ \nabla \ggb \trs  \nabla \log (\ppb_z/\pptb_z) }
    ={} & \lrvert{ (J_{(f^{-1})^\clS} \nabla g)\trs (J_{f^{-1}} \nabla \log (p_z/\ppt_z)) } \\
    ={} & \lrvert{ \nabla g \trs J_{(f^{-1})^\clS}\trs J_{f^{-1}} \nabla \log (p_z/\ppt_z) } \\
    ={} & \lrvert{ ((\nabla g)\trs, 0_{d_\clV}\trs) J_{f^{-1}}\trs J_{f^{-1}} \nabla \log (p_z/\ppt_z) } \\
    \le{} & \lrVert{\nabla g}_2 \lrVert{J_{f^{-1}}}_2^2 \lrVert{\nabla \log (p_z/\ppt_z)}_2,
    \label{eqn:ood-deduc-1}
  \end{align}
  where $\nabla g$ and $\nabla \log (p_z/\ppt_z)$ are evaluated at $z = f^{-1}(x)$.
  This gives:
  \begin{align}
    \lrvert{\bbE[y|x] - \bbEt[y|x]}
    \le \sigma_\mu^2 \lrVert{\nabla g}_2 \lrVert{J_{f^{-1}}}_2^2 \lrVert{\nabla \log (p_z/\ppt_z)}_2,
  \end{align}
  \ie \eqref{eqn:ood-same} in conclusion \bfi.
  When the bounds $B$'s in Thm.~\ref{thm:id-formal}\bfiii hold, we further have:
  \begin{align}
    \lrvert{\bbE[y|x] - \bbEt[y|x]}
    \le{} & \sigma_\mu^2 \lrVert{\nabla g}_2 \lrVert{J_{f^{-1}}}_2^2 \lrVert{\nabla \log p_z - \nabla \log \ppt_z}_2 \\
    \le{} & \sigma_\mu^2 \lrVert{\nabla g}_2 \lrVert{J_{f^{-1}}}_2^2 (\lrVert{\nabla \log p_z}_2 + \lrVert{\nabla \log \ppt_z}_2) \\
    \le{} & 2 \sigma_\mu^2 B'_g B'^2_{f^{-1}} B'_{\log p}.
  \end{align}
  So when $\frac{1}{\sigma_\mu^2} \gg B'_{\log p} B'_g B'^2_{f^{-1}}$, this difference is negligible for any $x \in \supp(p_x)\cap\supp(\ppt_x)$.

  We now turn to the expected OOD generalization error \eqref{eqn:ood-expc} in conclusion \bfi.
  When $\supp(p_x) = \supp(\ppt_x)$, \eqref{eqn:ood-same} hold on $\ppt_x$.
  Together with the bounds in Thm.~\ref{thm:id-formal}\bfiii, we have:
  \begin{align}
    \bbE_{\ppt(x)} \lrvert{\bbE[y|x] - \bbEt[y|x]}^2
    \le{} & \sigma_\mu^4 B'^2_g B'^4_{f^{-1}} \bbE_{\ppt(x)} \lrVert{\nabla \log (p_z/\ppt_z) \big|_{z = f^{-1}(x)}}_2^2 \\
    ={} & \sigma_\mu^4 B'^2_g B'^4_{f^{-1}} \bbE_{\ppt_z} \lrVert{\nabla \log (p_z/\ppt_z)}_2^2,
  \end{align}
  where the equality holds due to the generating process of the model.
  Note that the term $\bbE_{\ppt_z} \lrVert{\nabla \log (p_z/\ppt_z)}_2^2$ therein is the score matching objective (Fisher divergence).
  By \citet[Thm.~1]{hyvarinen2005estimation}, we can reformulate it as $\bbE_{\ppt_z} [2 \Delta \log p_z - \Delta \log \ppt_z + \lrVert{\nabla \log p_z}_2^2]$, so we have:
  \begin{align}
    \bbE_{\ppt(x)} \lrvert{\bbE[y|x] - \bbEt[y|x]}^2
    \le{} & \sigma_\mu^4 B'^2_g B'^4_{f^{-1}} \bbE_{\ppt_z} [2 \Delta \log p_z - \Delta \log \ppt_z + \lrVert{\nabla \log p_z}_2^2].
  \end{align}

  \paragraph{Proof under condition \bfii.}
  From \eqref{eqn:Ey1x-expd} in the proof~\ref{supp:proofs-id} of Thm.~\ref{thm:id-formal}, we have for \ourmodel $p'$ that for any $x \in \supp(p'_x)$ or equivalently $x \in \supp(p_x)$,
  \begin{align}
    \bbE'[y|x] ={} & \ggb' + \frac{1}{2} \bbE_{p(\mu)} \! \big[ \mu\trs \! \big( (\nabla \log \ppb'_z V') \nabla \ggb'{} \trs \!
    + \nabla \ggb' (\nabla \log \ppb'_z V')\trs \! + \nabla\nabla\trs \ggb' \big) \mu \big] + O(\sigma_\mu^3), \!
    \label{eqn:Ey1x-expd-p'}
  \end{align}
  where we have similarly defined $\ppb'_z := p'_z \circ f'^{-1}$ and $\ggb' := g' \circ (f'^{-1})^\clS$.
  Since $p$ and $p'$ are semantic-equivalent with reparameterization $\Phi$ from $p$ to $p'$, we have $p(y|s) = p'(y|\Phi^\clS(s,v))$ thus $g(s) = g'(\Phi^\clS(s,v))$ for any $v\in\clV$.
  So for any $x \in \supp(p_x)$ or equivalently $x \in \supp(p'_x)$, we have $g((f^{-1})^\clS(x)) = g'(\Phi^\clS((f^{-1})^\clS(x), (f^{-1})^\clV(x))) = g'(\Phi^\clS(f^{-1}(x))) = g'((f'^{-1})^\clS(f(f^{-1}(x)))) = g'((f'^{-1})^\clS(x))$, \ie, $\ggb = \ggb'$.
  For another fact, since $\ppt'_z := \Phi_\#[\ppt_z] = (f'^{-1} \circ f)_\#[\ppt_z]$ by definition, we have $f'_\#[\ppt'_z] = f_\#[\ppt_z]$, \ie, $\pptb'_z V' = \pptb_z V$.
  Subtracting Eqs.~(\ref{eqn:Ey1x-expd-p'},~\ref{eqn:Ey1x-expd-ppt}) and applying these two facts, we have up to $O(\sigma_\mu^2)$, for any $x \in \supp(p'_x)\cap\supp(\ppt_x)$,
  \begin{align}
    \lrvert{\bbE'[y|x] - \bbEt[y|x]}
    ={} & \frac{1}{2} \Big\vert \bbE_{p(\mu)} \big[ \mu\trs \big( \nabla \log (\ppb'_z/\pptb'_z)  \nabla \ggb'{} \trs + \nabla \ggb'  \nabla \log (\ppb'_z/\pptb'_z) \trs \big) \mu \big] \Big\vert \\
    \le{} & \lrvert{ \nabla \ggb'{} \trs \nabla \log (\ppb'_z/\pptb'_z) } \bbE[\mu\trs \mu],
    \label{eqn:ood-deduc-0'}
  \end{align}
  where the inequality follows \eqref{eqn:ood-deduc-0}.
  Using a similar result of \eqref{eqn:ood-deduc-1}, we have:
  \begin{align}
    \lrvert{\bbE'[y|x] - \bbEt[y|x]}
    \le \sigma_\mu^2 \lrVert{\nabla g'}_2 \lrVert{J_{f'^{-1}}}_2^2 \lrVert{\nabla \log (p'_z/\ppt'_z)}_2,
  \end{align}
  where $\nabla g'$ and $\nabla \log (p'_z/\ppt'_z)$ are evaluated at $z = f'^{-1}(x)$.
  This gives \eqref{eqn:ood-repr}.
  Derivation of Eqs.~(\ref{eqn:ood-expc-orig-repr},~\ref{eqn:ood-expc-repr}) is similar as in conclusion \bfi.
\end{proof}

\subsection{Proof of the Domain Adaptation Error Thm.~\ref{thm:da}} \label{supp:proofs-da}

To be consistent with the notation in the proofs, we prove the theorem by denoting the semantic-identified \ourmodel $p$ and the ground-truth \ourmodel $\ppt^*$ on the test domain as $p'$ and $\ppt$, respectively.

\begin{proof}
  The new prior $\ppt'(z)$ is learned by fitting unsupervised data from the test domain $\ppt(x)$.
  Applying the deduction in the proof~\ref{supp:proofs-id} of Thm.~\ref{thm:id-formal} to the test domain, we have that under any of the three conditions in Thm.~\ref{thm:id-formal},
  $\ppt(x) = \ppt'(x)$ indicates $f_\#[\ppt_z] = f'_\#[\ppt'_z]$.
  This gives $\ppt'_z = (f'^{-1} \circ f)_\#[\ppt_z] = \Phi_\#[\ppt_z]$.

  From \eqref{eqn:Ey1x-conv} in the same proof, we have that:
  \begin{align}
    & \ppt(x) \bbEt[y|x] = (f_\#[g \ppt_z] * p_\mu)(x) = ((f_\#[\ppt_z] \ggb) * p_\mu)(x), \\
    & \ppt'(x) \bbEt'[y|x] = (f'_\#[g' \ppt'_z] * p_\mu)(x) = ((f'_\#[\ppt'_z] \ggb') * p_\mu)(x).
  \end{align}
  From the proof~\ref{supp:proofs-ood} of Thm.~\ref{thm:ood-formal}\bfii (the paragraph under \eqref{eqn:Ey1x-expd-p'}), the semantic-equivalence between \ourmodels $p$ and $p'$ indicates that $\ggb = \ggb'$.
  So from the above two equations, we have $\ppt(x) \bbEt[y|x] = \ppt'(x) \bbEt'[y|x]$ (recall that $\ppt(x) = \ppt'(x)$ indicates $f_\#[\ppt_z] = f'_\#[\ppt'_z]$).
  Since $\ppt(x) = \ppt'(x)$ (that is how $\ppt'_z$ is learned), we have for any $x \in \supp(\ppt_x)$ or equivalently $x \in \supp(\ppt'_x)$,
  \begin{align} \label{eqn:da-gen}
    \bbEt'[y|x] = \bbEt[y|x].
  \end{align}
\end{proof}

\section{Alternative Identifiability Theory for \ourmodel} \label{supp:id-delta}

The presented identifiability theory, particularly Thm.~\ref{thm:id}, shows that the semantic-identifiability can be achieved in the deterministic limit ($\frac{1}{\sigma_\mu^2} \to \infty$), but does not quantitatively describe the extent of violation of the identifiability for a finite variance $\sigma_\mu^2$.
Here we define a ``soft'' version of semantic-equivalence and show that it can be achieved with a finite variance, with a trade-off between the ``softness'' and the variance.
\begin{definition}[$\delta$-semantic-dependency]
  For $\delta > 0$ and two \ourmodels $p$ and $p'$, we say that they are $\delta$-semantic-dependent, if there exists a homeomorphism $\Phi$ on $\clS\times\clV$ such that:
  \bfi $p(x|s,v) = p'(x|\Phi(s,v))$,
  \bfii $\sup_{v\in\clV} \lrVert{g(s) - g'(\Phi^\clS(s,v))}_2 \le \delta$ where we have denoted $g(s) := \bbE[y|s]$, and
  \bfiii $\sup_{v^{(1)}, v^{(2)} \in \clV} \lrVert{\Phi^\clS(s, v^{(1)}) - \Phi^\clS(s, v^{(2)})}_2 \le \delta$.
  \label{def:equiv-delta}
\end{definition}

In the definition, we have released the prior conversion requirement, and relaxed the exact likelihood conversion for $p(y|s)$ in \bfii and the $v$-constancy of $\Phi^\clS$ in \bfiii to allow an error bounded by $\delta$.
When $\delta = 0$, the $v$-constancy of $\Phi^\clS$ is exact, and under the additive noise Assumption~\ref{assm:anm-bij} we also have the exact likelihood conversion $p(y|s) = p'(y|\Phi^\clS(s,v))$ for any $v\in\clV$.
So $0$-semantic-dependency with the prior conversion requirement reduces to the semantic-equivalence.

Due to the quantitative nature, the binary relation cannot be made an equivalence relation but only a dependency.
Here, a dependency refers to a binary relation with reflexivity and symmetry, but no transitivity.
\begin{proposition}
  The $\delta$-semantic-dependency is a dependency relation if the function $g := \bbE[y|s]$ is bijective and its inverse $g^{-1}$ is $\frac{1}{2}$-Lipschitz.
  \label{prop:equiv-delta}
\end{proposition}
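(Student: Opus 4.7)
A dependency relation requires reflexivity and symmetry only. Reflexivity is immediate: for any \ourmodel $p$, take $\Phi$ to be the identity on $\clS\times\clV$; then $p(x|s,v)=p(x|\Phi(s,v))$ trivially, and both $\sup_v \lrVert{g(s)-g(\Phi^\clS(s,v))}_2$ and $\sup_{v^{(1)},v^{(2)}} \lrVert{\Phi^\clS(s,v^{(1)})-\Phi^\clS(s,v^{(2)})}_2$ vanish.

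The interesting part is symmetry. Given that $p$ and $p'$ are $\delta$-semantic-dependent via some homeomorphism $\Phi$, the plan is to verify that $\Psi := \Phi^{-1}$ witnesses the $\delta$-semantic-dependency of $p'$ and $p$. Condition~\bfi is direct: for any $(s',v')\in\clS\times\clV$, setting $(s,v):=\Phi^{-1}(s',v')$ and using $p(x|s,v)=p'(x|\Phi(s,v))$ immediately yields $p'(x|s',v')=p(x|\Psi(s',v'))$. Condition~\bfii is equally direct: again writing $(s,v)=\Phi^{-1}(s',v')$ so that $\Psi^\clS(s',v')=s$ and $s'=\Phi^\clS(s,v)$, the inequality $\lrVert{g'(s')-g(\Psi^\clS(s',v'))}_2=\lrVert{g'(\Phi^\clS(s,v))-g(s)}_2\le\delta$ is exactly condition~\bfii for $\Phi$.

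The main obstacle, and the only place where the bijectivity of $g$ and the $\tfrac{1}{2}$-Lipschitz property of $g^{-1}$ are needed, is condition~\bfiii for $\Psi$. Here one fixes $s'\in\clS$ and two choices $v'^{(1)},v'^{(2)}\in\clV$, and sets $(s^{(i)},v^{(i)}):=\Phi^{-1}(s',v'^{(i)})$ so that $\Psi^\clS(s',v'^{(i)})=s^{(i)}$ and $\Phi^\clS(s^{(i)},v^{(i)})=s'$ for $i=1,2$. Applying condition~\bfii of $\Phi$ to each $i$ gives $\lrVert{g(s^{(i)})-g'(s')}_2\le\delta$, so the triangle inequality yields $\lrVert{g(s^{(1)})-g(s^{(2)})}_2\le 2\delta$. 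Bijectivity of $g$ lets us write $s^{(i)}=g^{-1}(g(s^{(i)}))$, and the $\tfrac{1}{2}$-Lipschitz property of $g^{-1}$ then gives $\lrVert{s^{(1)}-s^{(2)}}_2\le\tfrac{1}{2}\cdot 2\delta=\delta$, which is exactly condition~\bfiii for $\Psi$.

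Note that transitivity would fail: composing two $\delta$-semantic-dependencies via $\Phi'\circ\Phi$ only yields bounds of $2\delta$ in conditions~\bfii and~\bfiii, which is why the proposition asserts only a dependency (reflexivity and symmetry) rather than an equivalence relation. The Lipschitz constant $\tfrac{1}{2}$ is precisely calibrated so that the factor of~$2$ arising from the triangle inequality in the symmetry argument is absorbed, and no stronger regularity of $g^{-1}$ is required.
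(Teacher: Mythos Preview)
Your proof is correct and follows essentially the same approach as the paper: reflexivity via the identity, and symmetry via $\Psi=\Phi^{-1}$ with conditions \bfi and \bfii transferred directly, and condition \bfiii obtained by combining the already-established condition \bfii with the triangle inequality and the $\tfrac{1}{2}$-Lipschitz property of $g^{-1}$. The only cosmetic difference is that the paper phrases the key inequality in step \bfiii as ``$\Phi^{-1}$ satisfies requirement \bfii'' whereas you invoke ``condition \bfii of $\Phi$'' at the preimage points, but these yield exactly the same bound $\lVert g(s^{(i)})-g'(s')\rVert_2\le\delta$.
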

\begin{proof}
  Showing a dependency relation amounts to showing the following two properties.
  \begin{itemize}
    \item Reflexivity.
      For two identical \ourmodels $p$ and $p'$, we have $p(x|s,v) = p'(x|s,v)$ and $p(y|s) = p'(y|s)$.
      So the identity map as $\Phi$ obviously satisfies all the requirements in Def.~\ref{def:equiv-delta}.
    \item Symmetry.
      Let \ourmodel $p$ be $\delta$-semantic-dependent to \ourmodel $p'$ with homeomorphism $\Phi$.
      Obviously $\Phi^{-1}$ is also a homeomorphism.
      For any $(s',v') \in \clS\times\clV$, we have $p'(x|s',v') = p'(x|\Phi(\Phi^{-1}(s',v'))) = p(x|\Phi^{-1}(s',v'))$,
      and $\lrVert{g'(s') - g((\Phi^{-1})^\clS(s',v'))}_2 = \lrVert{g'(\Phi^\clS(s,v)) - g(s)}_2 \le \delta$ where we have denoted $(s,v) := \Phi^{-1}(s',v')$ here.
      So $\Phi^{-1}$ satisfies requirements \bfi and \bfii in Def.~\ref{def:equiv-delta}.

      For requirement \bfiii, we need the following fact:
      for any $s^{(1)}, s^{(2)} \in \clS$, $\lrVert{s^{(1)} - s^{(2)}}_2 = \lrVert{g^{-1}(g(s^{(1)})) - g^{-1}(g(s^{(2)}))}_2 \le \frac{1}{2} \lrVert{g(s^{(1)}) - g(s^{(2)})}_2$,
      where the inequality holds since $g^{-1}$ is $\frac{1}{2}$-Lipschitz.
      Then for any $s' \in \clS$, we have:
      \begin{align}
        & \sup_{v'^{(1)}, v'^{(2)} \in \clV} \lrVert{(\Phi^{-1})^\clS(s',v'^{(1)}) - (\Phi^{-1})^\clS(s',v'^{(2)})}_2 \\
        \le{} & \sup_{v'^{(1)}, v'^{(2)} \in \clV} \frac{1}{2} \lrVert{g\big( (\Phi^{-1})^\clS(s',v'^{(1)}) \big) - g\big( (\Phi^{-1})^\clS(s',v'^{(2)}) \big)}_2 \\
        ={} & \sup_{v'^{(1)}, v'^{(2)} \in \clV} \frac{1}{2} \lrVert{\Big( g\big( (\Phi^{-1})^\clS(s',v'^{(1)}) \big) - g'(s') \Big) - \Big( g\big( (\Phi^{-1})^\clS(s',v'^{(2)}) \big) - g'(s') \Big)}_2 \\
        \le{} & \sup_{v'^{(1)}, v'^{(2)} \in \clV} \frac{1}{2} \Big( \lrVert{g\big( (\Phi^{-1})^\clS(s',v'^{(1)}) \big) - g'(s')}_2 + \lrVert{g\big( (\Phi^{-1})^\clS(s',v'^{(2)}) \big) - g'(s')}_2 \Big) \\
        ={} & \frac{1}{2} \Big( \sup_{v'^{(1)}\in\clV} \lrVert{g\big( (\Phi^{-1})^\clS(s',v'^{(1)}) \big) - g'(s')}_2 + \sup_{v'^{(2)}\in\clV} \lrVert{g\big( (\Phi^{-1})^\clS(s',v'^{(2)}) \big) - g'(s')}_2 \Big) \\
        \le{} & \delta,
      \end{align}
      where in the last inequality we have used the fact that $\Phi^{-1}$ satisfies requirement \bfii.
      So $p'$ is $\delta$-semantic-dependent to $p$ via the homeomorphism $\Phi^{-1}$.
  \end{itemize}
\end{proof}

The corresponding $\delta$-semantic-identifiability result follows.
\begin{theorem}[$\delta$-semantic-identifiability]
  Assume the same as Thm.~\ref{thm:id-formal} and Prop.~\ref{prop:equiv-delta}, and let the bounds $B$'s defined in \emph{Thm.~\ref{thm:id-formal}\bfiii} hold.
  For two such \ourmodels $p$ and $p'$, if they have $p(x,y) = p'(x,y)$,
  then they are $\delta$-semantic-dependent for any $\delta \ge \sigma_\mu^2 B'^2_{f^{-1}} \big( 2 B'_{\log p} B'_g  + B''_g + 3 d B'_{f^{-1}} B''_f B'_g \big)$, where $d := d_\clS + d_\clV$.
  \label{thm:id-delta}
\end{theorem}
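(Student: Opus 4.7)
The plan is to take the same reparameterization $\Phi := f'^{-1} \circ f$ used in the proof of Thm.~\ref{thm:id-formal}. Since $f$ and $f'$ are continuous bijections with continuous inverses, $\Phi$ is a homeomorphism on $\clS\times\clV$. The additive-noise structure immediately gives $p(x|s,v) = p_\mu(x - f(s,v)) = p_\mu(x - f'(\Phi(s,v))) = p'(x|\Phi(s,v))$, verifying requirement (i) of Def.~\ref{def:equiv-delta} for free, with no use yet of the hypothesis $p(x,y)=p'(x,y)$.

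The crux is to reuse the non-asymptotic estimate \eqref{eqn:bound-1.3} developed inside the proof of Thm.~\ref{thm:id-formal}\,(iii). In the notation of that proof, it bounds the difference between the true regression function $\bbE[y|x]$ and the ``noise-free'' prediction $g(s)$ at $s = (f^{-1})^\clS(x)$, by $\frac12\delta_0$ where $\delta_0 := \sigma_\mu^2 B'^2_{f^{-1}}(2B'_{\log p}B'_g + B''_g + 3dB'_{f^{-1}}B''_fB'_g)$ is twice the single-model bound. Applying this separately to $p$ and to $p'$, and observing that $p(x,y)=p'(x,y)$ forces $\bbE[y|x]=\bbE'[y|x]$, a one-step triangle inequality gives $\|g(s) - g'(\Phi^\clS(s,v))\|_2 \le \delta_0$ whenever $x = f(s,v) = f'(\Phi(s,v))$. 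The factor-of-two coefficients in $\delta_0$ are precisely those produced by the triangle-inequality doubling. This is exactly requirement (ii) of Def.~\ref{def:equiv-delta}.

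For requirement (iii), I plan to invoke the $\frac{1}{2}$-Lipschitz property of the inverse of the conditional-expectation function, which Prop.~\ref{prop:equiv-delta} assumes and which, by symmetry of the hypotheses on $p$ and $p'$, applies to both $g^{-1}$ and $g'^{-1}$. For a fixed $s$ and any $v^{(1)}, v^{(2)} \in \clV$, requirement (ii) combined with the triangle inequality yields $\|g'(\Phi^\clS(s,v^{(1)})) - g'(\Phi^\clS(s,v^{(2)}))\|_2 \le 2\delta_0$; applying $g'^{-1}$ then collapses this to $\|\Phi^\clS(s,v^{(1)}) - \Phi^\clS(s,v^{(2)})\|_2 \le \delta_0$.

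The main obstacle is essentially bookkeeping: all the heavy technical work, namely the derivative-by-derivative expansion that produced \eqref{eqn:bound-1.3}, is already in place, and no new analytic tools are needed beyond the triangle inequality. The only points to verify are that each coefficient in the stated $\delta_0$ matches twice the corresponding coefficient in \eqref{eqn:bound-1.3} (it does, coefficient by coefficient) and that the Lipschitz hypothesis on $g^{-1}$ transfers to $g'^{-1}$ under the symmetric assumptions on the two \ourmodels.
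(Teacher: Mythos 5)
Your proposal is correct and follows essentially the same route as the paper's proof: the same $\Phi := f'^{-1}\circ f$, requirement (i) for free from the additive-noise structure, requirement (ii) from applying \eqref{eqn:bound-1.3} to each model and a triangle inequality via $\bbE[y|x]=\bbE'[y|x]$ (producing exactly the doubled coefficients), and requirement (iii) from the $\tfrac12$-Lipschitz inverse exactly as in the symmetry step of Prop.~\ref{prop:equiv-delta}. Your remark that the Lipschitz hypothesis must be read as applying to $g'^{-1}$ for the forward direction of $\Phi$ is a point the paper glosses over, and you handle it correctly.
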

\begin{proof}
  Let $\Phi := f'^{-1} \circ f$, where $f$ and $f'$ are given by the two \ourmodels $p$ and $p'$ via the additive noise Assumption~\ref{assm:anm-bij}.
  We now show that $p$ and $p'$ are $\delta$-semantic-dependent via this $\Phi$ for any $\delta$ in the theorem.
  Obviously $\Phi$ is a homeomorphism on $\clS\times\clV$, and it satisfies requirement \bfi in Def.~\ref{def:equiv-delta} by construction due to \eqref{eqn:x-convert} in the proof~\ref{supp:proofs-id} of Thm.~\ref{thm:id-formal}.

  Consider requirement \bfii in Def.~\ref{def:equiv-delta}.
  Based on the same assumptions as Thm.~\ref{thm:id-formal}, we have \eqref{eqn:bound-1.3} hold for both \ourmodels:
  \begin{align}
    \max \lrbrace{ \lrVert{\bbE[y|x] - \ggb(x)}_2, \lrVert{\bbE'[y|x] - \ggb'(x)}_2 }
    \le \sigma_\mu^2 B'^2_{f^{-1}} \big( B'_{\log p} B'_g  + \frac{1}{2} B''_g + \frac{3}{2} d B'_{f^{-1}} B''_f B'_g \big),
  \end{align}
  where we have denoted $\sigma_\mu^2 := \bbE[\mu\trs \mu]$.
  Since both \ourmodels induce the same $p(y|x)$, so $\bbE[y|x] = \bbE'[y|x]$. This gives:
  \begin{align}
    & \lrVert{\ggb(x) - \ggb'(x)}_2 = \lrVert{\big( \bbE'[y|x] - \ggb'(x) \big) - \big( \bbE[y|x] - \ggb(x) \big)}_2 \\
    \le{} & \lrVert{\bbE'[y|x] - \ggb'(x)}_2 + \lrVert{\bbE[y|x] - \ggb(x)}_2 \\
    \le{} & \sigma_\mu^2 B'^2_{f^{-1}} \big( 2 B'_{\log p} B'_g  + B''_g + 3 d B'_{f^{-1}} B''_f B'_g \big).
  \end{align}
  So for any $(s,v) \in \clS\times\clV$, by denoting $x := f(s,v)$, we have:
  \begin{align}
    & \lrVert{g(s) - g'(\Phi^\clS(s,v))}_2 = \lrVert{g((f^{-1})^\clS(x)) - g'((f'^{-1})^\clS(f(s,v)))}_2 = \lrVert{\ggb(x) - \ggb'(x)}_2 \\
    \le{} & \sigma_\mu^2 B'^2_{f^{-1}} \big( 2 B'_{\log p} B'_g  + B''_g + 3 d B'_{f^{-1}} B''_f B'_g \big).
  \end{align}
  So the requirement is satisfied.

  For requirement \bfiii, note from the proof of Prop.~\ref{prop:equiv-delta} that when $g$ is bijective and its inverse is $\frac{1}{2}$-Lipschitz, requirement \bfii implies requirement \bfiii.
  So this $\Phi$ is a homeomorphism that makes $p$ $\delta$-semantic-dependent to $p'$ for any $\delta \ge \sigma_\mu^2 B'^2_{f^{-1}} \big( 2 B'_{\log p} B'_g  + B''_g + 3 d B'_{f^{-1}} B''_f B'_g \big)$.
\end{proof}

Note that although the $\delta$-semantic-dependency does not have transitivity, the above theorem is still informative:
for any two \ourmodels sharing the same data distribution, particularly for a well-learned \ourmodel $p$ and the ground-truth \ourmodel $p^*$, the likelihood conversion error $\sup_{(s,v) \in \clS\times\clV} \lrVert{g(s) - g'(\Phi^\clS(s,v))}_2$, and the degree of mixing $v$ into $s$, measured by $\sup_{v^{(1)}, v^{(2)} \in \clV} \lrVert{\Phi^\clS(s, v^{(1)}) - \Phi^\clS(s, v^{(2)})}_2$, are bounded by $\sigma_\mu^2 B'^2_{f^{-1}} \big( 2 B'_{\log p} B'_g  + B''_g + 3 d B'_{f^{-1}} B''_f B'_g \big)$.

\section{More Explanations on the Model} \label{supp:model}

\paragraph{Explanations on our model.}
We see the data generating process as coming up with a conceptual latent factors $(s,v)$ first, and then generating both $x$ and $y$ based on the factors.
A prototyping example is that a photographer takes an image $x$ of an object and meanwhile gives a label $y$ to it, based on conceptual features $(s,v)$ in the scene (\eg, shape, color, texture, orientation and pose of the object, background objects and environment, illumination during imaging).
The image $x$ is produced by assembling these factors $(s,v)$ in the scene and passing the reflected light through a camera,
and the label $y$ is produced by processing causally relevant factors $s$ (\eg, object shape, texture) by the photographer.
Under this view, intervening the image $x$ is to break the imaging process (\eg, by malfunctioning the camera by breaking a sensor unit or making the sensor noisy), which does not alter the latent factors $(s,v)$ and the labeling process, hence also the label $y$.
Similarly, intervening the label $y$ is to break the labeling process (\eg, by reforming the labeling rule or randomly flipping the labels), which does not alter the latent factors $(s,v)$ and the imaging process, hence also the image $x$.
On the other hand, intervening the latent factors $(s,v)$ (\eg, by replacing the object with a different one at the imaging and labeling moment) may change both $x$ and $y$ through the imaging and labeling processes.
This verifies the model in Fig.~\ref{fig:gen-sv} by checking its causal implications.

This view of the data generating process is also adopted and promoted by popular existing works.
\citet{mcauliffe2008supervised} treat both a document and its label be generated by the involved topics in the document (represented as a topic proportion), which is an abstract latent factor.
\citet[Sec.~1.4]{peters2017elements}; \citet{kilbertus2018generalization} view the generation of an OCR dataset under a causal perspective as the writer first comes up with an intension to write a character, and then writes down the character and gives its label based on the intension.
\citet{teshima2020few} treat both an image and its label be produced from a set of latent factors.
This view of the data generating process is also natural for medical image datasets, where the label may be diagnosed based on more fundamental features (\eg, PCR test results showing the pathogen) that are not included in the dataset but actually cause the medical image.

On the labeling process from images that one would commonly think of, we also view it as a $s \to y$ process.
Human directly knows the critical semantic feature $s$ (\eg, the shape and position of each stroke) by seeing the image, through the nature gift of the vision system~\citep{biederman1987recognition}.
The label is given by processing the feature (\eg, the angle between two linear strokes, the position of a circular stroke relative to a linear stroke), which is a $s \to y$ process.

The causal graph in Fig.~\ref{fig:gen-sv} implies that $x \pperp y \mid s$.
However, this does not indicate that the semantic factor $s$ generates an image $x$ regardless of the label $y$.
Given $s$, the generated image is dictated to hold the given semantics regardless of randomness, so the statistical independence does not mean semantic irrelevance.
If an image $x$ is given, the corresponding label is given by $p(y|x)$, which is $\int p(s|x) p(y|s) \dd s$ by the causal graph.
So the semantic concept to cause the label through $p(y|s)$, is inferred from the image through $p(s|x)$.

\newcommand{\tx}{\mathrm{tx}}
\newcommand{\rx}{\mathrm{rx}}
\paragraph{Comparison with the graph $y_\tx \to s \to x \to y_\rx$.}
One may consider this graph as a communication channel, where $y_\tx$ is a transmitted signal and $y_\rx$ is the received signal.

If the observed label $y$ is treated as $y_\tx$, the graph then implies $y \to s$.
This is argued at the end of item~\bftwo in Sec.~\ref{sec:model} that it may make unreasonable implications.
Moreover, the graph also implies that $y$ is a cause of $x$, as is challenged in item~\bfone in Sec.~\ref{sec:model}.
The unnatural implications arise since intervening $y$ is different from intervening the ``ground-truth'' label.
We consider $y$ as an observation that may be noisy, while the ``ground-truth label'' is never observed: one cannot tell if the labels at hand are noise-corrupted, based on the dataset alone.
For example, the label of either image in Fig.~\ref{fig:ambiguity} may be given by a labeler's random guess.
Our adopted causal direction $s \to y$ is consistent with these examples and is also argued and adopted by \citet{mcauliffe2008supervised}; \citet[Sec.~1.4]{peters2017elements}; \citet{kilbertus2018generalization}; \citet{teshima2020few}.

If the observed label $y$ is treated as $y_\rx$, the graph then implies $x \to y$, as is challenged in item~\bfone in Sec.~\ref{sec:model}.
It is also argued by \citet{scholkopf2012causal}; \citet[Sec.~1.4]{peters2017elements}; \citet{kilbertus2018generalization}.
Treating the observed label $y$ as $y_\rx$ and $y_\tx$ as the ``ground-truth'' label may be the motivation of this graph.
But the graph implies $y_\tx \pperp y_\rx \mid x$, that is, $p(y_\tx|x, y_\rx) = p(y_\tx|x)$ and $p(y_\rx|x, y_\tx) = p(y_\rx|x)$.
So modeling $y_\tx$ (resp. $y_\rx$) does not benefit predicting $y_\rx$ (resp. $y_\tx$) from $x$.

\section{More Related Work} \label{supp:relw}

\textbf{Generative supervised learning}
is not new~\citep{mcauliffe2008supervised, kingma2014semi}, 
but most works do not consider the encoded causality. 
Other works consider solving causality tasks, notably causal/treatment effect estimation~\citep{louizos2017causal, yao2018representation, wang2019blessings}.
The task does not focus on OOD prediction, and requires labels for both treated and controlled groups.

\textbf{Causality with latent variable}
has been considered in a rich literature~\citep{verma1991equivalence, spirtes2000causation, richardson2002ancestral, hoyer2008estimation, shpitser2014introduction}, while most works focus on the consequence on observation-level causality. 
Others consider identifying the latent variable.
\citet{janzing2009identifying, lee2019leveraging} show the identifiability 
under additive noise or similar assumptions.
For discrete data, a ``simple'' latent variable can be identified under various specifications~\citep{janzing2011detecting, sgouritsa2013identifying, kocaoglu2018entropic}.
\citet{romeijn2018intervention} consider using interventional datasets for identification.
Over these works, we step further to separate and identify the latent variable as semantic and variation factors, and show the benefit for OOD prediction. 

\section{Relation to Existing Domain Adaptation Theory} \label{supp:da-dir}

In this section, to align with the domain adaptation (DA) literature, we call ``training/test domain'' as ``source/target domain'',
and use $p(x,y)$ and $\ppt(x,y)$ to denote the underlying data-generating distributions $p^*(x,y)$ and $\ppt^*(x,y)$ on the source and target domains, respectively.
In a DA task, supervised data from $p(x,y)$ on the source domain are available, but on the target domain, only unsupervised data from $\ppt(x) = \int \ppt(x,y) \dd y$ \footnote{
  Under the general definition of an integral (\eg, \citet[p.211]{billingsley2012probability}), it also allows a discrete $\clY$, in which case $\ud y$ is the counting measure and the integral reduces to a summation.
} are available.
The goal is to find a labeling function $h: \clX \to \clY$ within a hypothesis space $\clH$ that minimizes the target-domain risk $\Rt(h) := \bbE_{\ppt(x,y)}[\ell(h(x), y)]$ defined by a loss function $\ell: \clY \times \clY \to \bbR$.

\paragraph{General DA theory}
\newcommand{\clHt}{{\tilde \clH}}

Since $\ppt(x,y)$ is not accessible, it is of practical interest to consider the source-domain risk $R(h)$ and investigate its relation to $\Rt(h)$.
\citet[Thm.~1]{ben2010theory} give a bound relating the two risks:
\begin{align}
  \Rt(h) \le{} & R(h) + 2 d_1(p_x, \ppt_x) \\
  & {}+ \min\{\bbE_{p(x)} [\lrvert*[big]{h^*(x) - \hht^*(x)}], \bbE_{\ppt(x)} [\lrvert*[big]{h^*(x) - \hht^*(x)}]\},
  \label{eqn:da-bound-tv} \\
  \text{where: } &
  d_1(p_x, \ppt_x) := \sup_{X \in \scX} \lrvert{p_x[X] - \ppt_x[X]}
\end{align}
is the \emph{total variation} between the two distributions, $\scX$ denotes the sigma-field on $\clX$,
and $h^* \in \argmin_{h \in \clH} R(h)$ and $\hht^* \in \argmin_{\hht \in \clH} \Rt(\hht)$ are the oracle labeling functions on the source and target domains, respectively (\eg, $h^*(x) = \bbE[y|x]$ and $\hht^*(x) = \bbEt[y|x]$ if $\supp(p_x) = \supp(\ppt_x)$).
Note that as oracle labeling functions, $h^*$ and $\hht^*$ are two \emph{certain} but not \emph{any} risk minimizers.
The second and third terms on the r.h.s measure the domain difference in terms of the distribution on $x$ and the correspondence of $y$ on $x$, respectively.
\citet[Thm.~4.1]{zhao2019learning} give a similar bound in the case of binary classification $\clY = \{0,1\}$, in terms of the $\clHt$-divergence $d_\clHt$ in place of the total variance $d_1$,
which is defined as $d_\clHt (p_x, \ppt_x) := \sup_{X \in \scX_\clHt} \lrvert{p_x[X] - \ppt_x[X]}$, where $\scX_\clHt := \{h^{-1}(1): h \in \clHt\}$ and $\clHt := \{\sign(\lrvert{h(x) - h'(x)} - t): h, h' \in \clH, t \in [0, 1]\}$.

\citet{ben2010theory} also argue that in this bound, the total variation $d_1$ is overly strict (thus making the bound unnecessarily loose) and hard to estimate from finite data samples, so they develop another bound which is better known (\citealp[Thm.~2]{ben2010theory}; \citealp[Thm.~1]{johansson2019support})
(only showing the asymptotic version here, \ie, omitting the estimation error from finite samples):
\begin{align}
  \Rt(h) \le{} & R(h) + d_{\clH\Delta\clH}(p_x, \ppt_x) + \lambda_\clH,
  \label{eqn:da-bound-hdh} \\
  \text{where: }
  d_{\clH\Delta\clH}(p_x, \ppt_x) :={} & \sup_{h, h' \in \clH} \lrvert{\bbE_{p(x)}[\ell(h(x), h'(x))] - \bbE_{\ppt(x)}[\ell(h(x), h'(x))]}, \\
  \lambda_\clH :={} & \inf_{h \in \clH} \left[ R(h) + \Rt(h) \right].
\end{align}
Here, $d_{\clH\Delta\clH}(p_x, \ppt_x)$ is called the \emph{$\clH\Delta\clH$-divergence} measuring the difference between $p(x)$ and $\ppt(x)$, under the discriminative efficacy of the labeling function family $\clH$ (thus not as strict as the total variation $d_1$),
and $\lambda_\clH$ is the \emph{ideal joint risk} achieved by $\clH$ measuring the richness or expressiveness of $\clH$ for the two prediction tasks.
The $\clH\Delta\clH$-divergence $d_{\clH\Delta\clH}$ is also estimable from finite data samples~\citep[Lemma~1]{ben2010theory}.
\citet[Thm.~1]{long2015learning} give a similar bound in terms of maximum mean discrepancy (MMD) $d_K$ in place of $d_{\clH\Delta\clH}$.

For successful adaptation, some assumptions on the unknown distribution $\ppt(x,y)$ are required.
A commonly adopted one is:
\begin{align}
  \text{(covariate shift) } \hht^*(x) = h^*(x) \text{ or } p(y|x) = \ppt(y|x),
  \forall x \in \supp(p_x, \ppt_x) := \supp(p_x) \cup \supp(\ppt_x).
  \label{eqn:da-cov-shift}
\end{align}

\paragraph{DA-DIR}
Domain-invariant representation (DIR) based DA methods (DA-DIR)~\citep{pan2010domain, baktashmotlagh2013unsupervised, long2015learning, ganin2016domain} aims to learn a deterministic representation extractor $\eta: \clX \to \clS$ to some representation space $\clS$, in order to achieve a domain-invariant representation:
\begin{align}
  \text{(DIR) } p(s) = \ppt(s),
  \text{where } p(s) := \eta_\#[p_x](s) \text{ and } \ppt(s) := \eta_\#[\ppt_x](s)
  \label{eqn:da-dir}
\end{align}
are the representation distributions on the two domains.
The motivation is that, once DIR is achieved, the distribution difference term (the second term on the r.h.s) of bound \eqref{eqn:da-bound-tv} or \eqref{eqn:da-bound-hdh} diminishes on the representation space $\clS$.
So the bound on $\clS$ is then controlled by the source risk (the first term), 
and driving $h$ to let $R(h)$ approach $R(h^*)$ (\ie, to minimize the source risk $R(h)$) effectively minimizes the target risk.

Let $g: \clS \to \clY$ be a labeling function on the representation space $\clS$.
The end-to-end labeling function is then $h = g \circ \eta$.
Combining the two desiderata of achieving DIR and $R(h^*)$, the typical objective of DA-DIR is in the following form:
\begin{align}
  \min_{\eta \in \clE, g \in \clG} R(g \circ \eta) + \lambda d(\eta_\#[p_x], \eta_\#[\ppt_x]),
  \label{eqn:da-obj-typical}
\end{align}
where $d(\cdot,\cdot)$ is a metric or discrepancy ($d(q,p) \ge 0$; $d(q,p)=0 \Longleftrightarrow q=p$) on distributions, $\lambda$ is a weighting parameter, and $\clE$ and $\clG$ are the hypothesis spaces for $\eta$ and $g$, respectively.

For the existence of the solution of this problem, 
\citet{johansson2019support} consider the following assumption:
\begin{align}
  \text{(strong existence assumption) } \exists \eta^* \in \clE, g^* \in \clG, \st
  \eta^*_\#[p_x] = \eta^*_\#[\ppt_x], g^* \circ \eta^* = h^*.
  \label{eqn:da-strongexist}
\end{align}
They also mention that this is not guaranteed to hold in practice, since it is quite strong:
both DIR and $R(h^*)$ can be simultaneously achieved.

\paragraph{Problem of DA-DIR}
\citet{johansson2019support, zhao2019learning} give examples where even under the strong assumption of both covariate shift and the strong existence assumption~\citep[Assumption~3]{johansson2019support},
simultaneously achieving both DIR and $R(h^*)$ still leads the target risk $\Rt(g \circ \eta)$ to the worst value.

We first analyze the problem through the lens of the above DA bounds.
We will show that when reducing the bounds on $\clS$, they can be uselessly large.

\noindent\bfone For the bound \eqref{eqn:da-bound-tv}.
Applying the bound on the representation space $\clS$ gives:
\begin{align}
  \Rt(g \circ \eta) \le{} & R(g \circ \eta) + 2 d_1(\eta_\#[p_x], \eta_\#[\ppt_x]) \\
  & {}+ \min\{\bbE_{\eta_\#[p_x](s)} [\lrvert*[big]{g^*_\eta(s) - \ggt^*_\eta(s)}], \bbE_{\eta_\#[\ppt_x](s)} [\lrvert*[big]{g^*_\eta(s) - \ggt^*_\eta(s)}]\},
  \label{eqn:da-bound-tv-repr}
\end{align}
where $g^*_\eta$ and $\ggt^*_\eta$ are the optimal labeling functions on top of the representation extractor $\eta$.
It is shown that under the assumption of covariate shift~\citep{ben2010impossibility, gong2016domain} or additionally strong existence~\citep{johansson2019support},
simultaneously achieving both DIR and $R(h^*)$ is not sufficient to guarantee $g^*_\eta = \ggt^*_\eta$,
so the bound may still be large.

In both examples of \citet{johansson2019support} and \citet{zhao2019learning}, the considered $\eta$, although achieving both desiderata, is not $\eta^*$, and this $\eta$ renders different optimal representation-level labeling functions on the two domains: $g^*_\eta \ne \ggt^*_\eta$, so the bound is still large.
\citet{johansson2019support} claim that it is necessary to require $\eta$ to be invertible to make $g^*_\eta = \ggt^*_\eta$, 
and develop a bound (Thm.~2) that explicitly shows the effect of the invertibility of $\eta$.
The $\eta$ functions in the examples are not invertible.

\noindent\bftwo For the bound \eqref{eqn:da-bound-hdh}.
Applying the bound on the representation space $\clS$ gives:
\begin{align}
  \bbE_{\ppt(s,y)}[\ell(g(s), y)] \le{} & \bbE_{p(s,y)}[\ell(g(s), y)] + d_{\clG\Delta\clG}(\eta_\#[p_x], \eta_\#[\ppt_x]) \\
  & {}+ \inf_{g \in \clG} \left[ \bbE_{\ppt(s,y)}[\ell(g(s), y)] + \bbE_{p(s,y)}[\ell(g(s), y)] \right],
\end{align}
where $p_{s,y} := (\eta, \id_y)_\#[p_{x,y}]$ with $\id_y: (x,y) \mapsto y$ and similarly $\ppt_{s,y} := (\eta, \id_y)_\#[\ppt_{x,y}]$.
Note that $\bbE_{p(s,y)}[\ell(g(s), y)] = \bbE_{p(x,y)}[\ell(g(\eta(x)), y)] = R(g \circ \eta)$ and similarly $\bbE_{\ppt(s,y)}[\ell(g(s), y)] = \Rt(g \circ \eta)$.
So the last term on the r.h.s becomes
$\inf_{g \in \clG} \left[ \Rt(g \circ \eta) + R(g \circ \eta) \right] = \lambda_{\clG \circ \eta}$, where $\clG \circ \eta := \{g \circ \eta: g \in \clG\}$,
and the bound then reformulates to:
\begin{align}
  \Rt(g \circ \eta) \le R(g \circ \eta) + d_{\clG\Delta\clG}(\eta_\#[p_x], \eta_\#[\ppt_x]) + \lambda_{\clG \circ \eta}.
  \label{eqn:da-bound-hdh-repr}
\end{align}
This result is shown by \citet{johansson2019support}.
They argue that finding $\eta$ that achieves both DIR and $R(h^*)$ simultaneously (with some $g^*_\eta$) cannot guarantee a tighter bound since the last term $\lambda_{\clG \circ \eta}$ may be very large.

In both examples of \citet{johansson2019support} and \citet{zhao2019learning}, it holds that $\supp(p_x) \cap \supp(\ppt_x) = \emptyset$.
It may cause the problem that $g \circ \eta$ is very different from $h^*$ on $\supp(\ppt_x)$ even when $R(h^*)$ is achieved, since $R(g \circ \eta) = R(h^*)$ only constraints the behavior of $g \circ \eta$ on $\supp(p_x)$.
The developed bound by \citet[Thm.~2]{johansson2019support} also explicitly shows the role of a support overlap, thus is called a support-invertibility bound.
They also give an example showing that DIR (particularly implemented by minimizing MMD) is not necessary (``sometimes too strict'') for learning the shared/invariant $p(y|x)$.

The problem of DA-DIR is also studied under more modern bounds \bfthr \bffor and arguments \bffiv.

\noindent\bfthr A third bound.
\citet{zhao2019learning} develop another bound for binary classification $\clY := \{0,1\}$, under the risk function $R(h) := \bbE_{p(x)}[\lrvert*[big]{h^*(x) - h(x)}]$.
The bound is expressed in terms of the JS distance~\citep{endres2003new} $d_\JS(p,q) := \sqrt{\JS(p,q)}$, where $\JS(p,q)$ is the JS divergence, which is bounded: $0 \le \JS(p,q) \le 1$
\footnote{This bound is under the unit of bits, \ie, base 2 logarithm is used in the KL divergence defining the JS divergence. Under the unit of nats, \ie, the natural logarithm $\ln$ is used, the bound becomes $0 \le \JS(p,q) \le \ln 2$.}.
It is shown that~\citep[Lemma~4.8]{zhao2019learning}:
\begin{align}
  d_\JS(p_y, \ppt_y) \le d_\JS(\eta_\#[p_x], \eta_\#[\ppt_x]) + \sqrt{R(g \circ \eta)} + \sqrt{\Rt(g \circ \eta)}.
  \label{eqn:da-bound-js-lem}
\end{align}
If $d_\JS(p_y, \ppt_y) \ge d_\JS(\eta_\#[p_x], \eta_\#[\ppt_x])$\footnote{
  Unfortunately, it seems that the opposite direction of the inequality holds when there exist $\eta^*$ and $g^*$ (unnecessarily the ones in the strong existence assumption or Assumption~3 of \citet{johansson2019support}) such that
  $p_y = (g^* \circ \eta^*)_\#[p_x]$ and $\ppt_y = (g^* \circ \eta^*)_\#[\ppt_x]$ and that $\eta$ is a reparameterization of $\eta^*$, due to the celebrated data processing inequality.
}, the bound is given as~\citep[Thm.~4.3]{zhao2019learning}:
\begin{align}
  R(g \circ \eta) + \Rt(g \circ \eta) \ge \frac{1}{2} \left( d_\JS(p_y, \ppt_y) - d_\JS(\eta_\#[p_x], \eta_\#[\ppt_x]) \right)^2,
  \label{eqn:da-bound-js-thm}
\end{align}
or when the two domains are allowed to have their own representation-level labeling functions $g$ and $\ggt$, we have~\citep[Corollary~4.1]{zhao2019learning}:
\begin{align}
  R(g \circ \eta) + \Rt(\ggt \circ \eta) \ge \frac{1}{2} \left( d_\JS(p_y, \ppt_y) - d_\JS(\eta_\#[p_x], \eta_\#[\ppt_x]) \right)^2.
  \label{eqn:da-bound-js-cor}
\end{align}
When $p(y) \ne \ppt(y)$, we have $d_\JS(p_y, \ppt_y) > 0$, so DIR, which minimizes $d_\JS(\eta_\#[p_x], \eta_\#[\ppt_x])$, becomes harmful to minimizing the target risk $\Rt(\ggt \circ \eta)$.

\noindent\bffor
\citet[Thm.~6]{chuang2020estimating} probe into the mysterious term $\lambda_{\clG\circ\eta}$ in the bound \eqref{eqn:da-bound-hdh-repr} and show how it is affected by the complexity of $\clE$ (the hypothesis space of $\eta$):
\begin{align}
  \Rt(g \circ \eta) \le{} & R(g \circ \eta) + d_{\clG\Delta\clG}(\eta_\#[p_x], \eta_\#[\ppt_x]) + d_{\clG_{\clE\Delta\clE}}(p_x, \ppt_x) + \lambda_{\clG \circ \clE}(\eta),
  \label{eqn:da-bound-ecomplex} \\
  \text{where: }
  d_{\clG_{\clE\Delta\clE}}(p_x, \ppt_x) :={} & \sup_{g \in \clG; \eta, \eta' \in \clE} \lrvert{\bbE_{p_x}[\ell(g \circ \eta, g \circ \eta')] - \bbE_{\ppt_x}[\ell(g \circ \eta, g \circ \eta')]}, \\
  \lambda_{\clG \circ \clE}(\eta) :={} & \inf_{g' \in \clG, \eta' \in \clE} 2 R(g' \circ \eta) + R(g' \circ \eta') + \Rt(g' \circ \eta').
\end{align}
Here, $d_{\clG\Delta\clG}(\eta_\#[p_x], \eta_\#[\ppt_x])$ measures the representation distribution difference,
$d_{\clG_{\clE\Delta\clE}}(p_x, \ppt_x)$ measures the complexity of the representation-extractor family $\clE$ w.r.t $\clG$~\citep[Def.~5]{chuang2020estimating}, 
and $\lambda_{\clG \circ \clE}(\eta)$ is ``a variant of the best in-class joint risk''.
For a given $\clG$, although a more expressive $\clE$ lowers $\lambda_{\clG \circ \clE}(\eta)$ and contains a more capable $\eta$ to reduce $d_{\clG\Delta\clG}(\eta_\#[p_x], \eta_\#[\ppt_x])$, such an $\clE$ also incurs a larger $d_{\clG_{\clE\Delta\clE}}(p_x, \ppt_x)$,
so there is a trade-off when choosing a proper $\clE$.
\citet{chuang2020estimating} illustrate this trade-off by a toy example, and observe this trade-off in experiments.
Similarly, there is also a trade-off in the complexity of $\clG$ (a more expressive $\clG$ lowers $\lambda_{\clG \circ \clE}(\eta)$ but increases $d_{\clG\Delta\clG}(\eta_\#[p_x], \eta_\#[\ppt_x])$ and $d_{\clG_{\clE\Delta\clE}}(p_x, \ppt_x)$),
but \citet{chuang2020estimating} find the performance of DA-DIR much less sensitive to it empirically.
They also point out the implication of this trade-off in choosing which layer in a neural network as the representation (Prop.~7) with an empirical study.

\citet{chuang2020estimating} also propose a method to estimate the target-domain performance (\ie, the OOD generalization performance) in terms of $\Rt(h)$ of a supervised model $h$ using a set of DA-DIR models $\hat{\clH}^*$.
The method is supported by its Lemma~4: $\lrvert{\Rt(h) - \sup_{h' \in \hat{\clH}^*} \bbE_{\ppt(x)}[\ell(h(x), h'(x))]} \le \sup_{h' \in \hat{\clH}^*} \Rt(h')$.
The supremum on the l.h.s can be estimated using unsupervised data on the target domain, and it is treated as an estimate to $\Rt(h)$ given that the r.h.s is believed to be small for DA-DIR models $\hat{\clH}^*$.

\noindent\bffiv
\citet{arjovsky2019invariant} point out that in the covariate shift case $p(y|s) = \ppt(y|s)$, achieving DIR $p(s) = \ppt(s)$ implies $p(y) = \ppt(y)$ (since $p(s) p(y|s) = \ppt(s) \ppt(y|s)$).
This may not hold in practice.
When it does not hold, the bound \eqref{eqn:da-bound-js-thm} shows that DIR may limit the target-domain performance.

\paragraph{Comparison with \ourmodel}
The key feature of our \ourmodel is that it is based on causal invariance.
In most of the above bounds, including Eqs.~(\ref{eqn:da-bound-tv},~\ref{eqn:da-bound-hdh}) for general DA and Eqs.~(\ref{eqn:da-bound-tv-repr}, \ref{eqn:da-bound-hdh-repr}, \ref{eqn:da-bound-js-thm}, \ref{eqn:da-bound-ecomplex}) for DA-DIR,
the same labeling function $h$ or $g \circ \eta$ is used in both domains (the risks $R$ and $\Rt$ on both domains measure the same $h$ or $g \circ \eta$).
So for successful adaptation, covariate shift (invariant $h^*$ or $p(y|x)$) is a basic assumption, which implies inference invariance (invariant $\eta^*$ or $p(s|x)$) for DA-DIR.
Yet, as explained in Sec.~\ref{sec:model-infinv}, since the data at hand is produced from a certain mechanism of nature anyway, the invariance in the causal generative direction $p(x|s,v)$ is more fundamental and reliable than covariate shift or inference invariance.
The causal invariance allows $p(s) \ne \ppt(s)$ and subsequently a difference in the inference direction: $p(s|x) \ne \ppt(s|x)$ or $\eta^* \ne \etat^*$, and $p(y|x) \ne \ppt(y|x)$ or $h^* \ne \hht^*$.
Following this new philosophy, \ourmodel{}-ind and \ourmodel{}-DA use a different inference and prediction rule in the target domain, and Theorems~\ref{thm:ood} and~\ref{thm:da} give OOD prediction guarantees for this different prediction rule.
This is in contrast to most existing DA methods and theory.

Another advantage of \ourmodel is that it has an identifiability guarantee (Thm.~\ref{thm:id}).
In the above analyses \bfone and \bftwo, we see that the problem of DA-DIR arises since achieving both DIR and $R(h^*)$ simultaneously cannot guarantee $\eta = \eta^*$ or $g = g^*$ or $g \circ \eta = h^*$ on $\supp(p_x, \ppt_x)$, even in some sense of semantic or performance equivalence.
This is essentially an identifiability problem.
\ourmodel achieves identifiability by fitting the entire data distribution $p(x,y)$.
In contrast, DA-DIR is not a generative method, and only fits $p(y|x)$.
Although DA-DIR also seeks to achieve DIR, it is a weaker goal than fitting $p(x)$ (DIR cannot give $p(x)$).
So DA-DIR does not fully exploit the data distribution $p(x,y)$, and identifiability is a problem even with the strong assumption of both covariate shift and the strong existence assumption.

In terms of the considered quantity in the bounds, all the existing ones above 
bound the objective of the target risk $\Rt(h)$ in terms of the accessible source risk $R(h)$ for an arbitrary labeling function $h$,
while our bound \eqref{eqn:ood-expc} relates the target risks of the optimally-learned source-domain labeling function ${h'}^*$ and of the target-domain oracle labeling function $\hht^*$, \ie, it bounds $\lrvert*{\Rt({h'}^*) - \Rt(\hht^*)}$.
It measures the risk gap of the best source labeling function on the target domain.
After adaptation, Thm.~\ref{thm:da} (\eqref{eqn:da-gen}) shows that \ourmodel{}-DA achieves the optimal labeling function on the target domain.

Under bounds Eqs.~(\ref{eqn:da-bound-js-thm},~\ref{eqn:da-bound-js-cor}), we are not minimizing $d_\JS(\eta_\#[p(x)], \eta_\#[\ppt(x)])$, so our method is good under that view.
In fact, in \ourmodel the representation distributions on the two domains are $p(s) = \int p(s,v) \dd v$ and $\ppt(s) = \int \ppt(s,v) \dd v$ (replacing $\eta_\#[p(x)]$ and $\eta_\#[\ppt(x)]$).
They are generally different and we do not seek to match them.

\section{Methodology Details} \label{supp:meth}

\subsection{Derivation of Learning Objectives} \label{supp:meth-obj}

\subsubsection{The Evidence Lower BOund (ELBO).} \label{supp:meth-obj-elbo}
A common and effective approach to let the model $p$ match the data distribution $p^*(x,y)$ is maximizing likelihood, that is to maximize $\bbE_{p^*(x,y)}[\log p(x,y)]$.
It is equivalent to minimizing $\KL(p^*(x,y) \Vert p(x,y))$ (since $\bbE_{p^*(x,y)} [\log p^*(x,y)]$ is constant of $p$), so it drives $p(x,y)$ towards $p^*(x,y)$.
But the likelihood function $p(x,y) = \int p(s,v,x,y) \dd s \ud v$ involves an intractable integration, which is hard to estimate and optimize.
To address this, the popular method of \emph{variational expectation-maximization} (variational EM) introduces a tractable (has closed-form density function and easy to draw samples from it) distribution $q(s,v|x,y)$ of the latent variables given observed variables, and a lower bound of the likelihood function can be derived:
\begin{align}
  \log p(x,y) ={} & \log \bbE_{p(s,v)} [p(s,v,x,y)] = \log \bbE_{q(s,v|x,y)} \bigg[ \frac{p(s,v,x,y)}{q(s,v|x,y)} \bigg] \\
  \ge{} & \bbE_{q(s,v|x,y)} \bigg[ \log \frac{p(s,v,x,y)}{q(s,v|x,y)} \bigg]
  =: \clL_{p, \, q_{s,v|x,y}} (x,y),
  \label{eqn:elbo-def}
\end{align}
where the inequality follows Jensen's inequality and the concavity of the $\log$ function.
The function $\clL_{p, \, q_{s,v|x,y}} (x,y)$ is thus called \emph{Evidence Lower BOund} (ELBO).
The tractable distribution $q(s,v|x,y)$ is called \emph{variational distribution}, and is commonly instantiated by a standalone model (from the generative model) called an \emph{inference model}.
Moreover, we have:
\begin{align}
  & \clL_{p, \, q_{s,v|s,y}} (x,y) + \KL(q(s,v|x,y) \Vert p(s,v|x,y)) \\
  ={} & \bbE_{q(s,v|x,y)} \bigg[ \log \frac{p(s,v,x,y)}{q(s,v|x,y)} \bigg] + \bbE_{q(s,v|x,y)} \bigg[ \log \frac{q(s,v|x,y)}{p(s,v|x,y)} \bigg] \\
  ={} & \bbE_{q(s,v|x,y)} \bigg[ \log \frac{p(s,v,x,y)}{p(s,v|x,y)} \bigg]
  = \bbE_{q(s,v|x,y)} [ \log p(x,y) ] \\
  ={} & \log p(x,y),
  \label{eqn:elbo-equality}
\end{align}
so maximizing $\clL_{p, \, q_{s,v|x,y}} (x,y)$ w.r.t $q(s,v|x,y)$ is equivalent to minimizing $\KL( q(s,v|x,y) \Vert p(s,v|x,y) )$ (since the r.h.s $\log p(x,y)$ is constant of $q(s,v|x,y)$), which drives $q(s,v|x,y)$ towards the true posterior (\ie, the goal of \emph{variational inference}),
and once this is (perfectly) done, $\clL_{p, \, q_{s,v|x,y}} (x,y)$ becomes a lower bound of $\log p(x,y)$ that is tight at the current model $p$, so maximizing $\clL_{p, \, q_{s,v|x,y}} (x,y)$ w.r.t $p$ effectively maximizes $\log p(x,y)$ (\ie, the goal of maximizing likelihood).
So the training objective becomes the expected ELBO $\bbE_{p^*(x,y)} [\clL_{p, \, q_{s,v|x,y}} (x,y)]$.
Optimizing it w.r.t $q(s,v|x,y)$ and $p$ alternately drives $q(s,v|x,y)$ towards $p(s,v|x,y)$ and $p(x,y)$ towards $p^*(x,y)$ eventually.
The derivations and conclusions above hold for general latent variable models, with $(s,v)$ representing the latent variables, and $(x,y)$ observed variables (data variables).

This standard form of ELBO gives the objective for fitting unsupervised test-domain data from the underlying data distribution $\ppt^*(x)$.
In this case, the observed variable is only $x$ while the latent variable is still $(s,v)$,
so the required joint distribution for latent and observed variables is $\ppt(s,v,x) = \ppt(s,v) p(x|s,v)$, and the inference model is in the form $\qqt(s,v|x)$.
Following the form of \eqref{eqn:elbo-def}, the ELBO objective for fitting $\ppt^*(x)$ (\ie, the lower bound for $\log \ppt(x)$) is:
\begin{align}
  \clL_{\ppt, \, \qqt_{s,v|x}} (x)
  = \bbE_{\qqt(s,v|x)} \Big[ \log \frac{\ppt(s,v,x)}{\qqt(s,v|x)} \Big].
\end{align}
This leads to \eqref{eqn:elbo-tgt}.

\subsubsection{Variational EM for learning \ourmodel.} \label{supp:meth-obj-train}
In the supervised case, the expected ELBO objective $\bbE_{p^*(x,y)} [\clL_{p, \, q_{s,v|x,y}} (x,y)]$ can also be understood as the conventional supervised learning loss, \ie the cross entropy, regularized by a generative reconstruction term.
As explained in the main text (Sec.~\ref{sec:meth}), after training, we only have the model $p(s,v,x,y)$ and an approximation $q(s,v|x,y)$ to the posterior $p(s,v|x,y)$, and prediction using $p(y|x)$ is still intractable.
So we employ a tractable distribution $q(s,v,y|x)$ to model the required variational distribution as $q(s,v|x,y) = q(s,v,y|x) / q(y|x)$, where $q(y|x) = \int q(s,v,y|x) \dd s \ud v$ is the derived marginal distribution of $y$ from $q(s,v,y|x)$ (we will show that it can be effectively estimated and sampled from).
With this instantiation, the expected ELBO becomes:
\begin{align}
  & \bbE_{p^*(x,y)} [\clL_{p, \, q_{s,v|x,y} = \cdots(q_{s,v,y|x})} (x,y)] \\
  ={} & \int p^*(x,y) \frac{q(s,v,y|x)}{q(y|x)} \log \frac{p(s,v,x,y) q(y|x)}{q(s,v,y|x)} \dd s \ud v \ud x \ud y \\
  ={} & \int p^*(x,y) \frac{q(s,v,y|x)}{q(y|x)} \log q(y|x) \dd s \ud v \ud x \ud y + \int p^*(x,y) \frac{q(s,v,y|x)}{q(y|x)} \log \frac{p(s,v,x,y)}{q(s,v,y|x)} \dd s \ud v \ud x \ud y \\
  ={} & \int p^*(x) \bigg( \int p^*(y|x) \frac{\int q(s,v,y|x) \dd s \ud v}{q(y|x)} \log q(y|x) \dd y \bigg) \dd x \\
  & {}+ \int p^*(x) \bigg( \int \frac{p^*(y|x)}{q(y|x)} q(s,v,y|x) \log \frac{p(s,v,x,y)}{q(s,v,y|x)} \dd s \ud v \ud y \bigg) \dd x \\
  ={} & \bbE_{p^*(x)} \bbE_{p^*(y|x)} [\log q(y|x)] + \bbE_{p^*(x)} \bbE_{q(s,v,y|x)} \bigg[ \frac{p^*(y|x)}{q(y|x)} \log \frac{p(s,v,x,y)}{q(s,v,y|x)} \bigg],
  \label{eqn:elbo-interp-deduc}
\end{align}
which is \eqref{eqn:elbo-interp}.
Here, we use the shorthand ``$q_{s,v|x,y} = \cdots(q_{s,v,y|x})$'' for the above substitution $q(s,v|x,y) = q(s,v,y|x) / \int q(s,v,y|x) \dd s \ud v$ and highlight the argument therein.
The first term is the (negative) expected cross entropy loss, which drives the inference model (predictor) $q(y|x)$ towards $p^*(y|x)$ for $p^*(x)$-a.e. $x$.
Once this is (perfectly) done, the second term becomes $\bbE_{p^*(x)} \bbE_{q(s,v,y|x)} [ \log \big( p(s,v,x,y) / q(s,v,y|x) \big) ]$, which is the expected ELBO $\bbE_{p^*(x)} [\clL_{p, \, q_{s,v,y|x}} (x,y)]$ for $q(s,v,y|x)$.
It thus drives $q(s,v,y|x)$ towards $p(s,v,y|x)$ and $p(x)$ towards $p^*(x)$.
It accounts for a regularization by fitting the input distribution $p^*(x)$ and align the inference model (predictor) with the generative model.

The target of $q(s,v,y|x)$, \ie $p(s,v,y|x)$, adopts a factorization $p(s,v,y|x) = p(s,v|x) p(y|s)$ due to the graphical structure (Fig.~\ref{fig:gen-sv}) of \ourmodel (\ie, $y \pperp (x,v) \mid s$).
The factor $p(y|s)$ is known (the invariant causal mechanism to generate $y$ in \ourmodel), so we only need to employ an inference model $q(s,v|x)$ for the intractable factor $p(s,v|x)$, so $q(s,v,y|x) = q(s,v|x) p(y|s)$.
Using this relation, we can reformulate \eqref{eqn:elbo-interp} as:
\begin{align}
  & \bbE_{p^*(x,y)} [\clL_{p, \, q_{s,v|x,y} = \cdots(q_{s,v|x}, p_{y|s})} (x,y)] \\
  ={} & \bbE_{p^*(x,y)} [\log q(y|x)] + \bbE_{p^*(x)} \bigg[ \int q(s,v|x) p(y|s) \frac{p^*(y|x)}{q(y|x)} \log \frac{p(s,v,x)}{q(s,v|x)} \dd s \ud v \ud y \bigg] \\
  ={} & \bbE_{p^*(x,y)} [\log q(y|x)] + \bbE_{p^*(x)} \bigg[ \int \frac{p^*(y|x)}{q(y|x)} \bigg( \int q(s,v|x) p(y|s) \log \frac{p(s,v,x)}{q(s,v|x)} \dd s \ud v \bigg) \ud y \bigg] \\
  ={} & \bbE_{p^*(x,y)} [\log q(y|x)] + \bbE_{p^*(x,y)} \bigg[ \frac{1}{q(y|x)} \bbE_{q(s,v|x)} \Big[ p(y|s) \log \frac{p(s,v,x)}{q(s,v|x)} \Big] \bigg],
  \label{eqn:elbo-src-deduc}
\end{align}
which is \eqref{eqn:elbo-src}.
We used the shorthand ``$q_{s,v|x,y} = \cdots(q_{s,v|x}, p_{y|s})$'' for the substitution for $q(s,v|x,y)$ using $q(s,v|x)$ and $p(y|s)$.
With this form of $q(s,v,y|x) = q(s,v|x) p(y|s)$, we have $q(y|x) = \bbE_{q(s,v|x)}[p(y|s)]$ which can also be estimated and optimized using reparameterization.
For prediction, we can sample from the approximation $q(y|x)$ instead of the intractable $p(y|x)$.
This can be done by ancestral sampling: first sample $(s,v)$ from $q(s,v|x)$, and then use the sampled $s$ to sample $y$ from $p(y|s)$.

\subsubsection{Variational EM for learning \ourmodel with test-domain inference model
(Learning \ourmodel{}-ind and \ourmodel{}-DA on the training domain).} \label{supp:meth-obj-test}

See the main text in Sec.~\ref{sec:meth-ood} and Sec.~\ref{sec:meth-da} for motivations and the basic idea of the methods.
Methods for \ourmodel{}-ind and \ourmodel{}-DA are similar, so we mainly show the detailed derivation for \ourmodel{}-ind.

Since the prior is the only difference between $p(s,v,x,y)$ and $\pind(s,v,x,y)$, we have $\frac{p(s,v,x,y)}{\pind(s,v,x,y)} = \frac{p(s,v)}{\pind(s,v)}$.
So $p(s,v,y|x) = \frac{p(s,v)}{\pind(s,v)} \frac{\pind(x)}{p(x)} \pind(s,v,y|x)$.
As explained, inference models now only need to approximate the posterior $(s,v) \mid x$.
Since $p(s,v,y|x) = p(s,v|x) p(y|s)$ and $\pind(s,v,y|x) = \pind(s,v|x) p(y|s)$ share the same $p(y|s)$ factor, we have $p(s,v|x) = \frac{p(s,v)}{\pind(s,v)} \frac{\pind(x)}{p(x)} \pind(s,v|x)$.
The variational distributions $q(s,v|x)$ and $\qind(s,v|x)$ target $p(s,v|x)$ and $\pind(s,v|x)$ respectively, so we can express the former with the latter:
\begin{align} \label{eqn:qind-subs}
  q(s,v|x) = \frac{p(s,v)}{\pind(s,v)} \frac{\pind(x)}{p(x)} \qind(s,v|x).
\end{align}
Once $\qind(s,v|x)$ achieves its goal, such represented $q(s,v|x)$ also does so.
So we only need to construct an inference model for $\qind(s,v|x)$ and optimize it.
With this representation, we have:
\begin{align}
  q(y|x) ={} & \bbE_{q(s,v|x)} [p(y|s)] = \bbE_{\qind(s,v|x)} \bigg[ \frac{p(s,v)}{\pind(s,v)} \frac{\pind(x)}{p(x)} p(y|s) \bigg]
  = \frac{\pind(x)}{p(x)} \bbE_{\qind(s,v|x)} \bigg[ \frac{p(s,v)}{\pind(s,v)} p(y|s) \bigg] \\
  ={} & \frac{\pind(x)}{p(x)} \pi(y|x),
  \label{eqn:q-and-pi}
\end{align}
where $\pi(y|x) := \bbE_{\qind(s,v|x)} \big[ \frac{p(s,v)}{\pind(s,v)} p(y|s) \big]$ as in the main text, which can be estimated and optimized using the reparameterization of $\qind(s,v|x)$.
From \eqref{eqn:elbo-src}, the expected ELBO training objective can be reformulated as:
\begin{align}
  & \bbE_{p^*(x,y)} [\clL_{p, \, q_{s,v|x,y} = \cdots(\qind_{s,v|x}, p)} (x,y)] \\
  ={} & \bbE_{p^*(x,y)} \bigg[ \log q(y|x) + \frac{1}{q(y|x)} \bbE_{q(s,v|x)} \Big[ p(y|s) \log \frac{p(s,v,x)}{q(s,v|x)} \Big] \bigg] \\
  ={} & \bbE_{p^*(x,y)} \bigg[ \log \frac{\pind(x)}{p(x)} + \log \pi(y|x) \\
  & {}+ \frac{p(x)}{\pind(x)} \frac{1}{\pi(y|x)} \bbE_{\qind(s,v|x)} \Big[ \frac{p(s,v)}{\pind(s,v)} \frac{\pind(x)}{p(x)} p(y|s) \log \frac{p(s,v) p(x|s,v)}{\frac{p(s,v)}{\pind(s,v)} \frac{\pind(x)}{p(x)} \qind(s,v|x)} \Big] \bigg] \\
  ={} & \bbE_{p^*(x,y)} \bigg[ \log \frac{\pind(x)}{p(x)} + \log \pi(y|x) \\
  & {}+ \frac{1}{\pi(y|x)} \bbE_{\qind(s,v|x)} \Big[ \frac{p(s,v)}{\pind(s,v)} p(y|s) \Big( \log \frac{p(x)}{\pind(x)} + \log \frac{\pind(s,v) p(x|s,v)}{\qind(s,v|x)} \Big) \Big] \bigg] \\
  ={} & \bbE_{p^*(x,y)} \bigg[ \log \frac{\pind(x)}{p(x)} + \log \pi(y|x)
    + \frac{1}{\pi(y|x)} \bbE_{\qind(s,v|x)} \Big[ \frac{p(s,v)}{\pind(s,v)} p(y|s) \Big] \log \frac{p(x)}{\pind(x)} \\
  & {}+ \frac{1}{\pi(y|x)} \bbE_{\qind(s,v|x)} \Big[ \frac{p(s,v)}{\pind(s,v)} p(y|s) \log \frac{\pind(s,v) p(x|s,v)}{\qind(s,v|x)} \Big] \bigg] \\
  ={} & \bbE_{p^*(x,y)} \bigg[ \log \frac{\pind(x)}{p(x)} + \log \pi(y|x)
    + \frac{1}{\pi(y|x)} \pi(y|x) \log \frac{p(x)}{\pind(x)} \\
  & {}+ \frac{1}{\pi(y|x)} \bbE_{\qind(s,v|x)} \Big[ \frac{p(s,v)}{\pind(s,v)} p(y|s) \log \frac{\pind(s,v,x)}{\qind(s,v|x)} \Big] \bigg] \\
  ={} & \bbE_{p^*(x,y)} \bigg[ \log \pi(y|x) + \frac{1}{\pi(y|x)} \bbE_{\qind(s,v|x)} \Big[ \frac{p(s,v)}{\pind(s,v)} p(y|s) \log \frac{\pind(s,v,x)}{\qind(s,v|x)} \Big] \bigg],
  \label{eqn:elbo-src-ind-deduc}
\end{align}
where in the second-last equality we have used the definition of $\pi(y|x)$.
The shorthand ``$q_{s,v|x,y} = \cdots(\qind_{s,v|x}, p)$'' represents the substitution using $\qind(s,v|x)$ and $p = \lrangle{p(s,v), p(x|s,v), p(y|s)}$ for $q(s,v|x,y) = q(s,v|x) p(y|s) / \int q(s,v|x) p(y|s) \dd s \ud v$ where $q(s,v|x)$ is determined by $\qind(s,v|x)$ and $p$ via \eqref{eqn:qind-subs} (recall that $\pind(s,v)$ is determined by $p(s,v)$, so $\pind(x)$ is also determined by $p(s,v)$ and $p(x|s,v)$).
This \eqref{eqn:elbo-src-ind-deduc} gives \eqref{eqn:elbo-src-ind} for \ourmodel{}-ind.
Note that $\pi(y|x)$ is not used in prediction, so there is no need to sample from it.
Prediction is done by ancestral sampling from $\qind(y|x)$, that is to first sample from $\qind(s,v|x)$ and then from $p(y|s)$.
Using this reformulation, we can train a \ourmodel with independent prior even on data that manifests a correlated prior.

For \ourmodel{}-DA, we only need to replace the independent prior $\pind(s,v)$ hypothesized for the test domain with the standalone prior model $\ppt(s,v)$ dedicated to learning the test-domain prior,
and re-denote the test-domain inference model $\qind(s,v|x)$ with $\qqt(s,v|x)$.
By doing so, \eqref{eqn:elbo-src-ind-deduc} gives \eqref{eqn:elbo-src-qt}, \ie the objective for \ourmodel{}-DA on the training domain.
For numerical stability, we employ the \texttt{log-sum-exp} trick to estimate the expectations and compute the gradients.

\subsubsection{Methods for \ourmodelablated for ablation study.} \label{supp:meth-obj-svae}
The conclusions and methods can also be applied to general latent-variable generative models for supervised learning, by replacing $(s,v)$ with their latent variables.
Particularly, the method also applies to the counterpart of \ourmodel in the ablation study experiment,
which does not distinguish the two latent factors $s$ and $v$ and treats them as a united latent variable $z = (s,v)$.
We thus call it \textbf{\ourmodelablated}.
The essential difference from \ourmodel is that \ourmodelablated keeps the $v \to y$ arrow, which is unlikely a causal relation as we argued in Sec.~\ref{sec:model}, item~\bffor.
Formally, a \ourmodelablated model is defined as the tuple $p := \lrangle{p(z), p(x|z), p(y|z)}$, and the corresponding inference model is in the form $q(z|x)$.

Following a similar derivation of \eqref{eqn:elbo-src-deduc}, we have the objective for fitting training-domain data:
\begin{align}
  & \bbE_{p^*(x,y)} [\clL_{p, \, q_{z|x,y} = \cdots(q_{z|x}, p_{y|z})} (x,y)] \\
  ={} & \bbE_{p^*(x,y)} [\log q(y|x)] + \bbE_{p^*(x,y)} \bigg[ \frac{1}{q(y|x)} \bbE_{q(z|x)} \Big[ p(y|z) \log \frac{p(z,x)}{q(z|x)} \Big] \bigg],
  \label{eqn:elbo-src-svae}
\end{align}
where $q(y|x) = \bbE_{q(z|x)} [p(y|z)]$.
The shorthand ``$q_{z|x,y} = \cdots(q_{z|x}, p_{y|z})$'' is similarly for the substitution $q(z|x,y) = q(z|x) p(y|z) / \int q(z|x) p(y|z) \dd z$ using $q(z|x)$ and $p(y|z)$.

As \ourmodelablated does not consider the distinction between $s$ and $v$, there is no \ourmodelablated{}-ind version.
The \ourmodelablated{}-DA version for domain adaptation is possible by using a standalone prior model $\ppt(z)$ for the test domain, which is learned by optimizing the corresponding ELBO objective similar to \eqref{eqn:elbo-tgt}:
\begin{align}
  \max_{\ppt, \, \qqt_{z|x}} \bbE_{\ppt^*(x)}  [\clL_{\ppt, \, \qqt_{z|x}} (x)],
  \text{where } \clL_{\ppt, \, \qqt_{z|x}} \!(x)
  = \bbE_{\qqt(z|x)} \! \Big[ \! \log \frac{\ppt(z) p(x|z)}{\qqt(z|x)} \Big].
  \label{eqn:elbo-tgt-svae}
\end{align}
To fit training-domain data using the test-domain inference model $\qqt(z|x)$, following a similar derivation of \eqref{eqn:elbo-src-ind-deduc}, we have the objective on the training domain for \ourmodel{}-DA:
\begin{align}
  \max_{p, \, \qqt_{z|x}} \bbE_{p^*\!(x,y)} \Big[
    \log \pi(y|x) + \frac{1}{\pi(y|x)} \bbE_{\qqt(z|x)}
    \Big[ \frac{p(z)}{\ppt(z)} p(y|z) \log \frac{\ppt(z) p(x|z)}{\qqt(z|x)} \Big] \Big],
  \label{eqn:elbo-src-qt-svae}
\end{align}
where $\pi(y|x) := \bbE_{\qqt(z|x)} \big[ \frac{p(z)}{\ppt(z)} p(y|z) \big]$.

\subsection{Instantiating the Inference Model} \label{supp:meth-instant}

Although motivated from learning a generative model, the method 
can be implemented using a general discriminative model (with hidden nodes) with causal behavior.
By parsing some of the hidden nodes as $s$ and some others as $v$, a discriminative model could formalize a distribution $q(s,v,y|x)$, which implements the inference model and the generative mechanism $p(y|s)$. 
The parsing mode is shown in Fig.~\ref{fig:discr-sv}, which is based on the following consideration.

\begin{wrapfigure}{r}{.324\textwidth}
  \centering
  \vspace{-10pt}
  \hspace{-16pt}\includegraphics[width=.160\textwidth]{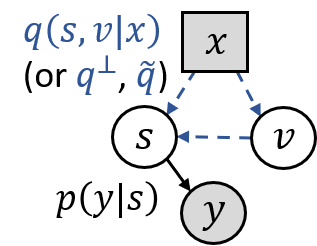}
  \vspace{-4pt}
  \caption{Parsing a general discriminative model as an inference model for \ourmodel.
    The black solid arrow constructs $p(y|s)$ in the generative model, and the blue dashed arrows (representing computational but not causal directions) construct $q(s,v|x)$ (or $\qind(s,v|x)$ or $\qqt(s,v|x)$) as the inference model.
  }
  \vspace{-16pt}
  \label{fig:discr-sv}
\end{wrapfigure}

\bfone 
The graphical structure of \ourmodel in Fig.~\ref{fig:gen-sv} indicates that $(v,x) \pperp y \mid s$, so the hidden nodes for $s$ should isolate $y$ from $v$ and $x$.
The model then factorizes the distribution as $q(s,v,y|x) = q(s,v|x) q(y|s)$, and since the inference and generative models share the distribution on $y|s$ (see the main text for explanation), we can thus use the component $q(y|s)$ given by the discriminative model to implement the generative mechanism $p(y|s)$.

\bftwo The graphical structure in Fig.~\ref{fig:gen-sv} also indicates that $s \notpperp v \mid x$ due to the v-structure (collider) at $x$ (``explain away'').
The component $q(s,v|x)$ should embody this dependence, so the hidden nodes chosen as $v$ should have an effect on those as $s$.
Note that the arrows in Fig.~\ref{fig:discr-sv} represent computation directions but not causal directions.
We orient the computation direction $v \to s$ since all hidden nodes in a discriminative model eventually contribute to computing $y$.

After parsing, the discriminative model gives a mapping $(s,v) = \eta(x)$.
We implement the distribution by\footnote{
  Other approaches to introducing randomness are also possible, such as employing stochasticity on the parameters/weights as in Bayesian neural networks~\citep{neal1995bayesian}, or using dropout~\citep{srivastava2014dropout, gal2016dropout}.
  Here we adopt this simple treatment to highlight the main contribution.
} $q(s,v|x) = \clN(s,v | \eta(x), \Sigma_q)$.
For all the three cases of \ourmodel, \ourmodel{}-ind and \ourmodel{}-DA, only one inference model for $(s,v) \mid x$ is required.
The component $(s,v) \mid x$ of the discriminative model thus parameterizes $\qind(s,v|x)$ and $\qqt(s,v|x)$ for \ourmodel{}-ind and \ourmodel{}-DA.
The expectations in all objectives (except for expectations over $p^*$ which are estimated by averaging over data) are all under the respective $(s,v) \mid x$.
They can be estimated using $\eta(x)$ by the reparameterization trick~\citep{kingma2014auto}, and the gradients can be back-propagated.

We need two more components beyond the discriminative model to implement the method, \ie the prior $p(s,v)$ and the generative mechanism $p(x|s,v)$.
The latter can be implemented using a generator or decoder architecture comparable to the component $q(s,v|x)$.
The prior can be commonly implemented using a multivariate Gaussian distribution,
$p(s,v) = \clN(\begin{psmallmatrix} s \\ v \end{psmallmatrix} | \begin{psmallmatrix} \mu_s \\ \mu_v \end{psmallmatrix},
\Sigma = \begin{psmallmatrix} \Sigma_{ss} & \Sigma_{sv} \\ \Sigma_{vs} & \Sigma_{vv} \end{psmallmatrix})$.
In implementation, the means $\mu_s$ and $\mu_v$ are fixed as zero vectors.
We parameterize $\Sigma$ via its Cholesky decomposition, $\Sigma = L L\trs$, where $L$ is a lower-triangular matrix with positive diagonals, which is in turn parameterized as
$L = \begin{psmallmatrix} L_{ss} & 0 \\ M_{vs} & L_{vv} \end{psmallmatrix}$ with smaller lower-triangular matrices $L_{ss}$ and $L_{vv}$ and any matrix $M_{vs}$.
Matrices $L_{ss}$ and $L_{vv}$ are parameterized by a summation of positive diagonals (guaranteed via an exponential map) and a lower-triangular (excluding diagonals) matrix.
Training \ourmodel{}-ind via \eqref{eqn:elbo-src-ind} requires estimating the ratio $\frac{p(s,v)}{\pind(s,v)} = \frac{p(s,v)}{p(s) p(v)} = \frac{p(v|s)}{p(v)}$,
where $p(v) = \clN(v | \mu_v, \Sigma_{vv})$ with $\Sigma_{vv} = L_{vv} L_{vv}\trs + M_{vs} M_{vs}\trs$,
and the conditional distribution $p(v|s)$ is given by $p(v|s) = \clN(v | \mu_{v|s}, \Sigma_{v|s})$
with $\mu_{v|s} = \mu_v + M_{vs} L_{ss}^{-1} (s - \mu_s)$, $\Sigma_{v|s} = L_{vv} L_{vv}\trs$ (see \eg, \citet{bishop2006pattern}).
This prior does not imply a causal direction between $s$ and $v$ (the linear Gaussian case of \citet{zhang2009identifiability}) thus well serves as a prior for \ourmodel.


\subsection{Model Selection Details} \label{supp:meth-mselect}

We use a validation set on the training domain for hyperparameter selection, to avoid overfitting due to the finiteness of training data samples, and to guarantee a good fit to the training-domain data distribution $p^*(x,y)$ as the semantic-identifiability theorem~\ref{thm:id} recommends.
We note that model selection in OOD prediction tasks is itself controversial and nontrivial, and it is still an active research direction~\citep{you2019towards, gulrajani2020search}.
It is argued that if a validation set from the test domain is available, the OOD setup that there is no supervision on the test domain is violated,
and then a better choice would be to incorporate it in learning as the semi-supervised adaptation task, instead of using it just for validation.
As our methods are designed to fit the training domain data and our theory shows guarantees under a good fit to the training-domain data distribution, model selection using a training-domain validation set is reasonable.
This does not contradict the trade-off between training- and test-domain accuracies shown in some prior works (\eg,~\citep{rothenhausler2018anchor}),
since they consider arbitrary distribution change, and using the same prediction rule in both domains, while we leverage causal invariance and develop a different prediction rule in the test domain.
In implementation, the training and validation sets are constructed by a 80\%-20\% random split of all training-domain data in each task.

More specifically, for hyperparameter selection, we align the scale of the supervision loss terms ($\bbE_{p^*(x,y)}[\log \pi(y|x)]$ for \ourmodel{}-ind/-DA and \ourmodelablated{}-DA, and the CE loss term for others) in the objectives of all methods, and tune the coefficients of the ELBOs to be their largest values that make the accuracy near 1 on the validation set, 
so that they wield the most power on the test domain while being faithful to explicit supervision.
The coefficients are preferred to be large to well fit $p^*(x)$ (and $\ppt^*(x)$ for domain adaptation) to gain generalizability in the test domain, while they should not affect training accuracy, which is required for a good fit to the training distribution.

For \ourmodel{}-ind/-DA and \ourmodelablated{}-DA, since their inference models target the test domain, it is not reasonable to evaluate validation accuracy directly using them in the form of $\bbE_{q^{\text{test}}(s,v|x)}[p(y|s)]$ ($q^{\text{test}}$ here refers to $\qind$ or $\qqt$).
Instead, \eqref{eqn:q-and-pi} shows that $\pi(y|x) := \bbE_{q^{\text{test}}(s,v|x)} [\frac{p(s,v)}{p^{\text{test}}(s,v)} p(y|s)]$ ($p^{\text{test}}(s,v)$ refers to $\pind(s,v)$ or $\ppt(s,v)$) is an unnormalized density of $q(y|x)$, the training-domain predictor.
So we evaluate $\pi(y|x)$ for every value of $y$ (which is not too large for classification tasks) and normalize them for the validation accuracy.

Compared with recent model selection methods~\citep{you2019towards, ye2021towards}, our method does not introduce additional hyperparameters or assumptions, and does not require multiple training domains.
These advantages stem from the explicit description of domain change of our \ourmodel model based on the causal invariance principle~\ref{prin:inv}.

\section{Experiment Details} \label{supp:expm}

\paragraph{The \ourmodelablated baseline for ablation study.}
To show the benefit of modeling $s$ and $v$ separately, we consider a counterpart of \ourmodel that does not separate its latent variable $z$ into $s$ and $v$; or equivalently, it does not remove the edge $v \to y$.
This means that all its latent variables in $z$ directly (\ie, not mediated by $s$) affect the output $y$.
We thus call it \ourmodelablated.
Detailed methods for OOD generalization (\ourmodelablated; note it does not have a ``-ind'' version) and domain adaptation (\ourmodelablated{}-DA) are introduced in Appx.~\ref{supp:meth-obj-svae}.
To align the model architecture for fair comparison, this means that the latent variable $z$ of \ourmodelablated can only be taken as the latent variable $s$ in \ourmodel (see Appx.~\ref{supp:meth-instant}, Fig.~\ref{fig:discr-sv}).

\paragraph{More about the baselines.}
The \ourmodelablated{}(-DA) baselines are implemented in our codebase along with the proposed \ourmodel{}(-ind/-DA) methods.
The CNBB method~\citep{he2019towards} as an OOD generalization baseline is also implemented, based on the description in the paper.
For domain adaptation baseline methods DANN~\citep{ganin2016domain}, DAN~\citep{long2015learning}, CDAN~\citep{long2018conditional} and MDD~\citep{zhang2019bridging},
we use their implementation in the \texttt{dalib} package\footnote{\url{https://github.com/thuml/Transfer-Learning-Library}}~\citep{dalib}.
The BNM method~\citep{cui2020towards} is integrated into our codebase based on its official implementation\footnote{\url{https://github.com/cuishuhao/BNM}}.
Results of CE, DANN, DAN and CDAN are taken from~\citep{long2018conditional} for the ImageCLEF-DA dataset and from~\citep{gulrajani2020search} except DAN for the PACS and VLCS datasets.
All methods share the same optimization setup.

Note that we do not consider domain generalization baselines (\eg, invariant risk minimization~\citep{arjovsky2019invariant}) as they degenerate to the CE baseline (\ie, the standard supervised learning method, or empirical risk minimization) when given only one training domain.

\paragraph{Computation infrastructure.}
Each run of the experiment is on a single Tesla P100 GPU.
All the experiments are implemented in PyTorch~\citep{paszke2019pytorch}.

\paragraph{More analysis on the results.}
Complete results including the MDD, \ourmodelablated and \ourmodelablated{}-DA baselines, as well as the VLCS~\citep{fang2013unbiased} dataset,
are shown in Table~\ref{tab:res-oodgen} for OOD generalization and in Table~\ref{tab:res-da} for domain adaptation.
The complete results support the same conclusions in the main text.

In addition, for the \textbf{ablation study}, we observe that our \ourmodel methods outperform \ourmodelablated methods in all tasks,
demonstrating the benefit of modeling the semantic and variation factors separately.
Also, \ourmodelablated methods usually have a larger variance, possibly due to the lack of semantic-identifiability so the learned representation gets misled by the variation factor more or less from run to run.
On the other hand, \ourmodelablated methods still outperform existing methods most of the time, which are discriminative methods.
This shows the advantage of using a \emph{generative model}: the invariance of generative mechanisms (causal invariance) is more reliable.

From the domain adaptation results in Table~\ref{tab:res-da}, we note that the advantage of \ourmodel{}-DA on ImageCLEF-DA is not as significant as on other datasets (shifted-MNIST, PACS, VLCS);
existing methods CDAN and BNM achieve a comparable or sometimes better result than \ourmodel{}-DA on ImageCLEF-DA.
This reveals the \textbf{suitable problem} that our \ourmodel methods solve the best, as discussed in the main text.
We expand the analysis below.

Generally speaking, most domain adaptation methods are designed to extract prediction-informative features that are also common across domains,
but at the risk to end up with such a feature that leverages a spurious correlation and misleads prediction.
In contrast, our \ourmodel methods can be seen to filter out misleading candidates of such features,
but with the requirement for identifiability that the training domain shows a diverse $v$ for each $s$.
This requirement comes from the bounded prior condition in the identifiability theorem~\ref{thm:id}, or the intuition to reduce the risk of extreme cases (Thm.~\ref{thm:id} Remark~\bfone).

For the ImageCLEF-DA task, there is no severe spurious correlation, since the style factor as $v$ has no preference on a particular class in any domain.
So existing domain adaptation methods do not meet a serious problem.
But the task is hard for identifiability: for each value of a semantic factor, a single elementary training domain cannot show a diverse variation factor.
This weakens the power of \ourmodel{}-DA.
On other datasets (shifted-MNIST, PACS, VLCS), spurious correlation is stronger.
Shifted-MNIST is deliberately constructed to show a strong digit-position correlation in the training domain while the correlation disappears in test domains.
As for PACS and VLCS, whenever different domains have different class proportions, pooling them together introduces a class-style(domain) correlation, which does not hold in a test domain.
On the other hand, the training domain of shifted-MNIST shows a noisy position for each digit, and the pooled training domains of PACS and VLCS show a diverse style for each class.
So these datasets better satisfy the requirement of \ourmodel{}-DA meanwhile ameliorating spurious correlation is the key problem.
This makes the advantage of \ourmodel{}-DA more salient.

\begin{table*}[t]
  \centering
  \setlength{\tabcolsep}{4.0pt}
  \caption{Test accuracy (\%) for \textbf{OOD generalization} by various methods (ours in bold and line separated; \ourmodelablated baseline included) on \textbf{Shifted-MNIST} (top two rows), \textbf{ImageCLEF-DA} (mid-top four rows), \textbf{PACS} (mid-bottom four rows) and \textbf{VLCS} (bottom four rows) datasets.
    Results of CE are taken from~\citep{long2018conditional} for ImageCLEF-DA and from~\citep{gulrajani2020search} for PACS and VLCS.
    Averaged over 10 runs.
  }
  \label{tab:res-oodgen}
  \vspace{-4pt}
  \begin{tabular}{cc||ccc|cc}
    \toprule
    \multicolumn{2}{c||}{task} &
    CE & CNBB & \ourmodelablated & \textbf{\ourmodel} & \textbf{\ourmodel{}-ind} \\
    \midrule
    \multirow{2}{*}{\parbox[c]{30pt}{\raggedleft Shifted-MNIST}}
    & $\delta_0 = \delta_1 = 0$ &
    42.9\subpm{3.1} & 54.7\subpm{3.3} & 53.0\subpm{6.7} & 81.4\subpm{7.4} & \textbf{82.6\subpm{4.0}} \\
    & $\delta_0, \delta_1 \! \sim \! \clN (0, \! 2^2)$ &
    47.8\subpm{1.5} & 59.2\subpm{2.4} & 54.8\subpm{5.6} & 61.7\subpm{3.6} & \textbf{62.3\subpm{2.2}} \\
    \midrule
    \multirow{4}{*}{\parbox[c]{30pt}{\raggedleft Image CLEF-DA}}
    & \textbf{C}$\to$\textbf{P} &
    65.5\subpm{0.3} & 72.7\subpm{1.1} & 73.3\subpm{1.0} & 73.6\subpm{0.6} & \textbf{74.0\subpm{1.3}} \\
    & \textbf{P}$\to$\textbf{C} &
    91.2\subpm{0.3} & 91.7\subpm{0.2} & 91.6\subpm{0.9} & 92.3\subpm{0.4} & \textbf{92.7\subpm{0.2}} \\
    & \textbf{I}$\to$\textbf{P} &
    74.8\subpm{0.3} & 75.4\subpm{0.6} & 77.0\subpm{0.2} & 76.9\subpm{0.3} & \textbf{77.2\subpm{0.2}} \\
    & \textbf{P}$\to$\textbf{I} &
    83.9\subpm{0.1} & 88.7\subpm{0.5} & 90.4\subpm{0.3} & 90.4\subpm{0.3} & \textbf{90.9\subpm{0.2}} \\
    \midrule
    \multirow{4}{*}{\parbox[c]{30pt}{\raggedleft PACS}}
    & others$\to$\textbf{P} &
    \textbf{97.8\subpm{0.0}} & 96.9\subpm{0.2} & 97.7\subpm{0.3} & 97.7\subpm{0.2} & \textbf{97.8\subpm{0.2}} \\
    & others$\to$\textbf{A} &
    88.1\subpm{0.1} & 73.1\subpm{0.3} & 87.3\subpm{0.8} & \textbf{88.5\subpm{0.6}} & \textbf{88.6\subpm{0.6}} \\
    & others$\to$\textbf{C} &
    77.9\subpm{1.3} & 50.2\subpm{1.2} & 84.3\subpm{0.9} & 84.4\subpm{0.9} & \textbf{84.6\subpm{0.8}} \\
    & others$\to$\textbf{S} &
    79.1\subpm{0.9} & 43.3\subpm{1.2} & 80.6\subpm{1.4} & 80.7\subpm{1.0} & \textbf{81.1\subpm{1.2}} \\
    \midrule
    \multirow{4}{*}{\parbox[c]{30pt}{\raggedleft VLCS}}
    & others$\to$\textbf{V} &
    76.4\subpm{1.5} & 75.5\subpm{0.9} & 79.4\subpm{1.0} & 79.3\subpm{1.1} & \textbf{80.0\subpm{0.9}} \\
    & others$\to$\textbf{L} &
    63.3\subpm{0.9} & 61.1\subpm{1.2} & 69.6\subpm{0.8} & 69.6\subpm{0.5} & \textbf{70.1\subpm{0.8}} \\
    & others$\to$\textbf{C} &
    97.6\subpm{1.0} & 97.1\subpm{0.4} & 99.2\subpm{0.3} & \textbf{99.4\subpm{0.3}} & \textbf{99.5\subpm{0.2}} \\
    & others$\to$\textbf{S} &
    72.2\subpm{0.5} & 73.7\subpm{0.6} & 75.0\subpm{0.9} & 76.1\subpm{1.3} & \textbf{76.9\subpm{1.2}} \\
    \bottomrule
  \end{tabular}
%
  \bigskip
  \setlength{\tabcolsep}{3.0pt}
  \caption{Test accuracy (\%) for \textbf{domain adaptation} by various methods (ours in bold and line separated; BNM and \ourmodelablated{}-DA baselines included) on \textbf{Shifted-MNIST} (top two rows), \textbf{ImageCLEF-DA} (mid-top four rows), \textbf{PACS} (mid-bottom four rows) and \textbf{VLCS} (bottom four rows) datasets.
    Results of DANN, DAN and CDAN on ImageCLEF-DA are taken from~\citep{long2018conditional}, and results of DANN and CDAN on PACS and VLCS are taken from~\citep{gulrajani2020search}.
    Averaged over 10 runs.
  }
  \label{tab:res-da}
  \vspace{-4pt}
  \begin{tabular}{cc@{}||cccccc|c}
    \toprule
    \multicolumn{2}{c||}{task} &
    DANN & DAN & CDAN & MDD & BNM & \ourmodelablated{}-DA & \textbf{\ourmodel{}-DA} \\
    \midrule
    \multirow{2}{*}{\parbox[c]{24pt}{\raggedleft Shifted-MNIST}}
    & $\delta_0 = \delta_1= 0$ &
    40.9\subpm{3.0} & 40.4\subpm{2.0} & 41.0\subpm{0.5} & 41.9\subpm{0.8} & 40.8\subpm{1.0} & 78.0\subpm{27.2} & \textbf{97.6\subpm{4.0}} \\
    & $\delta_0, \! \delta_1 \! \sim \!\! \clN \!(0, \! 2^2) \,$ &
    46.2\subpm{0.7} & 45.6\subpm{0.7} & 46.3\subpm{0.6} & 45.8\subpm{0.3} & 45.7\subpm{1.0} & 68.1\subpm{17.4} & \textbf{72.0\subpm{9.2}} \\
    \midrule
    \multirow{4}{*}{\parbox[c]{30pt}{\raggedleft Image CLEF-DA}}
    & \textbf{C}$\to$\textbf{P} &
    74.3\subpm{0.5} & 69.2\subpm{0.4} & 74.5\subpm{0.3} & 74.1\subpm{0.7} & \textbf{75.2\subpm{1.4}} & 74.3\subpm{0.3} & \textbf{75.1\subpm{0.5}} \\
    & \textbf{P}$\to$\textbf{C} &
    91.5\subpm{0.6} & 89.8\subpm{0.4} & \textbf{93.5\subpm{0.4}} & 92.1\subpm{0.6} & \textbf{93.5\subpm{2.8}} & 92.7\subpm{0.4} & \textbf{93.4\subpm{0.3}} \\
    & \textbf{I}$\to$\textbf{P} &
    75.0\subpm{0.6} & 74.5\subpm{0.4} & 76.7\subpm{0.3} & 76.8\subpm{0.4} & 76.7\subpm{1.4} & 77.0\subpm{0.3} & \textbf{77.4\subpm{0.3}} \\
    & \textbf{P}$\to$\textbf{I} &
    86.0\subpm{0.3} & 82.2\subpm{0.2} & 90.6\subpm{0.3} & 90.2\subpm{1.1} & \textbf{91.0\subpm{0.8}} & 90.6\subpm{0.4} & \textbf{91.1\subpm{0.5}} \\
    \midrule
    \multirow{4}{*}{\parbox[c]{30pt}{\raggedleft PACS}}
    & others$\to$\textbf{P} &
    97.6\subpm{0.2} & 97.6\subpm{0.4} & 97.0\subpm{0.4} & 97.6\subpm{0.3} & 87.6\subpm{4.2} & 97.6\subpm{0.4} & \textbf{97.9\subpm{0.2}} \\
    & others$\to$\textbf{A} &
    85.9\subpm{0.5} & 84.5\subpm{1.2} & 84.0\subpm{0.9} & 88.1\subpm{0.8} & 86.4\subpm{0.4} & 88.0\subpm{0.8} & \textbf{88.8\subpm{0.7}} \\
    & others$\to$\textbf{C} &
    79.9\subpm{1.4} & 81.9\subpm{1.9} & 78.5\subpm{1.5} & 83.2\subpm{1.1} & 83.6\subpm{1.7} & \textbf{84.6\subpm{0.9}} & \textbf{84.7\subpm{0.8}} \\
    & others$\to$\textbf{S} &
    75.2\subpm{2.8} & 77.4\subpm{3.1} & 71.8\subpm{3.9} & 80.2\subpm{2.2} & 59.1\subpm{1.5} & 80.9\subpm{1.2} & \textbf{81.4\subpm{0.8}} \\
    \midrule
    \multirow{4}{*}{\parbox[c]{30pt}{\raggedleft VLCS}}
    & others$\to$\textbf{V} &
    78.3\subpm{0.3} & 74.6\subpm{0.8} & 76.9\subpm{0.2} & 79.0\subpm{1.1} & 70.0\subpm{2.5} & 79.1\subpm{1.4} & \textbf{81.1\subpm{0.8}} \\
    & others$\to$\textbf{L} &
    64.9\subpm{1.1} & 67.1\subpm{0.5} & 65.2\subpm{0.4} & 63.8\subpm{0.8} & 54.0\subpm{5.9} & 69.6\subpm{0.9} & \textbf{70.2\subpm{0.7}} \\
    & others$\to$\textbf{C} &
    98.5\subpm{0.2} & 98.5\subpm{0.6} & 97.5\subpm{0.1} & 99.3\subpm{0.3} & 96.5\subpm{5.1} & 99.3\subpm{0.3} & \textbf{99.5\subpm{0.2}} \\
    & others$\to$\textbf{S} &
    73.1\subpm{0.7} & 75.0\subpm{1.1} & 73.4\subpm{1.1} & 75.8\subpm{1.8} & 66.8\subpm{2.0} & 76.1\subpm{1.8} & \textbf{77.1\subpm{1.1}} \\
    \bottomrule
  \end{tabular}
  \vspace{-4pt}
\end{table*}

\subsection{Shifted-MNIST} \label{supp:expm-smnist}


\paragraph{Dataset.}
The dataset is based on the standard MNIST dataset\footnote{\url{http://yann.lecun.com/exdb/mnist/}}, where only images of ``0'' and ``1'' are collected.
The resulting training set has 5,923 (46.77\%) ``0''s and 6,742 (53.23\%) ``1''s (12,665 in total) and the test set has 980 (46.34\%) ``0''s and 1,135 (53.66\%) ``1''s (2,115 in total).
As described in the main text, we horizontally shift each ``0'' in the training data at random by $\delta_0$ pixels where $\delta_0 \sim \clN(-5, 1^2)$, and each ``1'' by $\delta_1 \sim \clN(5, 1^2)$ pixels.
We construct two test sets, where in the first one, each digit from the test set is not moved $\delta_0 = \delta_1 = 0$, and is horizontally shifted randomly by $\delta_0, \delta_1 \sim \clN(0, 2^2)$ pixels in the second.
All domains have balanced classes.

\paragraph{Setup and implementation details.}
For generative methods (\ie, \ourmodelablated{}(-DA) and our methods \ourmodel{}(-ind/-DA)), we use a multilayer perceptron (MLP) with 784(for $x$)-400-200(first 100 for $v$)-50(for $s$ or $z$)-1(for $y$) nodes in each layer for the inference model,
and use an MLP with 50(for $s$)-(100(for $v$)+100)-400-784(for $x$) nodes in each layer for the generative component (\ie, the mean function of the additive Gaussian $p(x|s,v)$).
The activation function in the MLPs is the sigmoid function, and the variables $s$ and $v$ are taken after the activation.
The expectation under $q(s,v|x)$ in ELBO is estimated by evaluating the function at the mode of the additive Gaussian with reparameterization.
For discriminative methods (\ie, CE, CNBB, DANN, DAN, CDAN, MDD, BNM), we use a larger MLP architecture with 784-600-300-75-1 nodes in each layer to compensate the additional parameters of the generative component in generative methods.

For all the methods, we use a mini-batch of size $128$ in each optimization step, and use the RMSprop optimizer~\citep{tieleman2012lecture}, with weight decay parameter $1\e{-5}$, and learning rate $1\e{-3}$ for OOD generalization and $3\e{-4}$ for domain adaptation.
These hyperparameters are chosen by running and validating using CE and DANN.
For generative methods, we take the additive Gaussian variance of the generative mechanism $p(x|s,v)$ 
as $0.03^2$.
The scale of the standard derivations of these additive Gaussian distributions are chosen small to meet the intense causal mechanism assumption in our theory.\footnote{
  Choosing small variances is also supported by a direct analysis of additive Gaussian VAEs~\citep{dai2019diagnosing} for well learning the data manifold.
}
For the Gaussian variances of $s$ and $v$ in $q(s,v|x)$, they are also outputs from the discriminative model through additional branches.
Each of these branches is a fully-connected layer forked from the last layer of $s$ or $v$, with a softplus activation to ensure positivity.
Their weights are learned via the same objectives.

\paragraph{Hyperparameter configurations.}
For both OOD generalization and domain adaptation tasks on the two test domains, we train the models for $100$ epochs (average runtime $10$ minutes) when all the methods converge in terms of loss and validation accuracy.
We align the scale of the supervision loss terms in the objectives of all methods, and scale the ELBO terms with the largest weight that makes training accuracy near 1 in OOD generalization.
We then fix the tuned ELBO weight and scale the weight of adaptation terms in a similar way for domain adaptation.
Other parameters are tuned similarly.
For generative methods (\ie, \ourmodelablated{}(-DA) and our methods \ourmodel{}(-ind/-DA)), the ELBO weight is $1\e{-4}$ selected from $\{1,3\}\e{\{-1,-2,\cdots,-6\}}$.
For domain adaptation methods, the adaptation weight is $1\e{-4}$ for DANN, $1\e{-8}$ for DAN, $1\e{-6}$ for CDAN, $1\e{-6}$ for MDD, $1\e{-7}$ for BNM, and $1\e{-4}$ for \ourmodelablated{}-DA and \ourmodel{}-DA, all selected from $1\e{\{-1,-2,\cdots,-8\}}$.
For CNBB, we use regularization coefficients $1\e{-4}$ and $3\e{-6}$ to regularize the sample weight and learned representation, and run $4$ inner gradient descent iterations with learning rate $1\e{-3}$ to optimize the sample weight.
These four parameters are selected from a grid search where the range of the parameters are: $\{1,3\}\e{\{-2,-3,-4\}}$, $\{1,3\}\e{\{-4,-5,-6\}}$, $\{4,8\}$, $1\e{\{-1,-2,-3\}}$.

\subsection{ImageCLEF-DA} \label{supp:expm-imageclef}

\paragraph{Dataset.}
ImageCLEF-DA\footnote{\url{http://imageclef.org/2014/adaptation}} is a standard benchmark dataset for the ImageCLEF 2014 domain adaptation challenge~\citep{imageclef2014}.
There are three domains in this dataset: \textbf{C}altech-256, \textbf{I}mageNet and \textbf{P}ascal~VOC~2012.
Each domain has 12 classes and 600 images. 
Each image is center-cropped to shape $(3,224,224)$ as $x$ (also for PACS and VLCS experiments).

\paragraph{Setup and implementation details.}
We adopt the same setup as in \citet{long2018conditional} \footnote{\url{https://github.com/thuml/CDAN}} for a common practice and fair comparison with existing results.
This means that we use the ResNet50 structure~\citep{he2016deep} pretrained on the ImageNet dataset as the backbone of the discriminative/inference model.
For \ourmodel{}(-ind/-DA), we select the first $128$ dimensions of the bottleneck layer (\ie, the layer that replaces the last fully-connected layer of the pretrained ResNet50; its output dimension is $1024$) as the variable $v$, 
and take $s$ as the $256$-dimensional output of the two-layer MLP (with $1024$ hidden nodes) built on the bottleneck layer.
Both $s$ and $v$ are taken before activation.
The logits for $y$ is produced by a linear layer built on $s$.

For generative methods (\ie, \ourmodelablated{}(-DA) and our methods \ourmodel{}(-ind/-DA)), we construct an image decoder/generator for the mean function of the additive Gaussian $p(x|s,v)$ that uses the DCGAN generator model~\citep{radford2015unsupervised} pretrained on the Cifar10 dataset as the backbone.
The pretrained DCGAN is taken from the PyTorch-GAN-Zoo\footnote{\url{https://github.com/facebookresearch/pytorch_GAN_zoo}}.
The generator connects to the DCGAN backbone by an MLP with 384(dimension of $(s,v)$)-128-120(input dimension of DCGAN) nodes in each layer, and generates images of desired size $(3,224,224)$ by appending to the output of DCGAN of size $(3,64,64)$ with an transposed convolution layer with kernel size 4, stride size 4, and padding size 16.
The expectation under $q(s,v|x)$ in ELBO is estimated by evaluating the function at the mean of the conditional Gaussian with reparameterization.

Following \citet{long2018conditional}, we use a mini-batch of size $n_B = 32$ in each optimization step, and adopt the SGD optimizer with Nesterov momentum parameter $0.9$, weight decay parameter $5\e{-4}$,
and a shrinking step size scheme $\varepsilon_i = \varepsilon_0 (1 + \alpha n_B i)^{-\beta}$ for optimization iteration $i$, with initial scale $\varepsilon_0 = 1\e{-3}$, per-datum coefficient\footnote{
  The coefficient $\alpha$ here is amortized onto each datum, so its value is different from that in \citet{long2018conditional} and a batch size $n_B$ is multiplied to the iteration number $i$.
} $\alpha = 6.25\e{-6}$, and shrinking exponent $\beta = 0.75$.
For the parameters of the backbone components, a $10$ times smaller learning rate is used.
For generative methods, the Gaussian variances of $s$ and $v$ in $q(s,v|x)$ are also outputs from the discriminative model through additional branches.
Each of these branches is a fully-connected layer forked from the last layer of $s$ or $v$, with a softplus activation to ensure positivity.
Their weights are learned via the same objectives.

\paragraph{Hyperparameter configurations.}
For all the four OOD prediction tasks, we train the models for $30$ epochs (average runtime $10$ minutes) when all the methods converge in terms of loss and validation accuracy.
For generative methods, the Gaussian variance of $p(x|s,v)$ is taken as $0.1$, which is searched within $\{1,3\}\e{\{-4,-2,-1,0,2,4\}}$.
The ELBO weight is $1\e{-7}$ for \ourmodelablated{}(-DA) and is $1\e{-8}$ for our \ourmodel{}(-ind/-DA), both selected from $1\e{\{-2,-4,-6\}} \cup \{1,3\}\e{\{-7,-8,-9,-10\}}$.
The adaptation weight is $1\e{-8}$ selected from $1\e{\{-2,-4,-6\}} \cup \{1,3\}\e{\{-7,-8,-9,-10\}}$ for both \ourmodelablated{}-DA and \ourmodel{}-DA,
$1\e{-2}$ selected from $1\e{\{-1,-2,-4,-6\}}$ for MDD,
and $1.0$ selected from $1\e{\{1,0,-1,-2,-4\}}$ for BNM.
Results of other domain adaptation baselines DANN, DAN and CDAN and the results of CE are taken from~\citep{long2018conditional} under the same setting.
For CNBB, we use regularization coefficients $1\e{-6}$ and $3\e{-6}$ to regularize the sample weight and learned representation, and run $4$ inner gradient descent iterations with learning rate $1\e{-4}$ to optimize the sample weight.
These four parameters are selected from a grid search where the range of the parameters are: $1\e{\{-4,-5,-6,-7\}} \cup \{3\e{-6}\}$, $\{1,3\}\e{\{-5,-6,-7\}}$, $\{4\}$, $1\e{\{-2,-3,-4,-5\}}$.

\subsection{PACS} \label{supp:expm-pacs}

\paragraph{Dataset.}
The PACS dataset~\citep{li2017deeper} has 7 classes.
It is named after its four domains: \textbf{P}hoto, \textbf{A}rt, \textbf{C}artoon, \textbf{S}ketch;
each contains images of a certain style.
It contains 9,991 images in total.
We use the dataset via the open-source \texttt{domainbed} repository\footnote{\url{https://github.com/facebookresearch/DomainBed}}~\citep{gulrajani2020search}.

\paragraph{Setup and implementation details.}
We adopt the same setup as in \citet{gulrajani2020search} for a common practice and fair comparison with existing results.
This means for each domain as the test domain, the single training domain is constructed by merging/pooling the other three domains.
This is done by merging the three mini-batches of size $32$ from each of the three domains for optimization.
The Adam optimizer~\citep{kingma2014adam} with learning rate $5\e{-5}$ 
is adopted.
Data augmentation is conducted by random flip and crop, gray-scaling and color-jitter (\ie, randomly changing brightness, contrast, saturation and hue).
Other setups are basically the same as in the ImageCLEF-DA experiment, except that the layer for variable $s$ has $512$ nodes, and that the backbone components use the same learning rate (\ie, not multiplied by $0.1$).

\paragraph{Hyperparameter configurations.}
For all methods we train for $40$ epochs (average runtime $30$ minutes) when they all converge in terms of loss and validation accuracy.
For all generative methods (\ie, \ourmodelablated{}(-DA) and our methods \ourmodel{}(-ind/-DA)), the Gaussian variance of $p(x|s,v)$ is taken as $0.3$.
The ELBO weight is $1\e{-7}$ for \ourmodelablated, \ourmodel and \ourmodel{}-ind, and is $1\e{-8}$ for \ourmodelablated{}-DA and \ourmodel{}-DA, both selected from $1\e{\{0,-2,-4,-5,-6,-7,-8,-9\}}$.
The adaptation weight is $1\e{-8}$ selected from $1\e{\{0,-2,-4,-6,-7,-8,-9\}}$ for \ourmodelablated{}-DA and \ourmodel{}-DA, $1\e{-2}$ selected from $1\e{\{0,-1,-2,-3,-4,-6\}}$ for DAN, and is the same as in the ImageCLEF-DA experiment for MDD and BNM.
Results of other domain adaptation baselines DANN and CDAN and the results of CE are taken from~\citep{gulrajani2020search} under the same setting.
For CNBB, the hyperparameters are the same as in the ImageCLEF-DA experiment, except the regularization coefficients for sample weights is $1\e{-4}$.
These hyperparameters are selected from the same range as used in the ImageCLEF-DA experiment.

\paragraph{Results using single training domains.}
We also conducted an experiment on PACS with single training domains, similar to the setup on ImageCLEF-DA.
The results are presented in Table~\ref{tab:res-single-traindom}.
We see that the advantage of our methods is not as significant as in the standard pooled training domain case.
This agrees with the discussion in the ``dataset analysis'' in the main paper:
our methods are more powerful in handling a misleading spurious $s$-$v$ correlation but which needs to be diverse/stochastic enough to allow identification,
following the intuition on the identifiability (Thm.~\ref{thm:id} Remark~\bfone).

\begin{table*}[t]
  \centering
  \setlength{\tabcolsep}{4.0pt}
  \caption{Test accuracy (\%) for \textbf{OOD generalization} (middle 4 columns) and \textbf{domain adaptation} (right 3 columns) by various methods (ours in bold and line separated) on \textbf{PACS} with \textbf{single training domains}.
    Averaged over 10 runs.
  }
  \label{tab:res-single-traindom}
  \vspace{-4pt}
  \begin{tabular}{c@{}c||cc|cc||cc|c}
    \toprule
    \multicolumn{2}{c||}{task} &
    CE & \ourmodelablated & \textbf{\ourmodel} & \textbf{\ourmodel{}-ind} &
    DAN & \ourmodelablated{}-DA & \textbf{\ourmodel{}-DA} \\
    \midrule
    \multirow{4}{*}{\parbox[c]{30pt}{\raggedleft PACS}}
    & \textbf{C}$\to$\textbf{A} &
    \textbf{78.9\subpm{1.1}} & 78.2\subpm{1.8} & 78.4\subpm{1.2} & \textbf{78.9\subpm{1.3}} &
    \textbf{80.9\subpm{1.2}} & 79.1\subpm{0.7} & 79.1\subpm{0.8} \\
    & \textbf{P}$\to$\textbf{A} &
    73.1\subpm{1.9} & 73.4\subpm{1.9} & \textbf{73.5\subpm{0.9}} & 73.4\subpm{1.5} &
    \textbf{76.6\subpm{2.6}} & 73.8\subpm{0.7} & 75.0\subpm{0.7} \\
    & \textbf{S}$\to$\textbf{A} &
    64.2\subpm{2.8} & 63.4\subpm{1.6} & 63.7\subpm{1.7} & \textbf{65.4\subpm{2.1}} &
    62.4\subpm{1.8} & 64.7\subpm{2.2} & \textbf{65.7\subpm{2.0}} \\
    \cmidrule(lr){2-9}
    & others$\to$\textbf{A} &
    88.1\subpm{0.1} & 87.3\subpm{0.8} & \textbf{88.5\subpm{0.6}} & \textbf{88.6\subpm{0.6}} &
    84.5\subpm{1.2} & 88.0\subpm{0.8} & \textbf{88.8\subpm{0.7}} \\
    \bottomrule
  \end{tabular}
  \vspace{-4pt}
\end{table*}

\subsection{VLCS} \label{supp:expm-vlcs}

The VLCS dataset~\citep{fang2013unbiased} has 5 classes.
It is also named after its four domains: \textbf{V}OC2007, \textbf{L}abelMe, \textbf{C}altech101, \textbf{S}UN09;
each is an image dataset collected in a certain way.
It contains 10,729 images in total.
We use the dataset also via the \texttt{domainbed} repository.
Setup, implementation details and hyperparameters are the same as in the PACS experiment.
Results are shown at the last four rows in Table~\ref{tab:res-oodgen} for OOD generalization and in Table~\ref{tab:res-da} for domain adaptation.

\subsection{Visualization of the Learned Representation} \label{supp:expm-viz}

To better understand how our methods work, we compare the visualization of the learned model by our methods with that by the corresponding baselines.
Visualization is done by the \emph{Local Interpretable Model-agnostic Explanation} (LIME) method~\citep{ribeiro2016why}\footnote{We use the official codebase at \url{https://github.com/marcotcr/lime-experiments}.},
which uses an interpretable model, \eg a linear model, to approximate the target model locally at the query image.
The learned weight of the linear model then reflects the importance of the components/dimensions of the input, \ie pixels in the image, which can be visualized after binarization as focused regions on the image.
This gives a hint on the learned representation by the model for making prediction.

The visualization results are shown in Fig.~\ref{fig:viz}.
We see that in each case, the focused regions of our methods (\ourmodel{}-ind and \ourmodel{}-DA) are more relevant to the semantic of the image, and the boundary of the region reflects the characterizing shape of the object.
In contrast, the baselines also involve much background regions.
This result shows our \ourmodel methods indeed better learn a causal semantic factor for prediction, which supports the motivation to introduce the \ourmodel model, verifies the theory, and explains the better robustness for OOD prediction.

\newcommand{\elemfigwidth}{1.8cm}
\begin{SCtable}[][t]
  \centering
  \setlength{\tabcolsep}{0.6pt}
  \begin{tabular}{c@{\hspace{8pt}}ccccc}
    \toprule
    CE &
    \begin{minipage}{\elemfigwidth}
      \centering
      \raisebox{-.5\height}{\includegraphics[width=\elemfigwidth]{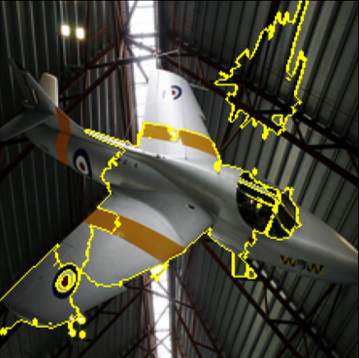}}
    \end{minipage}
    &
    \begin{minipage}{\elemfigwidth}
      \centering
      \raisebox{-.5\height}{\includegraphics[width=\elemfigwidth]{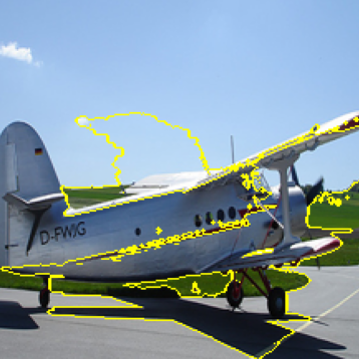}}
    \end{minipage}
    &
    \begin{minipage}{\elemfigwidth}
      \centering
      \raisebox{-.5\height}{\includegraphics[width=\elemfigwidth]{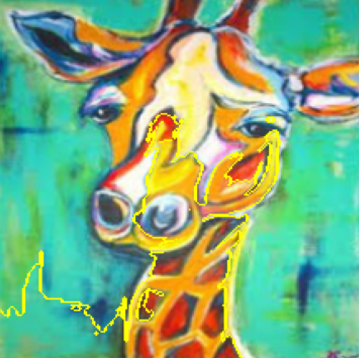}}
    \end{minipage}
    &
    \begin{minipage}{\elemfigwidth}
      \centering
      \raisebox{-.5\height}{\includegraphics[width=\elemfigwidth]{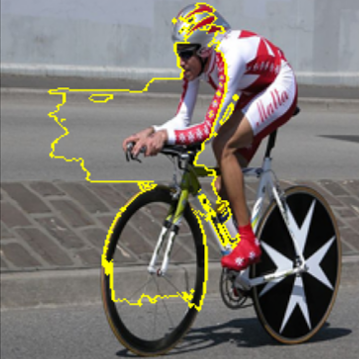}}
    \end{minipage}
    &
    \begin{minipage}{\elemfigwidth}
      \centering
      \raisebox{-.5\height}{\includegraphics[width=\elemfigwidth]{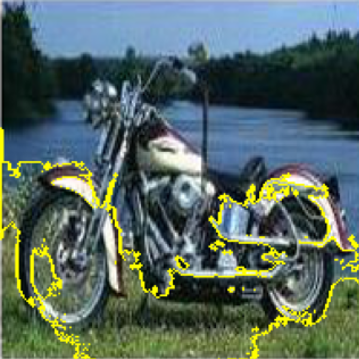}}
    \end{minipage}
    \\ \addlinespace[1pt]
    \textbf{\ourmodel{}-ind} &
    \begin{minipage}{\elemfigwidth}
      \centering
      \raisebox{-.5\height}{\includegraphics[width=\elemfigwidth]{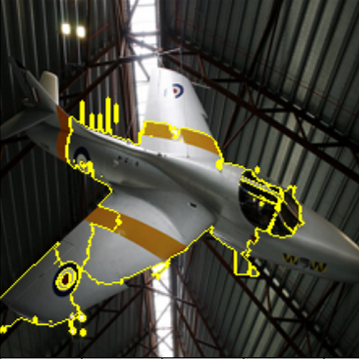}}
    \end{minipage}
    &
    \begin{minipage}{\elemfigwidth}
      \centering
      \raisebox{-.5\height}{\includegraphics[width=\elemfigwidth]{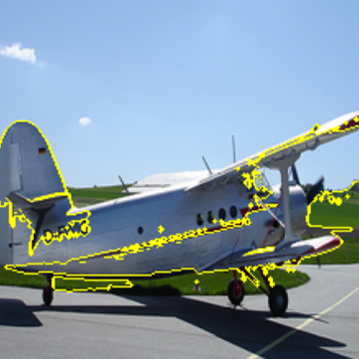}}
    \end{minipage}
    &
    \begin{minipage}{\elemfigwidth}
      \centering
      \raisebox{-.5\height}{\includegraphics[width=\elemfigwidth]{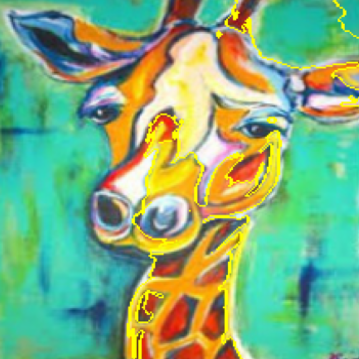}}
    \end{minipage}
    &
    \begin{minipage}{\elemfigwidth}
      \centering
      \raisebox{-.5\height}{\includegraphics[width=\elemfigwidth]{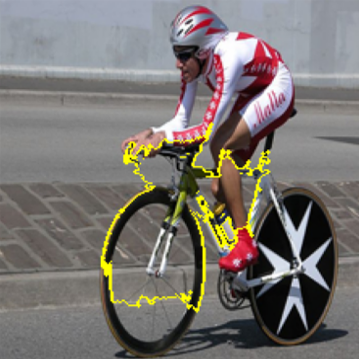}}
    \end{minipage}
    &
    \begin{minipage}{\elemfigwidth}
      \centering
      \raisebox{-.5\height}{\includegraphics[width=\elemfigwidth]{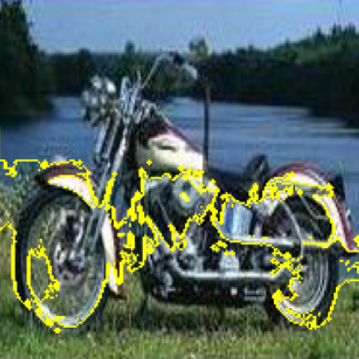}}
    \end{minipage}
    \\ \midrule \addlinespace[1.5pt]
    MDD &
    \begin{minipage}{\elemfigwidth}
      \centering
      \raisebox{-.5\height}{\includegraphics[width=\elemfigwidth]{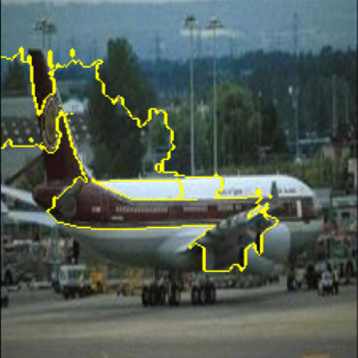}}
    \end{minipage}
    &
    \begin{minipage}{\elemfigwidth}
      \centering
      \raisebox{-.5\height}{\includegraphics[width=\elemfigwidth]{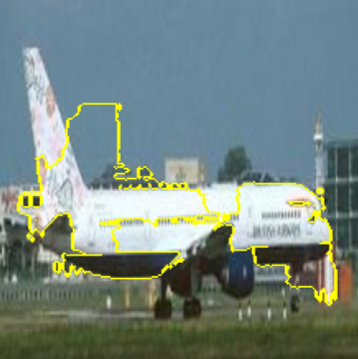}}
    \end{minipage}
    &
    \begin{minipage}{\elemfigwidth}
      \centering
      \raisebox{-.5\height}{\includegraphics[width=\elemfigwidth]{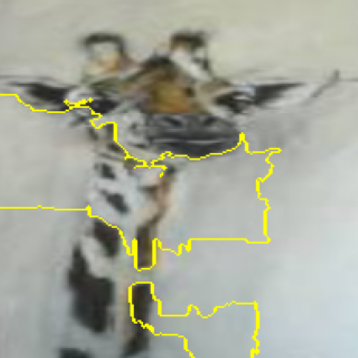}}
    \end{minipage}
    &
    \begin{minipage}{\elemfigwidth}
      \centering
      \raisebox{-.5\height}{\includegraphics[width=\elemfigwidth]{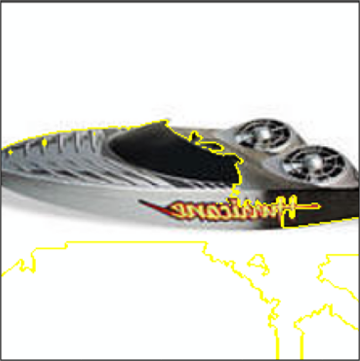}}
    \end{minipage}
    &
    \begin{minipage}{\elemfigwidth}
      \centering
      \raisebox{-.5\height}{\includegraphics[width=\elemfigwidth]{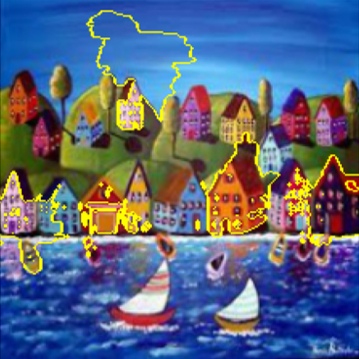}}
    \end{minipage}
    \\ \addlinespace[1pt]
    \textbf{\ourmodel{}-DA} &
    \begin{minipage}{\elemfigwidth}
      \centering
      \raisebox{-.5\height}{\includegraphics[width=\elemfigwidth]{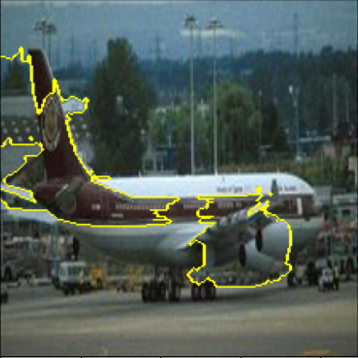}}
    \end{minipage}
    &
    \begin{minipage}{\elemfigwidth}
      \centering
      \raisebox{-.5\height}{\includegraphics[width=\elemfigwidth]{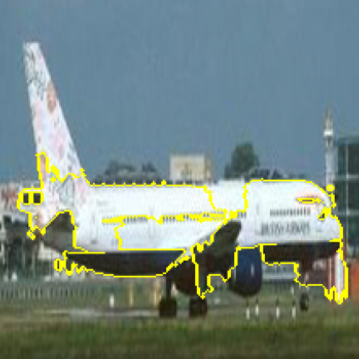}}
    \end{minipage}
    &
    \begin{minipage}{\elemfigwidth}
      \centering
      \raisebox{-.5\height}{\includegraphics[width=\elemfigwidth]{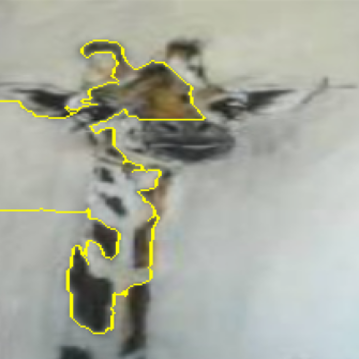}}
    \end{minipage}
    &
    \begin{minipage}{\elemfigwidth}
      \centering
      \raisebox{-.5\height}{\includegraphics[width=\elemfigwidth]{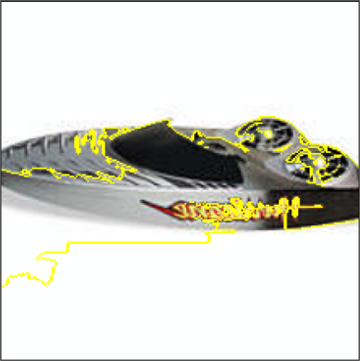}}
    \end{minipage}
    &
    \begin{minipage}{\elemfigwidth}
      \centering
      \raisebox{-.5\height}{\includegraphics[width=\elemfigwidth]{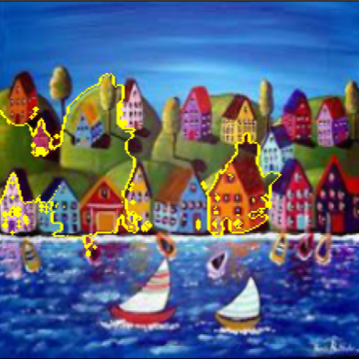}}
    \end{minipage}
    \\ \bottomrule
  \end{tabular}
  \captionsetup[table]{name=Figure,width=60pt,skip=-2pt}
  \captionof{table}{Visualization (via LIME~\citep{ribeiro2016why}) of the learned representation by various methods (ours in bold).
    The top two rows are for OOD generalization and the bottom two rows are for domain adaptation.
  }
  \label{fig:viz}
  \vspace{-8pt}
\end{SCtable}

\end{document}